\theoremstyle{definition}
\newtheorem{theorem}{Theorem}[section]
\newtheorem{assumption}[theorem]{Assumption}
\newtheorem{proposition}[theorem]{Proposition}
\newtheorem{lemma}[theorem]{Lemma}
\theoremstyle{remark}
\newtheorem{remark}{Remark}[section]
\def\E{\mathbb{E}}
\DeclareMathOperator*{\argmin}{arg\,min}
\begin{document}
%
\title{Regularized Weighted Chebyshev Approximations for Support Estimation}
%
%
%

\author{I~(Eli)~Chien,~\IEEEmembership{}
        and~Olgica~Milenkovic ~\IEEEmembership{}
        
\thanks{I~(Eli)~Chien and Olgica~Milenkovic are with ECE Department at University of Illinois Urbana-Champaign.}}
\maketitle

\begin{abstract}
We introduce a new method for estimating the support size of an unknown distribution which provably matches the performance bounds of the state-of-the-art techniques in the area and outperforms them in practice. In particular, we present both theoretical and computer simulation results that illustrate the utility and performance improvements of our method. The theoretical analysis relies on introducing a new weighted Chebyshev polynomial approximation method, jointly optimizing the bias and variance components of the risk, and combining the weighted minmax polynomial approximation method with discretized semi-infinite programming solvers. Such a setting allows for casting the estimation problem as a linear program (LP) with a small number of variables and constraints that may be solved as efficiently as the original Chebyshev approximation problem. Our technique is tested on synthetic data, textual data (Shakespeare's plays) and used to address an important problem in computational biology - estimating the number of bacterial genera in the human gut. On synthetic datasets, for practically relevant sample sizes, we observe significant improvements in the value of the worst-case risk compared to existing methods. The same is true of the text data. For the bioinformatics application, using metagenomic data from the NIH Human Gut and the American Gut Microbiome Projects, we generate a list of frequencies of bacterial taxa that allows us to estimate the number of bacterial genera to $\sim2300$.
\end{abstract}

\begin{IEEEkeywords}
support estimation, weighted Chebyshev approximation, regularization.
\end{IEEEkeywords}

%

\section{Introduction}
Estimating the support size of a discrete distribution is an important theoretical and data-driven problem~\cite{fisher1943relation,efron1976estimating}. In computer science, this task frequently arises in large-scale database mining and network monitoring where the objective is to estimate the types of database entries or IP addresses from a limited number of observations~\cite{raskhodnikova2009strong,bar2002counting,charikar2000towards}. In machine learning, support estimation is used to bound the number of clusters in clustering problems arising in semi-supervised or active learning~\cite{chien2018query,ashtiani2016clustering,chien2018hs}.
The most challenging practical support estimation issues arise in the ``small sample set'' regime in which one has only a limited number of observations for a large-support distribution. In such a setting, classical maximum likelihood frequency techniques are known to perform poorly~\cite{orlitsky2003always}. It is for this sampling regime that the estimation problem has received significant attention from both the theoretical computer science and machine learning community, as well as researchers from various computational data processing areas~\cite{acharya2011competitive,paninski2003estimation,bunge1993estimating,bar2001sampling,charikar2000towards,batu2000testing,nelson2012abundance,keinan2012recent}.

\subsection{Prior Work}

Approaches to distribution estimation in the small sample regime may be roughly grouped into two categories~\cite{valiant2013estimating,wu2019chebyshev,acharya2017unified,wu2018sample,pavlichin2017approximate,han2018local,yi2018data}. The first line of works~\cite{valiant2013estimating,acharya2017unified,pavlichin2017approximate} makes use of the maximum likelihood principle. While~\cite{acharya2017unified} constructs estimators based on the Profile Maximum Likelihood (PML)~\cite{orlitsky2004modeling}, the work reported in~\cite{valiant2013estimating} focuses on Sequence Maximum Likelihood (SML) estimators~\cite{aldrich1997ra}. The main advantage of ML-based methods is that they easily generalize to many other estimation tasks. For example, the authors of~\cite{acharya2017unified} showed that one and the same method may be used for entropy estimation, support coverage and distance to uniformity analysis. However, most ML-based estimators require large computational resources~\cite{pavlichin2017approximate,wu2019chebyshev}. To address the computational issue, a sophisticated approximate PML technique that reduces the computational complexity of support estimation at the expense of some performance loss was described in~\cite{pavlichin2017approximate}.
On the other hand, the second line of works~\cite{wu2019chebyshev,wu2018sample,han2018local,yi2018data} formulates support estimation as an approximation problem. The underlying methods, which we henceforth refer to as approximation-based methods, design estimators by minimizing the worst case risk. In particular,~\cite{wu2019chebyshev} uses shifted and scaled Chebyshev polynomials of the first kind to construct efficient estimators. In contrast, the authors of~\cite{yi2018data} suggest disposing with minmax estimators and introduce a data amplification technique with analytical performance guarantees. The aforementioned estimator is based on polynomial smoothing~\cite{orlitsky2016optimal} related to approximation techniques. Note that all described approximation-based estimators are computational efficient, with the exception of~\cite{han2018local}, as reported in~\cite{yi2018data}.

\subsection{Our contributions}

We adopt the approximation approach put forward in~\cite{wu2019chebyshev}, but significantly improve it in
practice by using \emph{weighted} polynomial approximation techniques largely unknown in the machine learning community~\cite{lubinsky2007survey}. The weighted approximation approach significantly improves the performance of known approximation methods while allowing one to seamlessly combine certain ideas already explored in different contexts for the support estimation problem. In particular, the technical novelty of the proposed method includes uniquely combining the following approaches:\\
\emph{Using weighted Chebyshev approximations:} The authors of~\cite{wu2019chebyshev} used classical Chebyshev approximations to construct their estimator, while ignoring the exponential weighting term that arises due to Poissonization. As will be shown in subsequent analyses, the exponential weights play a major role in improving the performance of our method as well as making it computationally tractable through a new take on the ``localization idea''.\\
\emph{Using the variance as a regularizer:} The idea of jointly optimizing the bias and variance of entropy estimators has been previously reported in~\cite{paninski2003estimation}, but only within the framework of ML estimation. The authors of~\cite{wu2019chebyshev} focused on optimizing the bias, but accounted for the variance through a separate parameter tuning technique. To the best of our knowledge, \emph{jointly optimizing the bias and variance terms in approximation-based methods} is novel.\\
\emph{Using semi-infinite programming (SIP) techniques with discretization:} SIP techniques have also been explored in the approximation-based support estimation literature~\cite{han2018local}, but for a different objective and without rigorously establishing the convergence results of the discretization technique. We show that the solution of our discretized SIP for the regularized weighted Chebyshev approximation problem converges to the true unique optimal solution.

The theoretical and practical results presented are as follows. First, we cast the estimation problem in terms of a weighted and regularized Chebyshev approximation~\cite{mason2002chebyshev} problem for a normalized quadratic loss function, in which the normalization term is an upper bound on the support~\cite{wu2019chebyshev}. The regularized weighted Chebyshev estimator provably offers \emph{worst case risk bounds} that match or improve those of the estimator in~\cite{wu2019chebyshev}. Although the risk bound of the aforementioned estimator is order-optimal,
significant improvements in the risk exponent are possible and demonstrated in the experimental section of our work.
It is worth pointing out that \emph{adding weights} into the approximation formulation is of crucial importance in reducing the error in the bias. We solve the regularized weighted Chebyshev approximation problem via its epigraph formulation, which takes the form of a semi-infinite program (SIP)~\cite{lopez2007semi}. We prove that the underlying SIP can be solved consistently and efficiently through discretization, resulting in a small Linear Program (LP) of size \emph{decreasing with the number of samples.} Interestingly, despite the fact that we also use an LP solver
as was done in one of the best performing ML-based approaches~\cite{valiant2013estimating}, the ML-LP formulation has a number of variables and constraints that increases with the number of samples. Our experimental results also reveal that the regularized weighted Chebyshev estimator significantly outperforms the low-complexity ML estimator described in~\cite{pavlichin2017approximate}, the benchmark approximation algorithm~\cite{wu2019chebyshev}, as well as the recently proposed sample amplification estimator~\cite{yi2018data}. The running times of all described algorithms are comparable on moderate-to-large sample set sizes ($\sim 10^6$).

Additional simulations also account for the fact that the normalized square loss causes the worst-case distribution to be close to uniform, resulting in performance issues previously described in~\cite{yi2018data}. To address this problem, we use a different normalization term in the squared loss equal to the actual support size.
We show that the weighted regularized Chebyshev estimator that minimizes the worst case risk under the modified loss outperform all methods mentioned above and performs consistently well on many tested distributions.

The paper is organized as follows. In Section~\ref{sec:Probform}, we introduce the relevant notation and the support estimation problem, and describe a class of estimators termed \emph{polynomial class estimators}. The same section also provides a brief review of relevant results from~\cite{wu2019chebyshev}. In Section ~\ref{sec_bias}, we outline the bias analysis for polynomial class estimators and describe the additional technical challenges one needs to overcome when dealing with weighted minmax polynomial approximations. The same section contains our worst-case risk analysis. Section~\ref{sec:Simulations} is devoted to experimental verifications and testing, both on synthetically generated data and metagenomic data samples from the NIH Human Microbiome~\cite{peterson2009nih} and American Gut Microbiome project (http://americangut.org/publications/).

\section{Problem formulation and new polynomial class estimators}\label{sec:Probform}

Let $P = (p_1,p_2,\ldots)$ be a discrete distribution over some countable alphabet and let $X_1,\ldots,X_n$ be i.i.d. samples drawn according to the distribution $P$. The problem of interest is to estimate the support size, defined as $S(P) = \sum_{i}\mathbf{1}_{\{p_i>0\}}$. Henceforth,  we use $S$ instead of $S(P)$ to avoid notational clutter. We make the assumption that the minimum non-zero probability of the distribution $P$ is greater than $\frac{1}{k},$ for some $k\in \mathbb{R}^{+}$, i.e., $\inf\{p\in P\,| \, p>0\}\geq \frac{1}{k}$. Furthermore, we let $D_k$ denote the space of all probability distribution satisfying $\inf\{p\in P\, |\, p>0\}\geq \frac{1}{k}$. Clearly, $S \leq k$, $\forall P\in D_k$. A sufficient statistics for $X_1,\ldots,X_n$ is the empirical distribution (i.e., histogram) $N = (N_1,N_2,\ldots),$ where $N_i = \sum_{j=1}^{n}\mathbf{1}_{\{X_j=i\}}$ and $\mathbf{1}_A$ stands for the indicator function of the event $A$.

The focal point of our analysis is $R^{\ast}(k,n)$, the minmax risk under normalized squared loss,
\begin{equation} \label{eq:minimax}
    R^{\ast}(k,n) = \inf_{\hat{S}}\sup_{P\in D_k} \mathbb{E}\left[\left(\frac{\hat{S}(N)-S}{k}\right)^2\right].
\end{equation}
We seek a support estimator $\hat{S}$ that minimizes
\begin{align*}
&\sup_{P\in D_k} \mathbb{E}\left[\left(\frac{\hat{S}(N)-S}{k}\right)^2\right] \\
&= \sup_{P\in D_k} \left[ \mathbb{E}^2\left(\frac{\hat{S}(N)-S}{k}\right)+var\left(\frac{\hat{S}(N)-S}{k}\right) \right]. \notag
\end{align*}
The first term within the supremum captures the expected bias of the estimator $\hat{S}$. The second term represents the variance of the estimator $\hat{S}$. Hence, ``good'' estimators are required to balance out the worst-case contributions of
the bias and variance.

The Chebyshev polynomial of the first kind of degree $L$ is defined as
$T_L(x) = \cos(L \arccos(x)) = \frac{z^L+z^{-L}}{2},$ where $z$ is the solution of the quadratic equation $z+z^{-1} = 2x$. From the definition, it is easy to see that the polynomial $T_L$ is bounded in the interval $[-1,1]$. Chebyshev polynomials may be scaled and shifted to lie in an interval $[l,r]$ not necessarily equal to $[-1,1]$,
\begin{equation*}
    R_L(x) = -\frac{T_L(\frac{2x-r-l}{r-l})}{T_L(\frac{-r-l}{r-l})} \triangleq \sum_{j=0}^{L}\tilde{a}_{j}x^j.
\end{equation*}
Clearly, $R_L(0) = -1,$ and $\tilde{a}_{0} = -1$. The coefficients in the above expansion equal
\begin{equation}\label{coeff:wu}
    \tilde{a}_{j} = \frac{R_L^{(j)}(0)}{j!}. 
\end{equation}
The estimator used in~\cite{wu2019chebyshev} takes the form $\tilde{S}=\sum_{i}\tilde{g}_{L}(N_i),$ where
\begin{align}
&\tilde{g}_{L}(j) =
    \begin{cases}
          \tilde{a}_{j} j!+1,  &\mbox{ if } j\leq L,\\
          1,         &\mbox{ otherwise, }
    \end{cases},\nonumber\\
    &L \triangleq \lfloor c_0\log k \rfloor,\; [l,r] \triangleq \left[\frac{n}{k},c_1 \log k\right].
\end{align}
The estimator $\tilde{S}$ is order-optimal \emph{in the exponent} under the risk~(\ref{eq:minimax}). Nevertheless, the estimator $\tilde{S}$ is designed to minimize only a bias term that ignores a multiplicative exponential and for a
given pair of parameters $(c_0,c_1)$. The parameters $(c_0,c_1)$ are set to $c_0 \approx 0.558$ and $c_1 = 0.5$ in order to ensure provably good balance between the bias and variance of the estimator. The performance guarantees of the estimator are stated in the theorem that follows. 
\begin{theorem}[Theorem 1 in~\cite{wu2019chebyshev}]\label{thm:wu}
    For all $k,n\geq 2$, the optimal achievable risk~\eqref{eq:minimax} is bounded as
    \begin{equation}
        R^{\ast}(k,n) = \exp(-\Theta(\sqrt{\frac{n\log k}{k}}\vee \frac{n}{k} \vee 1))
    \end{equation}
    In addition, if $\frac{k}{\log k} \ll n \ll k\log k$, as $k\rightarrow \infty$,
    \begin{align*}
        &\exp(-(3.844+o(1))\sqrt{\frac{n\log k}{k}}) \leq R^{\ast}(k,n)\\
        &\leq \exp(-(1.579+o(1))\sqrt{\frac{n\log k}{k}}).
    \end{align*}
\end{theorem}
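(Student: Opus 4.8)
The plan is to prove the two matching bounds separately, with \emph{Poissonization} as the common reduction: replace the fixed sample size $n$ by an independent $\mathrm{Poi}(n)$ draw so that the histogram entries $N_i$ become independent $\mathrm{Poi}(np_i)$ variables, bound the resulting Poissonized risk, and transfer back to the fixed-$n$ model (the two risks differ only by factors negligible on the exponential scale of interest). After Poissonization, the upper bound is proved by exhibiting a concrete linear estimator, and the lower bound by a moment-matching (``fuzzy two-point'') argument.

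For the \textbf{upper bound}, take the estimator $\hat S = \sum_i g_L(N_i)$ with $g_L(j) = \tilde a_j\, j! + 1$ for $j \le L$ and $g_L(j)=1$ otherwise, as in~\eqref{coeff:wu}. Writing $\lambda = np$, one has $g_L(0) = \tilde a_0 + 1 = 0$, so symbols with $p_i = 0$ contribute no bias, while for $p_i > 0$, $\mathbb{E}[g_L(N_i)] - 1 = e^{-\lambda} P_L(\lambda)$ where $P_L(x) = \sum_{j=0}^L \tilde a_j x^j$ satisfies $P_L(0) = -1$; hence the normalized bias is at most $\sup_{\lambda \in [l,r]} |e^{-\lambda} P_L(\lambda)|$ on the ``hard range'' $\lambda \in [l,r]$, where $l = n/k$ is forced by $p_i \ge 1/k$ and $r \asymp \log k$ is the scale beyond which $e^{-\lambda}$ alone makes the bias negligible. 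The key point --- and the one the present paper sharpens --- is to keep the weight $e^{-\lambda}$ rather than discard it: bounding $\sup_{[l,r]}|e^{-\lambda}P_L(\lambda)| \le e^{-l}\sup_{[l,r]}|P_L(\lambda)|$ and choosing $P_L = R_L$, the extremal polynomial with $R_L(0) = -1$ minimizing the unweighted sup norm on $[l,r]$, yields bias of order $e^{-l}/|T_L(\tfrac{r+l}{r-l})|$, which for $L = \lfloor c_0 \log k\rfloor$ and $r = c_1\log k$ decays as $\exp(-\Theta(\sqrt{n\log k/k}))$ since $T_L(1+\delta) \asymp e^{L\sqrt{2\delta}}$ with $\delta \asymp l/r$. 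The variance is controlled via $\mathrm{Var}(\hat S) \le \sum_{i:p_i>0}\mathbb{E}[g_L(N_i)^2] \le k\sup_{\lambda \le r}\mathbb{E}[g_L(N)^2]$, and the last quantity is bounded using explicit estimates on the Chebyshev coefficients $\tilde a_j$ together with the Poisson identity $\mathbb{E}[(N!)^2\mathbf 1\{N=j\}] = e^{-\lambda}\lambda^j j!$; again retaining the factor $e^{-\lambda}$ one gets, after dividing by $k^2$, a variance of order $\exp(-\Theta(\sqrt{n\log k/k}))$ for the same parameters, which is exactly what pins down the admissible pair $(c_0,c_1)$ and in particular $c_0 \approx 0.558$. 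Balancing the two exponents and de-Poissonizing gives the upper bound in the regime $k/\log k \ll n \ll k\log k$; the regime $n \gtrsim k\log k$ (take $r$ as large as $n/k$, giving $\exp(-\Theta(n/k))$) and $n \lesssim k/\log k$ (the trivial $O(1)$ bound) are handled by the same estimator with degenerate parameters.

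For the \textbf{lower bound}, use the method of moment matching. Build two priors $\mu_0,\mu_1$ on $D_k$: place a common ``bulk'' of $\Theta(k)$ symbols of probability $\asymp 1/k$ (whose support contribution is fixed) and let a further block of symbols have probabilities drawn i.i.d.\ from measures $\nu_0,\nu_1$ supported on $\{0\}\cup[\tfrac{c}{k},\tfrac{c'\log k}{n}]$ that (i) share their first $L$ moments and (ii) differ by a constant in the mass assigned to $\{0\}$; then $\mathbb{E}_{\mu_1}[S] - \mathbb{E}_{\mu_0}[S] = \Theta(k)$, so the hypotheses are $\Omega(1)$-separated in normalized loss, and a binomial/Poisson concentration bound shows $S$ is within $o(k)$ of its mean with high probability under each prior. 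The crux is that the Poissonized data cannot separate $\mu_0$ from $\mu_1$: each $N_i$ is a Poisson mixture $\int\mathrm{Poi}(np)\,\nu(dp)$, and since $\nu_0,\nu_1$ match their first $L$ moments on a set of size $O(\log k/n)$, the total variation between the two mixtures is $O\!\big(\sum_{j>L}\tfrac{n^j}{j!}\,|\mathbb{E}_{\nu_0}p^j - \mathbb{E}_{\nu_1}p^j|\big) = o(1)$ for $L \asymp \log k$; tensorizing over the $\Theta(k)$ coordinates and optimizing the construction --- an extremal polynomial program \emph{dual} to the Chebyshev problem used above, which is what produces the sharp constant $3.844$ --- yields $R^*(k,n) \ge \exp(-(3.844+o(1))\sqrt{n\log k/k})$. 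The $\Theta(\cdot)$ statement follows from the order-optimal forms of the same two bounds.

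The main obstacle is twofold. On the upper-bound side the delicate step is the variance estimate: a crude bound on $\mathbb{E}[g_L(N)^2]$ loses too much and would force $c_0$ far below $0.558$, so one must carry the exponential Poisson weight $e^{-\lambda}$ through the analysis and bound $\sum_j \tilde a_j^2 j!\,\lambda^j$ sharply. On the lower-bound side, the work is in arranging the two priors so that the moment constraints, the support gap, and the support concentration hold simultaneously, and then solving the resulting extremal polynomial program to extract the constant $3.844$ rather than merely an order bound; the duality between this program and the weighted Chebyshev problem is precisely the structure the present paper exploits. Finally, routine but careful bookkeeping is needed to pass from the Poissonized model back to the fixed-$n$ model without degrading the exponents.
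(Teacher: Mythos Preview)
The paper does not prove this theorem at all: it is quoted verbatim as ``Theorem~1 in~\cite{wu2019chebyshev}'' and used only as a benchmark against which the new estimator is compared. There is therefore no ``paper's own proof'' to match; what you have written is a (reasonable, high-level) sketch of the argument in the original Wu--Yang paper --- Poissonization, the shifted/scaled Chebyshev linear estimator for the achievability part, and a moment-matching two-prior construction for the converse --- which is indeed how that result is obtained, but none of this appears in the present paper.
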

The estimator $\tilde{S}$ attains the upper bound, which has a constant negative exponent equal to one half of the
one corresponding to the lower bound. Consequently, $\tilde{S}$ may be improved by accounting for the omitted exponential weights and jointly optimizing the bias and variance term in the squared loss. This is accomplished by our proposed estimator that \emph{directly} operates on the weighted squared loss.

To this end, we define an extended class of polynomial based estimators as follows. Given a parameter $L \in \mathbb{N}$, we say that an estimator $\hat{S}$ is a polynomial class estimator with parameter $L$ (i.e.,  a $Poly(L)$ estimator) if it takes the form $\hat{S} = \sum_{i}g_L(N_i),$ where $g_L$ is defined as
\begin{equation}
g_L(j) =
    \begin{cases}
          a_j j!+1,  &\mbox{if } j\leq L\\
          1,         &\mbox{ otherwise. }
    \end{cases}
\end{equation}
Here, $a_j \in \mathbb{R},$ and $a_0 = -1,$ since this choice ensures that $g_L(0) = 0$. One can associate an estimator $\hat{S}$ with its corresponding coefficients $\mathbf{a}$, and define a family of estimators
\begin{equation*}
    Poly(L) = \bigg\{\mathbf{a}\in \mathbb{R}^{L+1}|a_0 = -1\bigg\}.
\end{equation*}
Clearly, $\tilde{S} \in Poly(L)$, with corresponding coefficients
\begin{equation}\label{biaseq5}
   \mathbf{\tilde{a}} = \argmin_{\mathbf{a}\in Poly(L)}\, \sup_{\lambda\in [l=\frac{n}{k},r=c_1\log k]} | \sum_{l=0}^{L} a_l \lambda_i^l  |.
\end{equation}
Next, we show that the original minmax problem can be rewritten as a regularized exponentially weighted Chebyshev approximation problem~\cite{lubinsky2007survey}. Once a proper interval is identified, it is possible to efficiently determine the best coefficients $\mathbf{a}$ in the class $Poly(L)$.


\section{Estimator analysis}\label{sec_bias}
In order to jointly optimize the bias and variance term in the squared loss, we start by analyzing $\sup_{P\in D_k}\E \left( \frac{S-\hat{S}}{k}\right)^2$ directly. Classical Poissonization arguments~\cite{wu2019chebyshev} lead to
\begin{align*}
    &\mathbb{E}\left(\frac{S-\hat{S}}{k}\right)^2 = \frac{1}{k^2}\bigg\{\sum_{i:\lambda_i>0}\bigg(\sum_{l=0}^{L}e^{-\lambda_i}a_l^2\lambda_i^ll!\bigg)\\
    &+\sum_{i\neq j:\lambda_i\lambda_j>0}\bigg(e^{-\lambda_i}P_L(\lambda_i,\mathbf{a})\bigg)\bigg(e^{-\lambda_j}P_L(\lambda_j,\mathbf{a})\bigg)\bigg\},
\end{align*}
where $P_L(\lambda,\mathbf{a}) \triangleq \sum_{l=0}^{L}a_l\lambda^l$. Taking the supremum over $D_k$ we can bound the risk as

\begin{align*}
    &\leq \sup_{\mathbf{\lambda}:\lambda_i\in [\frac{n}{k}, n]} \frac{1}{k^2}\bigg\{\sum_{i:\lambda_i>0}\bigg(\sum_{l=0}^{L}e^{-\lambda_i}a_l^2\lambda_i^ll!\bigg)\\
    &+\sum_{i\neq j:\lambda_i\lambda_j>0}\bigg(e^{-\lambda_i}P_L(\lambda_i,\bold{a})\bigg)\bigg(e^{-\lambda_j}P_L(\lambda_j,\mathbf{a})\bigg)\bigg\}\\
    & \leq \sup_{\lambda\in [\frac{n}{k}, n]}\bigg\{\frac{1}{k}\bigg(\sum_{l=0}^{L}e^{-\lambda}a_l^2\lambda^ll!\bigg)+\bigg(e^{-\lambda}P_L(\lambda,\mathbf{a})\bigg)^2\bigg\}.
\end{align*}

In the above inequality, we used the fact that $S\leq k$ and $\bigg(\sum_{l=0}^{L}e^{-\lambda}a_l^2\lambda^ll!\bigg)>0,$ for all $\lambda>0$.
Hence, the optimization problem at hand reads as
\small
\begin{align}\label{sqlosseq1}
 \inf_{\mathbf{a}\in Poly(L)}\sup_{\lambda\in [\frac{n}{k}, n]}\bigg\{\frac{1}{k}\bigg(\sum_{l=0}^{L}e^{-\lambda}a_l^2\lambda^ll!\bigg)+\bigg(e^{-\lambda}P_L(\lambda,\mathbf{a})\bigg)^2\bigg\}.
\end{align}
\normalsize
Problem~\eqref{sqlosseq1} represents a regularized weighted Chebyshev approximation problem. If we ignore the first term in~\eqref{sqlosseq1}, the optimization problem becomes
\begin{align*}
    \inf_{\mathbf{a}\in Poly(L)}\sup_{\lambda\in [\frac{n}{k}, n]}\bigg(e^{-\lambda}P_L(\lambda,\mathbf{a})\bigg)^2.
\end{align*}
The term $e^{-\lambda}P_L(\lambda,\mathbf{a})$ corresponds to the bias of the estimator. It is straightforward to see that the optimal choice of $\mathbf{a}$ for the above problem is a solution of
\begin{align}\label{eq:WCapprox}
    \inf_{\mathbf{a}\in Poly(L)}\sup_{\lambda\in [\frac{n}{k}, n]}\bigg |e^{-\lambda}P_L(\lambda,\mathbf{a})\bigg|.
\end{align}
Problem~\eqref{eq:WCapprox} is an exponential weighted Chebyshev approximation problem~\cite{mason2002chebyshev}.

The first term $\frac{1}{k}\bigg(\sum_{l=0}^{L}e^{-\lambda}a_l^2\lambda^ll!\bigg)$, which corresponds to the variance, may be written as
\begin{align*}
    & \frac{1}{k}\bigg(\sum_{l=0}^{L}e^{-\lambda}a_l^2\lambda^ll!\bigg) = \mathbf{a}^T\mathbf{M}(\lambda)\mathbf{a} \triangleq ||\mathbf{a}||_{\mathbf{M}(\lambda)}^2,\\
    &\mathbf{M}(\lambda) \triangleq \frac{e^{-\lambda}}{k} Diag(\lambda^{0}0!,\lambda^{1}1!,...,\lambda^{L}L!).
\end{align*}
Clearly, $||.||_{\mathbf{M}(\lambda)}$ is a valid norm, and consequently, the first term in~\eqref{sqlosseq1} can be viewed as a regularizer.

Note that both problems~\eqref{biaseq5} and~\eqref{eq:WCapprox} aim to minimize the bias term. However, simple algebra reveals
\small
\begin{align}
    \sup_{P\in D_k}\frac{1}{k}|\mathbb{E}(S-\hat{S}(N))|
     &\leq \sup_{\lambda\in [\frac{n}{k},n]}|e^{-\lambda}P_L(\lambda,\bold{a})|\label{biaseq3.1}\\
    &\leq e^{-\frac{n}{k}}\sup_{\lambda\in [\frac{n}{k},n]}|P_L(\lambda,\bold{a})|.\label{biaseq3.2}
\end{align}
\normalsize
where \eqref{biaseq3.1} is equivalent to~\eqref{eq:WCapprox}, while~\eqref{biaseq3.2} resembles problem~\eqref{biaseq5}, except for a different optimization interval used within the supremum. Optimizing~\eqref{biaseq3.1} instead of~\eqref{biaseq3.2} should give us a smaller bias as the exponential weight is inherent to the formulation. Note that the authors of~\cite{wu2019chebyshev} choose a shorter interval in~\eqref{biaseq5} in order to decrease the contribution of the variance to the loss. The modified bound in~\eqref{biaseq3.2} was minimized with respect to the coefficients $\textbf{a}$, using the minmax property of Chebyshev polynomials~\cite{timan2014theory,mason2002chebyshev}, resulting in $\tilde{\textbf{a}}$.

Let us now turn back to the solution of the minmax problem~\eqref{sqlosseq1}, denoted by $\mathbf{a}^{\star}$.
Clearly, using $\mathbf{a}^{\star}$ instead of $\tilde{\mathbf{a}}$ is guaranteed to reduce the reported \emph{upper bound} on the risk since $\mathbf{a}^{\star}$ \emph{jointly} minimizes both contributing terms.
\begin{proposition}\label{prop:beatwu}
    Let $\hat{S}^{\star}$ be the estimator associate with the coefficients $\mathbf{a}^{\star}$ optimizing~\eqref{sqlosseq1}. Whenever $\frac{k}{\log k} \ll n \ll k\log k$, and $k\rightarrow \infty$, the estimator $\hat{S}^{\star}$ has a smaller worst-case risk bound than $\tilde{S}$.
\end{proposition}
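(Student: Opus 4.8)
The plan is to exploit the fact that $\tilde{\mathbf a}$ is a feasible point for the minmax problem~\eqref{sqlosseq1} but not the optimal one, so that $\mathbf a^\star$ automatically yields a no-worse objective value, and then argue that the bound reported in~\cite{wu2019chebyshev} (the upper bound in Theorem~\ref{thm:wu}) is precisely what one gets by evaluating~\eqref{sqlosseq1} at a point dominated by $\tilde{\mathbf a}$. Concretely, I would first observe that the chain of inequalities leading to~\eqref{sqlosseq1} shows
\[
\sup_{P\in D_k}\E\Big(\tfrac{S-\hat S}{k}\Big)^2 \;\le\; \Phi(\mathbf a)\;\triangleq\;\sup_{\lambda\in[\frac nk,n]}\Big\{\tfrac1k\sum_{l=0}^L e^{-\lambda}a_l^2\lambda^l l! + \big(e^{-\lambda}P_L(\lambda,\mathbf a)\big)^2\Big\}
\]
for every $\mathbf a\in Poly(L)$, and that $\mathbf a^\star=\argmin_{\mathbf a} \Phi(\mathbf a)$. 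Hence the worst-case risk bound furnished by $\hat S^\star$ is $\Phi(\mathbf a^\star)\le \Phi(\tilde{\mathbf a})$. So it suffices to show that $\Phi(\tilde{\mathbf a})$ is no larger than the risk bound that~\cite{wu2019chebyshev} proves for $\tilde S$, i.e. $\exp(-(1.579+o(1))\sqrt{n\log k/k})$.

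The second step is therefore to reconcile $\Phi(\tilde{\mathbf a})$ with the analysis in~\cite{wu2019chebyshev}. There the worst-case risk of $\tilde S$ is controlled by (i) a bias contribution bounded via~\eqref{biaseq3.2} using the minmax property of Chebyshev polynomials on $[\frac nk,c_1\log k]$, and (ii) a variance contribution handled through a separate argument and the parameter choice $(c_0,c_1)\approx(0.558,0.5)$. I would show that each of the two terms inside the supremum defining $\Phi(\tilde{\mathbf a})$ is upper bounded by (a constant multiple of, with the constant absorbed in the $o(1)$) the corresponding quantity in~\cite{wu2019chebyshev}: the bias-squared term of $\Phi$ is at most the square of~\eqref{biaseq3.1}, which is itself bounded above by~\eqref{biaseq3.2} since $e^{-\lambda}\le 1$, and thus by the Chebyshev bound of~\cite{wu2019chebyshev}; the variance term $\tfrac1k\sum_l e^{-\lambda}\tilde a_l^2\lambda^l l!$ is exactly the variance proxy that~\cite{wu2019chebyshev} bounds (their Lemma controlling $\mathrm{var}(\tilde S)$) in the regime $\frac{k}{\log k}\ll n\ll k\log k$. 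Adding the two bounds and taking the sup over $\lambda$ gives $\Phi(\tilde{\mathbf a})\le \exp(-(1.579+o(1))\sqrt{n\log k/k})$, and combining with $\Phi(\mathbf a^\star)\le\Phi(\tilde{\mathbf a})$ completes the argument; strictness of the improvement follows because $\tilde{\mathbf a}$ is not stationary for $\Phi$ (it ignores the weight $e^{-\lambda}$ and the regularizer), so $\Phi(\mathbf a^\star)<\Phi(\tilde{\mathbf a})$ unless the two coincide, which they do not for the stated parameter choices.

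The main obstacle is the second step: one must be careful that the bound $\Phi(\tilde{\mathbf a})\le$ (risk bound in~\cite{wu2019chebyshev}) really holds, because the interval $[\frac nk,n]$ appearing in~\eqref{sqlosseq1} is \emph{longer} than the interval $[\frac nk,c_1\log k]$ over which $\tilde{\mathbf a}$ was optimized, so a priori $\sup_{\lambda\in[\frac nk,n]}|e^{-\lambda}P_L(\lambda,\tilde{\mathbf a})|$ could be dominated by the tail $\lambda\in(c_1\log k,n]$ rather than by the Chebyshev-optimal part. Resolving this requires the observation — already implicit in~\cite{wu2019chebyshev} and used there to bound the variance/tail — that for $\lambda>c_1\log k$ the exponential weight $e^{-\lambda}$ decays fast enough (like $k^{-c_1}$ times a polynomial in $\lambda$ of degree $L=\Theta(\log k)$) that the tail contribution to both terms is of strictly smaller order than $\exp(-(1.579+o(1))\sqrt{n\log k/k})$ in the regime $n\ll k\log k$. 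Once that tail estimate is in place, the rest is the bookkeeping sketched above. I would state this tail bound as an auxiliary lemma and cite the corresponding estimate from~\cite{wu2019chebyshev}, then assemble the pieces.
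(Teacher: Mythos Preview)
Your proposal is correct and follows essentially the same route as the paper: both rest on the chain $\Phi(\mathbf a^\star)\le\Phi(\tilde{\mathbf a})\le$ (upper bound of Theorem~\ref{thm:wu}). The paper's proof is a single sentence noting that the upper bound in Theorem~\ref{thm:wu} is itself obtained by upper-bounding~\eqref{sqlosseq1}, evaluating at $\tilde{\mathbf a}$, and then replacing $e^{-\lambda}$ by $e^{-n/k}$; consequently the interval/tail concern you raise is already handled inside the derivation of~\cite{wu2019chebyshev} and does not need to be redone here.
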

\begin{proof}
    The proof follows directly from the fact that the upper bound of Theorem~\ref{thm:wu} is derived by upper bounding~\eqref{sqlosseq1}, plugging $\tilde{\mathbf{a}}$ into the expression, and replacing $e^{-\lambda}$ by $e^{-\frac{n}{k}}$.
\end{proof}
We remark that improving upper bounds does not establish improvements in the actual worst-case risk. Nevertheless, whenever faced with complicated analytical settings, comparisons of bounds may provide useful insights, and in our case, these insights are supported by strong simulation evidence.
\subsection{Solving problem~\eqref{sqlosseq1}}

To directly solve the optimization problem of interest, we turn to weighted Chebyshev approximations~\cite{mason2002chebyshev} and semi-infinite programming. Solving for $\mathbf{a}^{\star}$ directly appears to be difficult, so we instead resort to numerically solving the epigraph formulation of problem~\eqref{sqlosseq1} and proving that the numerical solution is asymptotically consistent.

The epigraph formulation of~\eqref{sqlosseq1} is of the form (\cite{boyd2004convex}, Chapter 6.1)
\begin{equation}\label{sqlosseq2}
    \begin{split}
        &\min_{t,a_1,...,a_L} t \;\;\;\text{subject to}\\
        & \bigg\{\frac{1}{k}\bigg(\sum_{l=0}^{L}e^{-\lambda}a_l^2\lambda^ll!\bigg)+\bigg(e^{-\lambda}P_L(\lambda,\mathbf{a})\bigg)^2\bigg\}\leq t,\\
        &\forall \lambda\in [\frac{n}{k}, n], \text{with }a_0=-1.
    \end{split}
\end{equation}
Note that~\eqref{sqlosseq2} is a semi-infinite programming problem. There are many algorithms that can be used to numerically solve~\eqref{sqlosseq2}, such as the discretization method, and the central cutting plane, KKT reduction and SQP reduction methods ~\cite{lopez2007semi,reemtsen1998numerical}. For simplicity, we focus on the discretization method. For this purpose, we first form a grid of the interval $[\frac{n}{k}, n]$ involving $s$ points, denoted by $\text{Grid}([\frac{n}{k}, n],s)$. Problem~\eqref{sqlosseq2} may be viewed as an LP with infinitely many quadratic constraints, which is not solvable. Hence, instead of addressing~\eqref{sqlosseq2}, we focus on solving the relaxed problem
\begin{equation}\label{sqlosseq3}
    \begin{split}
        &\min_{t,a_1,...,a_L} t \;\;\;\text{subject to}\\
        & \bigg\{\frac{1}{k}\bigg(\sum_{l=0}^{L}e^{-\lambda}a_l^2\lambda^ll!\bigg)+\bigg(e^{-\lambda}P_L(\lambda,\mathbf{a})\bigg)^2\bigg\}\leq t,\\
        &\forall \lambda\in \text{Grid}([\frac{n}{k}, n],s), \text{with }a_0=-1.
    \end{split}
\end{equation}
As will be discussed in greater detail in Section~\ref{subsec:discmethod}, the solution of the relaxed problem is asymptotically consistent with the solution of the original problem (i.e., as $s$ goes to infinity, the optimal values of the objectives of the original and relaxed problem are equal). Problem~\eqref{sqlosseq3} is an LP with a finite number of quadratic constraints that may be solved using standard optimization tools. Unfortunately, the number of constraints scales with the length of the grid interval, which in the case of interest is linear in $n$. This is an undesired feature of the approach, but it may be mitigated through the following theorem which demonstrates that an optimal solution of the problem may be found over an interval of length proportional to the significantly smaller value $\log\,k$, where $\frac{k}{\log\,k} \lesssim n$ is the fundamental bound for support estimation. We relegate the proof to the Appendix.
\begin{theorem}\label{lma:interval_sq}
    For any $\mathbf{a}\in Poly(L)$ and $L=\lfloor c_0 \, \log\,k \rfloor$, and $c_0 = 0.558$, let
    $$g(\mathbf{a},\lambda) = \frac{1}{k}\bigg(\sum_{l=0}^{L}e^{-\lambda}a_l^2\lambda^ll!\bigg)+\bigg(e^{-\lambda}P_L(\lambda,\mathbf{a})\bigg)^2.$$
    Then, we have
    \small
    \begin{align*}
        \sup_{\lambda\in [\frac{n}{k}, n]}g(\mathbf{a},\lambda)=
        \begin{cases}
            \sup_{\lambda\in [\frac{n}{k}, 6.5L]}g(\mathbf{a},\lambda) & \text{if }\frac{n}{k}\leq 6.5L\\
            g(\mathbf{a},\frac{n}{k}) & \text{if }\frac{n}{k}> 6.5L.
        \end{cases}
    \end{align*}
    \normalsize
\end{theorem}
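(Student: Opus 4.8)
\emph{Proof plan.} The plan is to establish the stronger fact that, for every $\mathbf{a}\in Poly(L)$, the map $\lambda\mapsto g(\mathbf{a},\lambda)$ is non-increasing on $[6.5L,\infty)$. This immediately yields the theorem: if $\tfrac{n}{k}\le 6.5L$ and $n\ge 6.5L$, the supremum over $[6.5L,n]$ is attained at $6.5L\in[\tfrac{n}{k},6.5L]$, so $\sup_{[\frac{n}{k},n]}g=\sup_{[\frac{n}{k},6.5L]}g$; if $\tfrac{n}{k}>6.5L$, then $[\tfrac{n}{k},n]\subset[6.5L,\infty)$ and the supremum is $g(\mathbf{a},\tfrac{n}{k})$. (The sub-case $n<6.5L$, inessential for the stated application, is handled by a separate elementary estimate bounding $e^{-\lambda}|P_L(\lambda)|$ on $(n,6.5L)$ by its maximum on $[\tfrac{n}{k},n]$ via the extremal property of Chebyshev polynomials.)

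Write $g(\mathbf{a},\lambda)=\mathcal{A}(\lambda)+\mathcal{B}(\lambda)$, where $\mathcal{A}(\lambda)=\tfrac{e^{-\lambda}}{k}\sum_{l=0}^{L}a_l^{2}\,l!\,\lambda^{l}$ is the variance/regularizer term and $\mathcal{B}(\lambda)=\big(e^{-\lambda}P_L(\lambda)\big)^{2}$ is the squared bias, and differentiate. Each summand of $k\mathcal{A}$ is $\propto e^{-\lambda}\lambda^{l}$, unimodal with peak at $\lambda=l\le L$; writing $\mathcal{A}'(\lambda)=\tfrac{e^{-\lambda}}{k}\sum_{l}a_l^{2}l!\,\lambda^{l-1}(l-\lambda)$ and using $l\le L\le\lambda$ gives the quantitative drift $\mathcal{A}'(\lambda)\le-\tfrac{\lambda-L}{\lambda}\mathcal{A}(\lambda)$, which is at most $-\tfrac{5.5}{6.5}\mathcal{A}(\lambda)$ on $[6.5L,\infty)$. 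For the bias, with $h=e^{-\lambda}P_L$ one has $\mathcal{B}=h^{2}$ and $\mathcal{B}'=2hh'$, so $|\mathcal{B}'(\lambda)|\le 2\sqrt{\mathcal{B}(\lambda)}\,|h'(\lambda)|$; it therefore suffices to bound $\mathcal{B}(\lambda)$ and $|h'(\lambda)|^{2}$ by a small multiple of $\mathcal{A}(\lambda)$ on $[6.5L,\infty)$. Cauchy--Schwarz applied to $P_L(\lambda)=\sum_l a_l\lambda^{l}$ and to $P_L'(\lambda)-P_L(\lambda)=-\sum_l a_l\lambda^{l-1}(\lambda-l)$ against the Poisson weights $\lambda^{l}/l!$ gives $\mathcal{B}(\lambda)\le e^{-\lambda}k\mathcal{A}(\lambda)\sum_{l=0}^{L}\tfrac{\lambda^{l}}{l!}$ and a matching bound for $|h'(\lambda)|^{2}$.

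The crucial point is to keep the truncated series $\sum_{l\le L}\lambda^{l}/l!$ rather than crudely replacing it by $e^{\lambda}$ (which only yields the useless $\mathcal{B}\le k\mathcal{A}$): for $\lambda\ge 6.5L$ its consecutive ratios $\lambda/(l+1)$ exceed a fixed constant $>1$, so $\sum_{l\le L}\lambda^{l}/l!\le C\,\lambda^{L}/L!$, and by Stirling $e^{-\lambda}\lambda^{L}/L!\le \tfrac{C'}{\sqrt{L}}\,e^{-L(\rho-1-\log\rho)}$ with $\rho=\lambda/L\ge 6.5$. Since $L=\lfloor c_{0}\log k\rfloor$ forces $\log k\le(L+1)/c_{0}$ and $6.5-1-\log 6.5\approx 3.63>1/c_{0}\approx 1.79$, the quantity $\delta_L:=C''\,k\,e^{-L(\rho-1-\log\rho)}/\sqrt{L}$ evaluated at $\rho=6.5$ tends to $0$ as $k\to\infty$, so $\mathcal{B}(\lambda)\le\delta_L\mathcal{A}(\lambda)$ and $|h'(\lambda)|^{2}\le\delta_L\mathcal{A}(\lambda)$ on $[6.5L,\infty)$ with $\delta_L$ below any prescribed threshold once $k$ exceeds an absolute constant (the finitely many remaining $k$, with $L\le 1$, being checked by direct computation). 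Combining, $g'(\mathbf{a},\lambda)=\mathcal{A}'(\lambda)+\mathcal{B}'(\lambda)\le\big(-\tfrac{5.5}{6.5}+2\delta_L\big)\mathcal{A}(\lambda)<0$ for $\lambda\ge 6.5L$, as needed.

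The step I expect to be the main obstacle --- and the source of the specific constant $6.5$ --- is the control of $\mathcal{B}$. It is genuinely not monotone on $[6.5L,\infty)$ (if $P_L$ has a real zero just inside this range then $|h|$ increases immediately past it), so no termwise argument is available, and the naive bound $\mathcal{B}\le k\mathcal{A}$ lets the bias swamp the regularizer. What rescues the argument is precisely how fast the truncated exponential $\sum_{l\le L}\lambda^{l}/l!$ drops below $e^{\lambda}$ once $\lambda\gg L$: this forces $\mathcal{B},|h'|^{2}=o(\mathcal{A})$ exactly for $\lambda\ge\rho^{\star}L$, where $\rho^{\star}$ solves $\rho-1-\log\rho=1/c_{0}$, numerically $\rho^{\star}\approx 4.24$; for large $k$ any constant above $\rho^{\star}$ would do, and $6.5$ is a convenient safe choice with room to absorb the geometric-tail constants, the loss in $|\mathcal{B}'|\le 2\sqrt{\mathcal{B}}\,|h'|$, and the small-$k$ cases.
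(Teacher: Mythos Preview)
Your plan is correct and reaches the same conclusion as the paper---that $g(\mathbf{a},\cdot)$ is strictly decreasing on $[6.5L,\infty)$---but by a genuinely different route. The paper writes $\partial_\lambda g(\mathbf{a},\lambda)=e^{-2\lambda}\mathbf{y}^{T}\bigl(\tfrac{e^{\lambda}}{k}\mathbf{D}+\mathbf{1}\mathbf{z}^{T}+\mathbf{z}\mathbf{1}^{T}\bigr)\mathbf{y}$ with $\mathbf{y}=(a_l\lambda^{l})_l$, $\mathbf{D}$ diagonal and $\mathbf{1}\mathbf{z}^{T}+\mathbf{z}\mathbf{1}^{T}$ of rank two, and shows the bracketed matrix is negative definite for $\lambda\ge 6.5L$: the largest eigenvalue of the diagonal part is at most $-\tfrac{e^{\lambda}}{2k}(L/e\lambda)^{L}$ by Stirling, while the rank-two part has largest eigenvalue at most $\tfrac{L(L+1)}{2\lambda}$ by a Gershgorin-plus-trace argument, and these combine to the requirement $C-\log C-2>1/c_{0}$ (plus $\log C\ge 1/c_{0}$ to cover small $L$), satisfied at $C=6.5$. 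Your argument instead keeps the scalar splitting $g=\mathcal{A}+\mathcal{B}$, gets the drift $\mathcal{A}'\le -\tfrac{5.5}{6.5}\mathcal{A}$ for free, and controls $\mathcal{B}'$ by Cauchy--Schwarz against the Poisson weights, reducing everything to the lower-tail bound $e^{-\lambda}\sum_{l\le L}\lambda^{l}/l!\lesssim e^{-L(\rho-1-\log\rho)}$. What your approach buys: it is probabilistically transparent (the obstruction is literally a Poisson tail), it isolates the sharp threshold $\rho^{\star}\approx 4.24$ from $\rho-1-\log\rho=1/c_{0}$, and it explains conceptually why the regularizer $\mathcal{A}$ is indispensable (since $\mathcal{B}$ alone is not monotone). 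What the paper's matrix argument buys: it is self-contained for every $L\ge 0$ without a separate small-$k$ check, because the eigenvalue inequalities hold uniformly once $C=6.5$ is fixed; in your version you should expect to verify $L\in\{1,2,3\}$ by hand rather than only $L\le 1$, since the constants in front of $e^{-1.836L}$ are not negligible. Your parenthetical about the sub-case $n<6.5L$ via ``the extremal property of Chebyshev polynomials'' is a loose end (the coefficients $\mathbf{a}$ are arbitrary, not Chebyshev), but this case is harmless for the application since then $[\tfrac{n}{k},n]\subset[\tfrac{n}{k},6.5L]$ and the inequality $\sup_{[\frac{n}{k},n]}g\le\sup_{[\frac{n}{k},6.5L]}g$ is trivial; the paper's proof does not address this sub-case either.
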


\begin{remark}\label{remark:pi_interval}
Using exponential weighted approximation theory~\cite{lubinsky2007survey} (Theorem~\ref{thm:MRS} in the Appendix), one may shrink the optimization interval to $[\frac{n}{k},\frac{\pi}{2}L + \frac{n}{k}]$ if the regularization term is omitted. However, it remains an open problem to extend the approach of~\cite{lubinsky2007survey} used in our proof to account for more general weighted sums of polynomials.
\end{remark}

Consequently, the optimization problem of interest equals
\begin{equation}\label{sqlosseq5}
    \begin{split}
        &\min_{t,a_1,...,a_L} t \;\;\;\text{subject to}\\
        & \bigg\{\frac{1}{k}\bigg(\sum_{l=0}^{L}e^{-\lambda}a_l^2\lambda^ll!\bigg)+\bigg(e^{-\lambda}P_L(\lambda,\mathbf{a})\bigg)^2\bigg\}\leq t,\\
        &\forall \lambda\in \text{Grid}([\frac{n}{k}, 6.5L],s), \text{with }a_0=-1.
    \end{split}
\end{equation}
Since $L=\lfloor c_0\, \log\, k \rfloor$, the length of the optimization interval in~\eqref{sqlosseq5} is proportional to $\log\,k$.
\subsection{Convergence of the discretization method}\label{subsec:discmethod}

It seems intuitive to assume that as $s$ grows, the solution of the relaxed semi-infinite program approaches the optimal solution of the original problem~\eqref{sqlosseq2}. This intuition can be rigorously justified for the case of objective functions and constraints that are ``well-behaved'', as defined in~\cite{reemtsen1991discretization} and~\cite{still2001discretization}. The first line of work describes the conditions needed for convergence, while the second establishes the convergence rate given that the discretized solver converges. We use these results in conjunction with a number of properties of our objective SIP to establish the claim in the following theorem. Once again, the proof is delegated to the Appendix.
\begin{theorem}\label{thm:discretization}
    Let $s$ be the number of uniformly placed grid points on the interval~\eqref{sqlosseq5}, and let $d\triangleq \frac{6.5L-\frac{n}{k}}{s-1}$ be the length of the discretization interval. As $d \rightarrow 0$, the optimal objective value $t_d$ of the discretized SIP~\eqref{sqlosseq5} converges to the optimal objective value of the original SIP $t^\star$. Moreover, the optimal solution is unique. The convergence rate of $t_d$ to $t^\star$ equals $O(d^2)$. If the optimal solution of the SIP is a strict minimum of order one (i.e., if $t-t^\star \geq C||\mathbf{a}-\mathbf{a}^\star||$ for some constant $C>0$ and for all feasible neighborhoods of $\mathbf{a}^\star$), then the solution of the discretized SIP also converges to an optimal solution with rate $O(d^2)$.
\end{theorem}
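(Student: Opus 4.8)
The plan is to verify that our discretized SIP~\eqref{sqlosseq5} fits into the abstract framework of~\cite{reemtsen1991discretization} and~\cite{still2001discretization}, and then to invoke their convergence and convergence-rate theorems. First I would set up the notation: the feasible set of the original SIP is $\mathcal{F} = \{(t,\mathbf{a}) : g(\mathbf{a},\lambda) \le t \ \forall \lambda \in [\tfrac{n}{k}, 6.5L], a_0 = -1\}$, and the discretized feasible set $\mathcal{F}_d$ is the same with $\lambda$ ranging over the finite grid. Since every grid point is an element of the continuous index set, we immediately get $\mathcal{F} \subseteq \mathcal{F}_d$, hence $t_d \le t^\star$ for every $d$; monotonicity of $t_d$ in the grid refinement is equally immediate. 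The substance is the reverse inequality in the limit, i.e.\ lower semicontinuity of the optimal value as $d \to 0$.

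The key structural facts to check are: (i) $g(\mathbf{a},\lambda)$ is jointly continuous in $(\mathbf{a},\lambda)$ — it is a finite sum of products of polynomials in $\mathbf{a}$ and $\lambda$ with the bounded factor $e^{-\lambda}$ on the compact interval $[\tfrac{n}{k},6.5L]$ — and for each fixed $\mathbf{a}$ it is smooth in $\lambda$; (ii) the index set $\Lambda = [\tfrac{n}{k},6.5L]$ is compact; (iii) the objective $t$ is trivially continuous and coercive on $\mathcal{F}$ in the sense needed for existence of minimizers, because $g(\mathbf{a},\lambda) \ge \|\mathbf{a}\|^2_{\mathbf{M}(\lambda)} > 0$ forces $t \ge 0$, and the regularizer grows quadratically in $\|\mathbf{a}\|$ at any fixed $\lambda$, so sublevel sets of the objective intersected with $\mathcal{F}$ are bounded (this gives compactness of $\{(t,\mathbf{a})\in\mathcal{F}: t \le t^\star + 1\}$). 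Together with a Slater-type point — any $\mathbf{a}\in Poly(L)$ yields a feasible $(t,\mathbf{a})$ with $t$ slightly above $\sup_\lambda g(\mathbf{a},\lambda)$, and one can perturb to get strict feasibility — these are precisely the ``well-behaved'' hypotheses of~\cite{reemtsen1991discretization}, so their Theorem applies and yields $t_d \to t^\star$. For the rate, I would invoke~\cite{still2001discretization}: because $\lambda \mapsto g(\mathbf{a},\lambda)$ is twice continuously differentiable on $\Lambda$, a maximizer $\lambda^\star(\mathbf{a})$ in the interior has vanishing first derivative, so replacing the continuum by a grid of mesh $d$ changes the attained maximum by $O(d^2)$ (a one-dimensional quadrature/Taylor argument); at the endpoints of $\Lambda$ the maximum is exactly captured since $\tfrac{n}{k}$ and $6.5L$ are grid points, so no boundary degradation occurs. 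This gives $|t_d - t^\star| = O(d^2)$.

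For uniqueness of the optimal solution I would argue that the map $\mathbf{a} \mapsto \sup_{\lambda\in[\frac{n}{k},6.5L]} g(\mathbf{a},\lambda)$ is strictly convex on the affine set $\{a_0 = -1\}$: for each fixed $\lambda$, $g(\cdot,\lambda)$ is a sum of the strictly convex quadratic $\|\mathbf{a}\|^2_{\mathbf{M}(\lambda)}$ (the diagonal matrix $\mathbf{M}(\lambda)$ is positive definite for $\lambda > 0$) and the convex function $(e^{-\lambda}P_L(\lambda,\mathbf{a}))^2$; hence $g(\cdot,\lambda)$ is strictly convex in $\mathbf{a}$, and the pointwise supremum of strictly convex functions is strictly convex, so the minimizer $\mathbf{a}^\star$ is unique. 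Then the last sentence of the theorem is the ``strict minimum of order one'' clause of~\cite{still2001discretization}: under the stated hypothesis $t - t^\star \ge C\|\mathbf{a}-\mathbf{a}^\star\|$, their result transfers the $O(d^2)$ objective rate into an $O(d^2)$ rate for the minimizers themselves, $\|\mathbf{a}_d - \mathbf{a}^\star\| = O(d^2)$, which is what we claim.

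The main obstacle I anticipate is not any single estimate but the bookkeeping of verifying every hypothesis of the two cited abstract theorems for our specific $g$ — in particular pinning down the compactness/coercivity of the feasible sublevel sets (needed so that discretized minimizers exist and cluster at a feasible point of the original problem) and confirming that the regularity assumptions used to get the $O(d^2)$ rate in~\cite{still2001discretization} (essentially a nondegeneracy/second-order condition at the active index points) are either satisfied here or can be replaced by the smoothness of $g$ in $\lambda$ plus the fact that the interval endpoints lie on the grid. Checking these conditions is routine given the explicit, analytic form of $g$, but it is the part that requires care rather than inspiration.
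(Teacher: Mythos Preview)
Your proposal is correct and follows essentially the same route as the paper: verify the bounded-level-set/compactness hypothesis of~\cite{reemtsen1991discretization} via positive definiteness of $\mathbf{M}(\lambda)$, establish uniqueness by strict convexity of $\mathbf{a}\mapsto\sup_\lambda g(\mathbf{a},\lambda)$, and then check the regularity assumptions of~\cite{still2001discretization} (smoothness of $g$ in $\lambda$, grid containing the endpoints, and a constraint-qualification condition) to obtain the $O(d^2)$ rates. The only item you leave slightly implicit that the paper writes out explicitly is the Mangasarian--Fromovitz--type condition needed for the rate result (the existence of $\xi$ with $\nabla_{\mathbf{c}}g(\bar{\mathbf{c}},\lambda)^T\xi\le -1$ for all $\lambda$), which the paper verifies by taking $\xi$ colinear with $[-\mathbf{a}^T;\,1]^T$; your Slater-point remark is in the right spirit but you should be sure to check this specific hypothesis when filling in the details.
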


\section{Simulations}\label{sec:Simulations}
Next, we compare our estimator, referred to as the Regularized Weighted Chebyshev (RWC) method, with the Good-Turing (GT) estimator, the WY estimator of~\cite{wu2019chebyshev}, the PJW estimator described in~\cite{pavlichin2017approximate} and the HOSW estimator of~\cite{yi2018data}. We do not compare our method with the estimators introduced in~\cite{valiant2013estimating,han2018local} due to their high computational complexity~\cite{wu2019chebyshev,yi2018data}.


In the first experiment, we evaluate the maximum risk with normalized squared loss of all listed estimators over six different distributions: The uniform distribution with $p_i = \frac{1}{k}$, the Zipf distributions with $p_i\propto i^{-\alpha}$, and $\alpha$ equal to $1.5$, $1$, $0.5$ or $0.25$, and the Benford distribution with $p_i \propto \log(i+1)-\log(i)$. We choose the support sizes for the Zipf and Benford distribution so that the minimum non-zero probability mass is roughly $10^{-6}$. We ran the estimator $100$ times to calculate the risk. For both approximation-based estimators, we fix $c_0$ to be $0.558$. With our proposed method, we solved~\eqref{sqlosseq5} on a grid with $s=1000$ points on the proposed interval $[\frac{n}{k},6.5L]$. For the estimator described in~\cite{wu2019chebyshev}, we set $c_1 = 0.5$ according to the recommendation made in the cited paper. The GT method used for comparison first estimates the total probability of seen symbols (e.g., sample coverage) according to $\hat{C} = 1-\frac{h_1}{n},$ and then estimates the support size according to $\hat{S}_{GT} = \frac{\hat{S}_{\text{c}}}{\hat{C}}$; here, $\hat{S}_{\text{c}}$ stands the (naive) counting estimator. Note that $h_1$ equals the number of different alphabet symbols observed only once in the $n$ samples.

Figure~\ref{fig:maxresult} shows that the RWC estimator has a significantly better worst case performance compared to all other methods when tested on the above described collection of distributions, provided that $n\geq 0.2k$. Also, both RWC and WY estimators have significantly better error exponents compared to the GT, PJW and HOSW estimators. The GT and PJW estimators perform better than RWC if $n \lesssim \frac{k}{\log k},$ which confirms the results of Theorem~\ref{thm:wu} and of our Proposition~\ref{prop:beatwu}.

In the second set of experiments, we change the normalization from $(1/k)^2$ to $(1/S)^2$ as was also done in~\cite{yi2018data}. The RWC-S estimator minimizes an upper bound on the worst case risk $\E\left(\frac{\hat{S}-S}{S}\right)^2$. A detailed description of this algorithm and intuitive explanation why it may outperform the RWC method is relegated to the Appendix. Figures~\ref{fig:maxresult_S} (i)-(l) illustrate that our RWC-S estimator significantly outperforms all other estimators with respect to the worst case risk. Moreover, the RWC-S estimator also outperforms all known estimators on almost all tested distributions.

\begin{figure*}[!htb]
  \centering
  \subfigure[Weighted vs classical Chebyshev approximation. \label{fig:BiasforexpW}]{\includegraphics[width=0.245\linewidth]{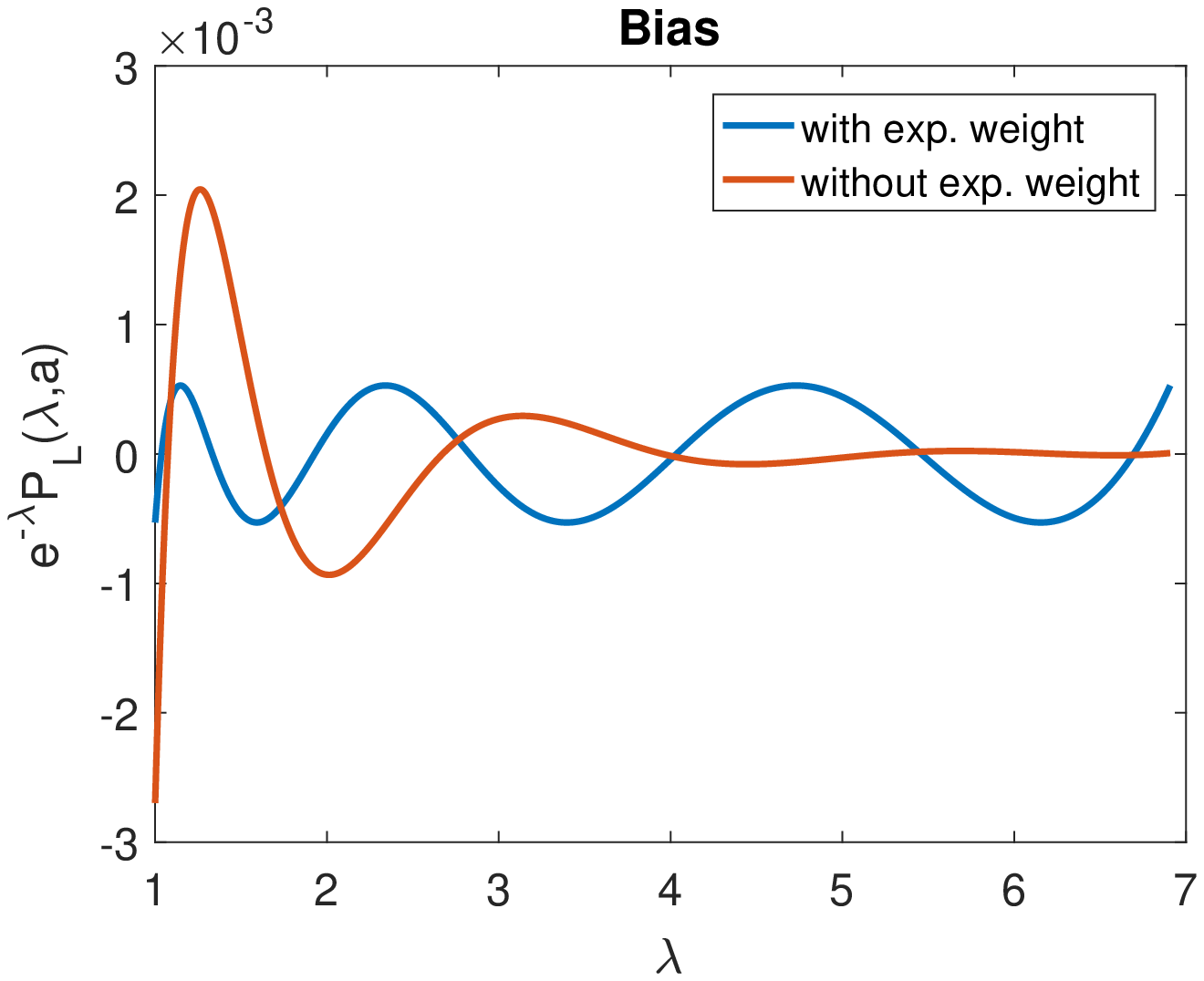}}
  \subfigure[The coefficients of $g_L$. \label{fig:gLcompare}]{\includegraphics[width=0.245\linewidth]{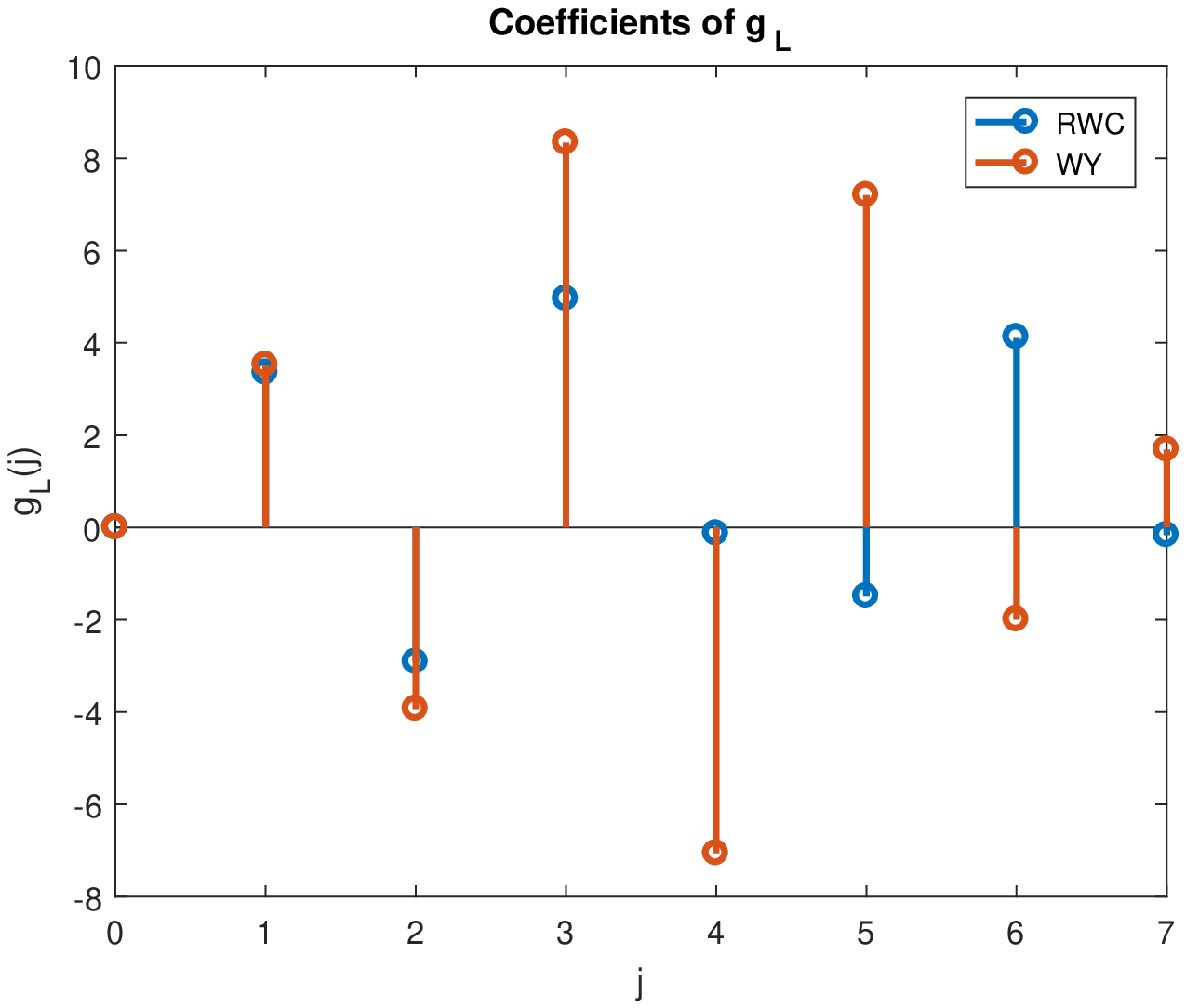}}
  \subfigure[Worst case MSE$/k^2$. \label{fig:maxresult}]{\includegraphics[width=0.245\linewidth]{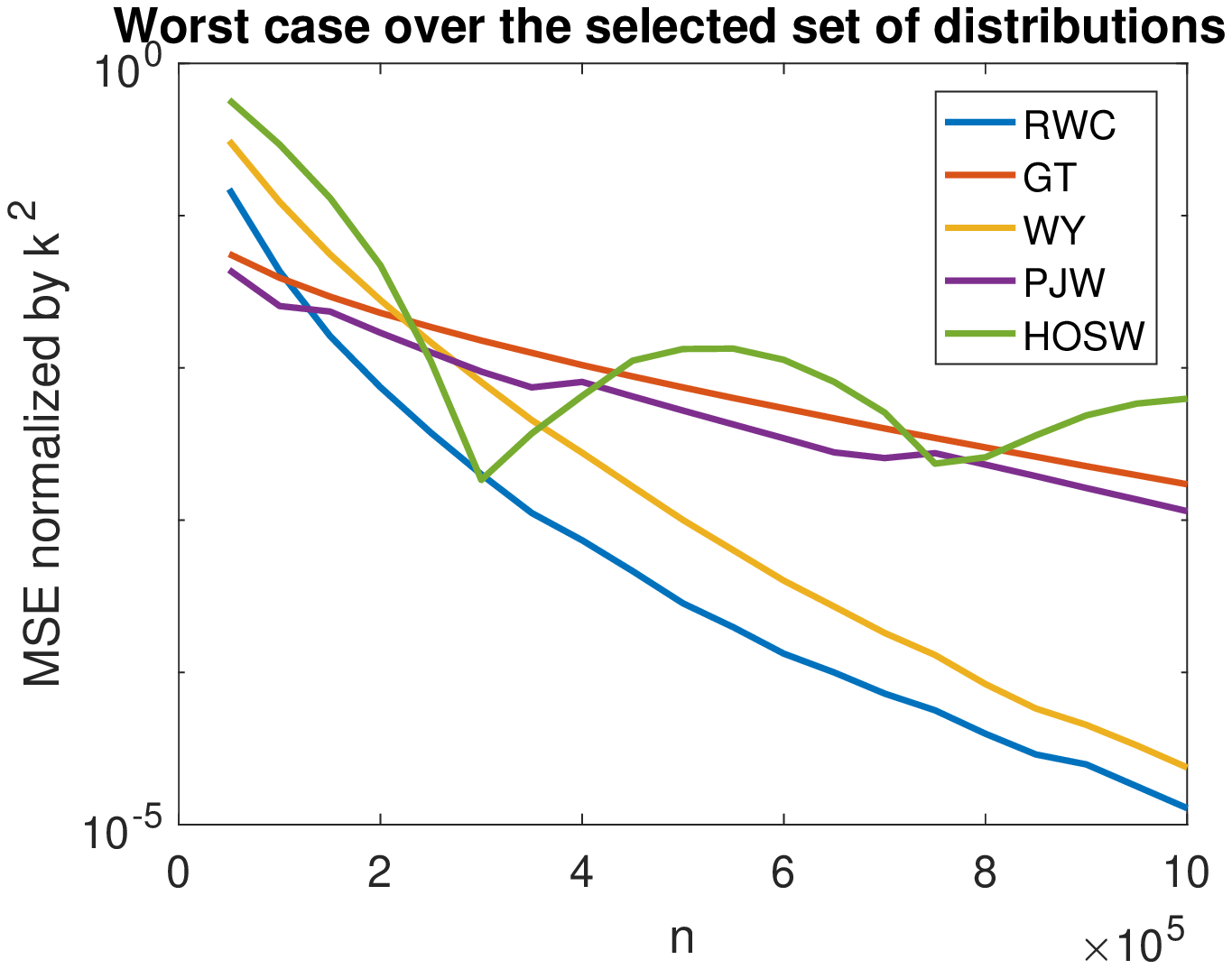}}
  \subfigure[Worst case MSE$/S^2$. \label{fig:maxresult_S}]{\includegraphics[width=0.245\linewidth]{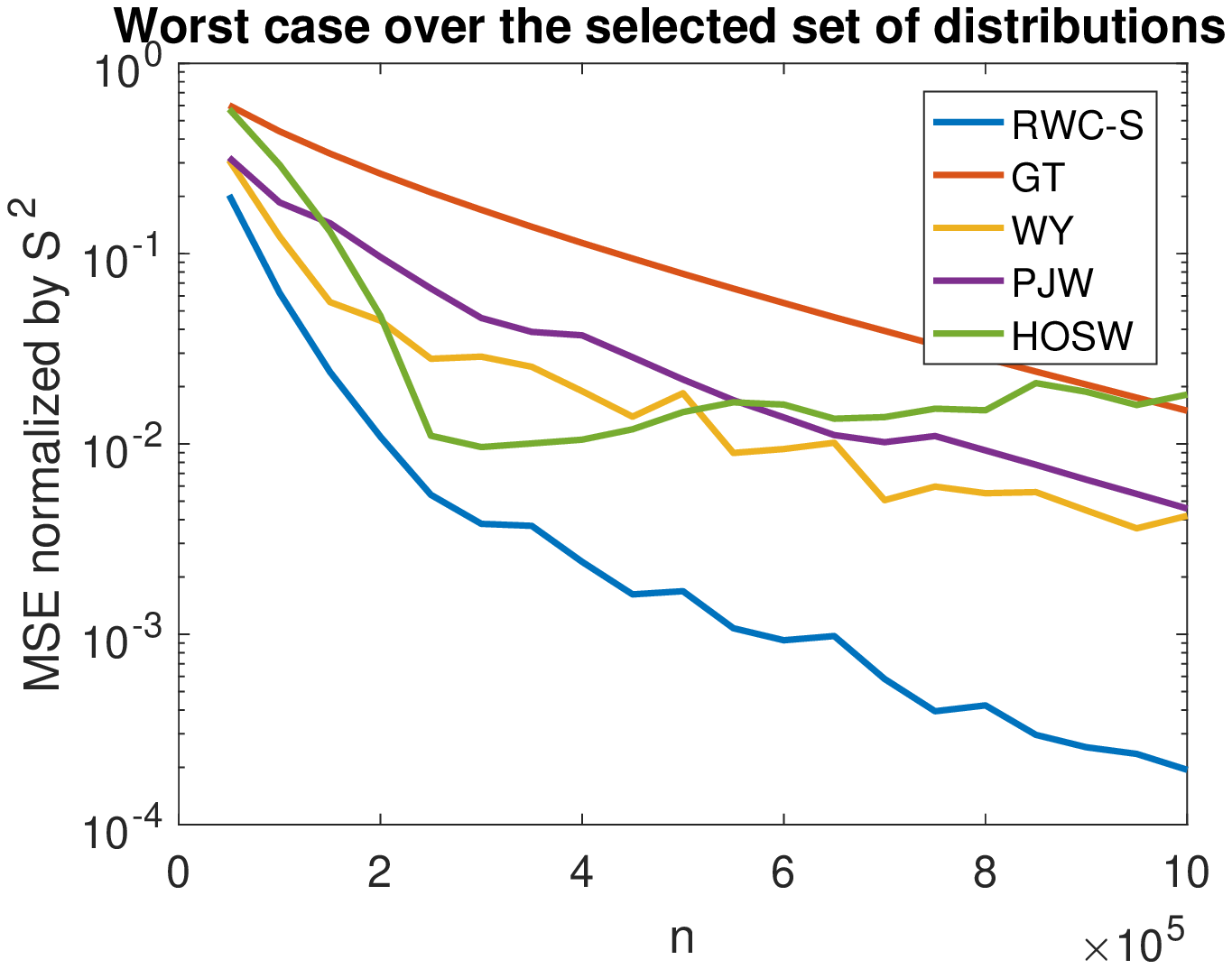}}
  \subfigure[Uniform distribution.]{\includegraphics[width=0.245\linewidth]{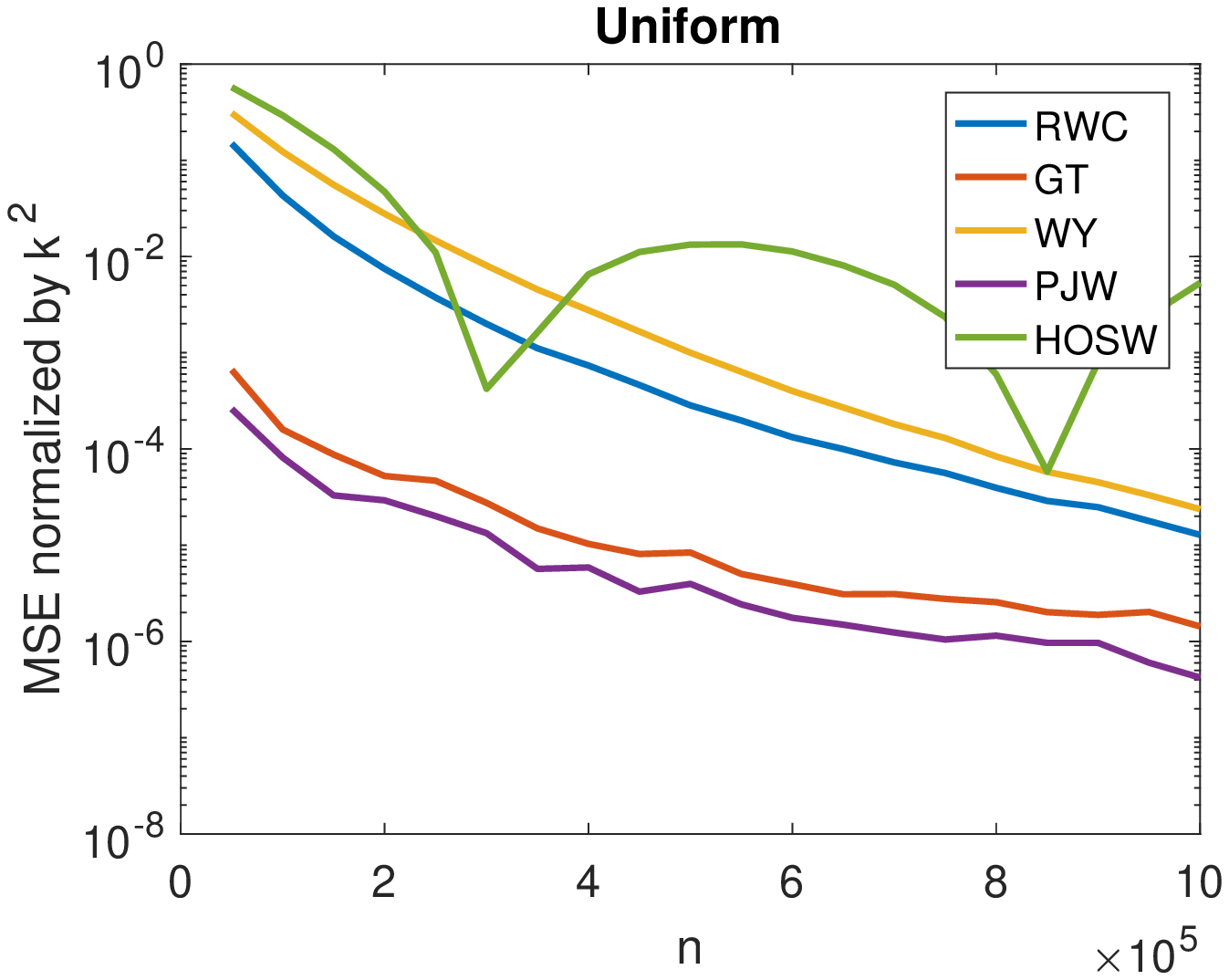}}
  \subfigure[Benford distribution.]{\includegraphics[width=0.245\linewidth]{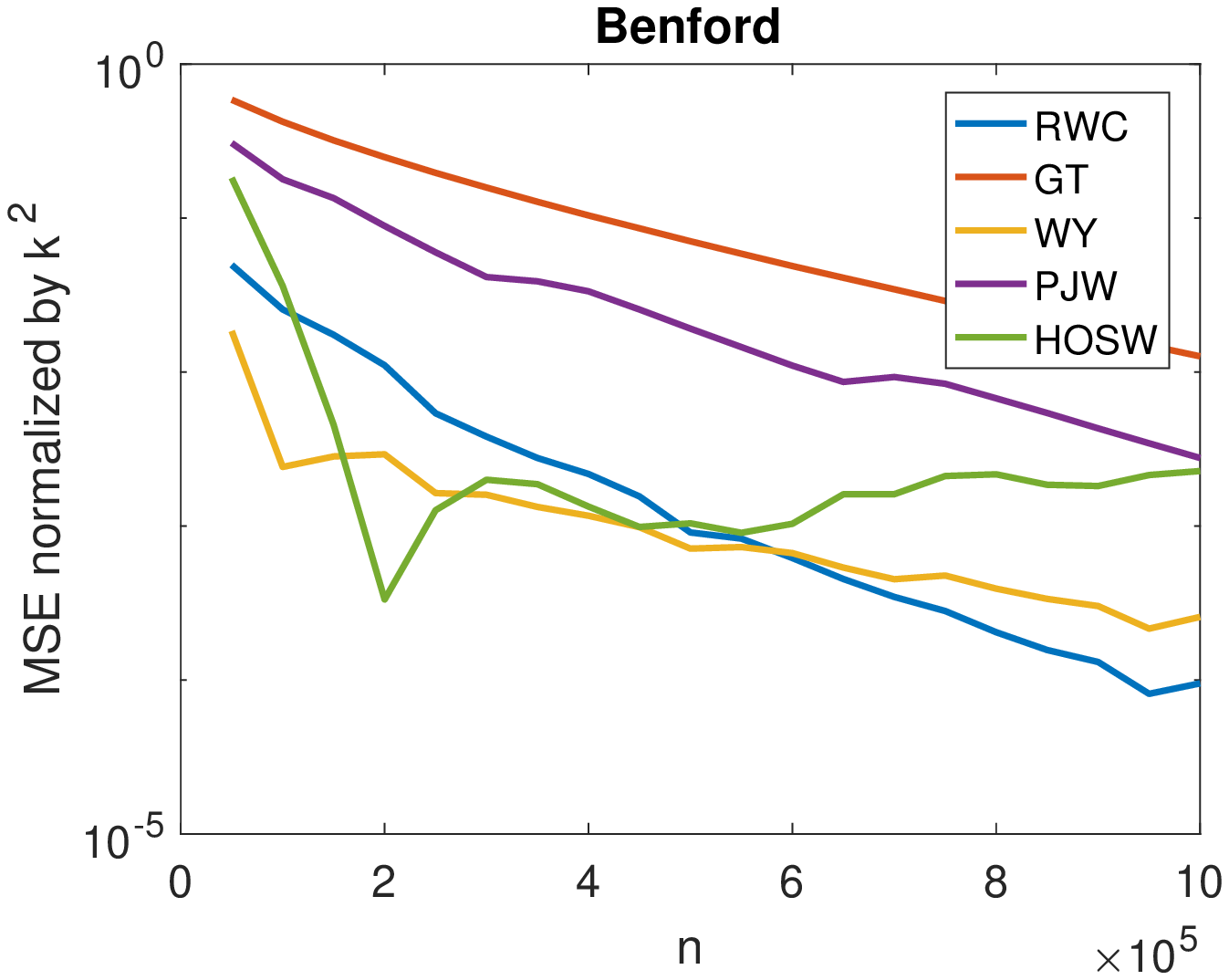}}
  \subfigure[Zipf$(1.5)$ distribution.]{\includegraphics[width=0.245\linewidth]{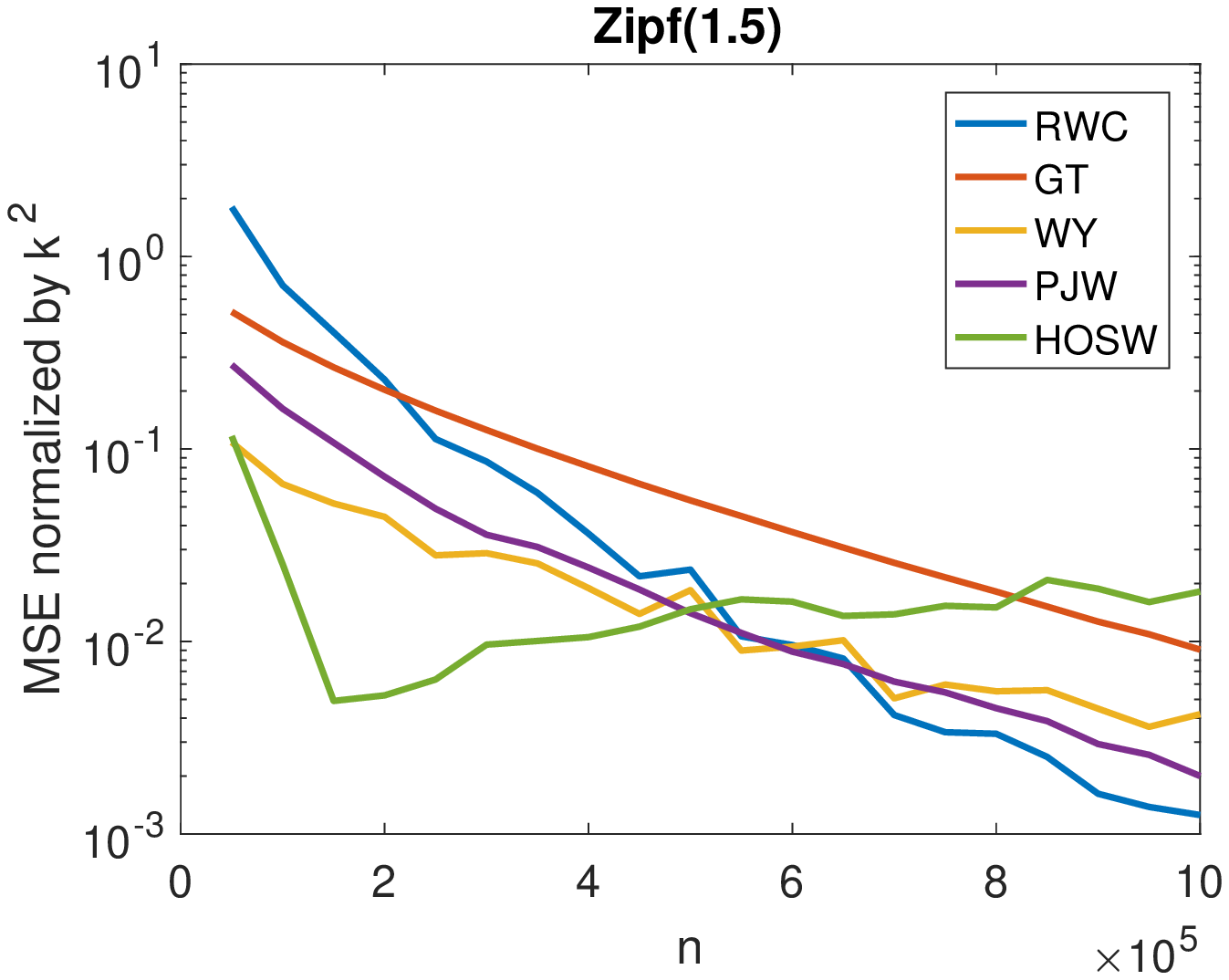}}
  \subfigure[Zipf$(0.25)$ distribution.]{\includegraphics[width=0.245\linewidth]{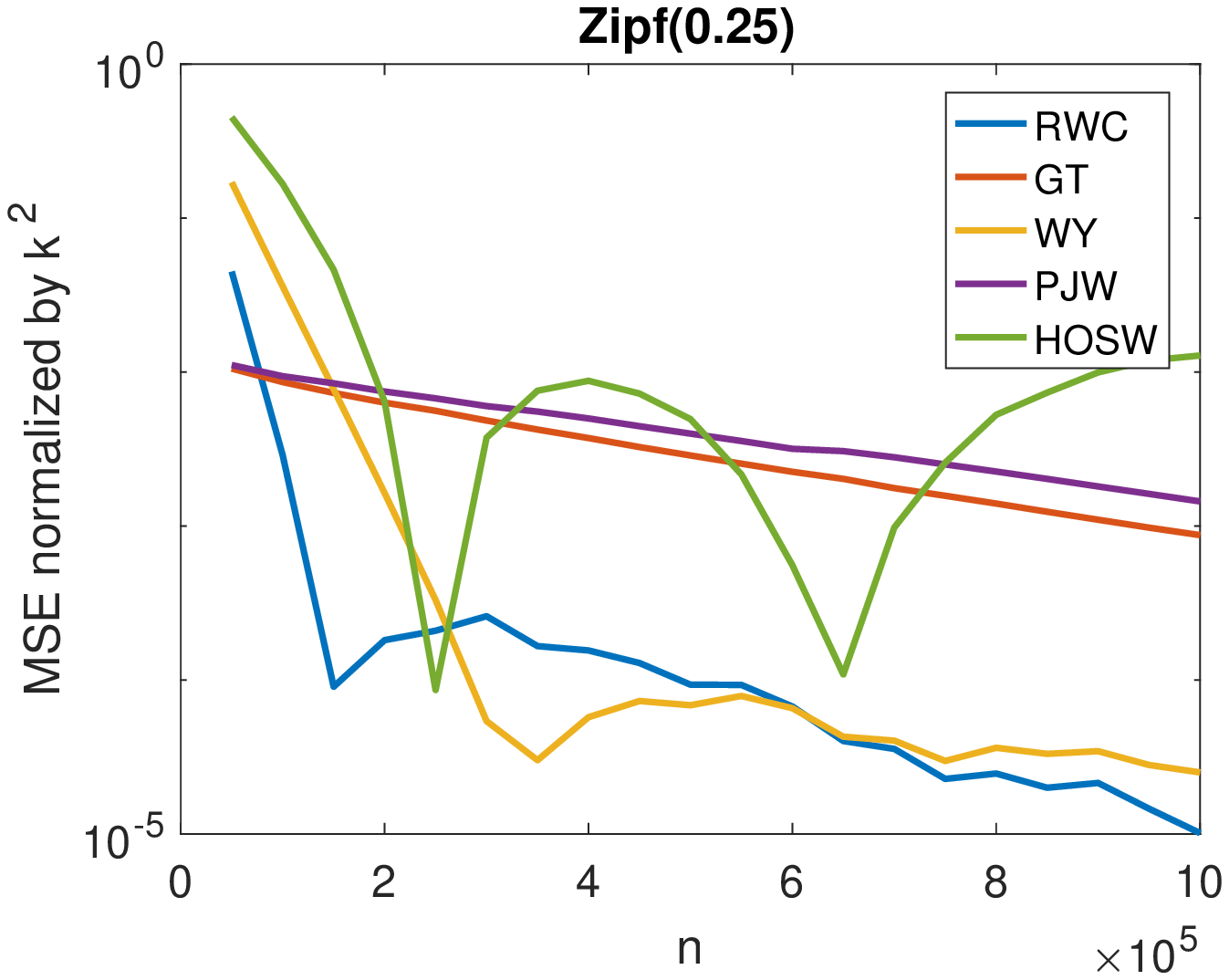}}
  \subfigure[Uniform distribution.]{\includegraphics[width=0.245\linewidth]{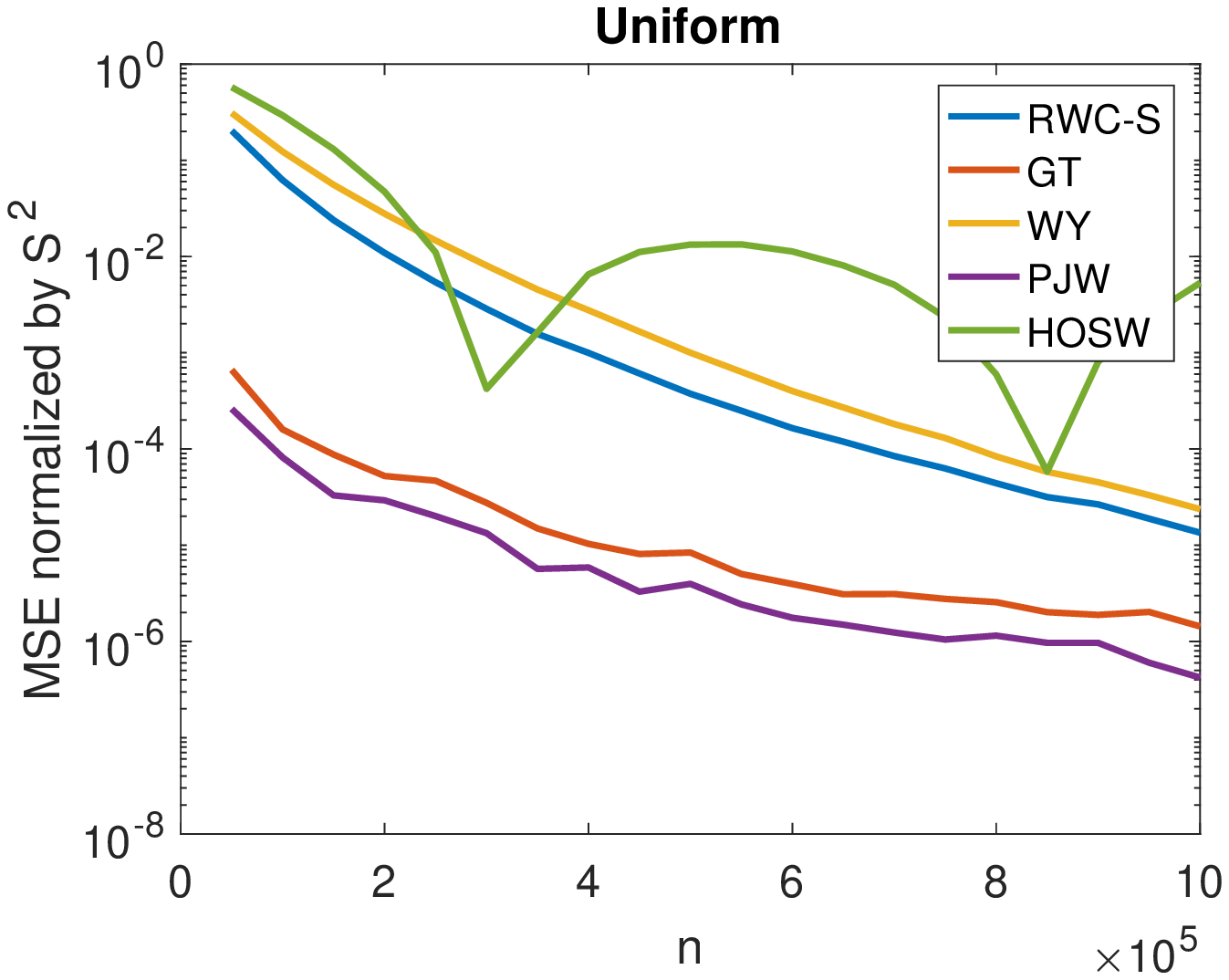}}
  \subfigure[Benford distribution.]{\includegraphics[width=0.245\linewidth]{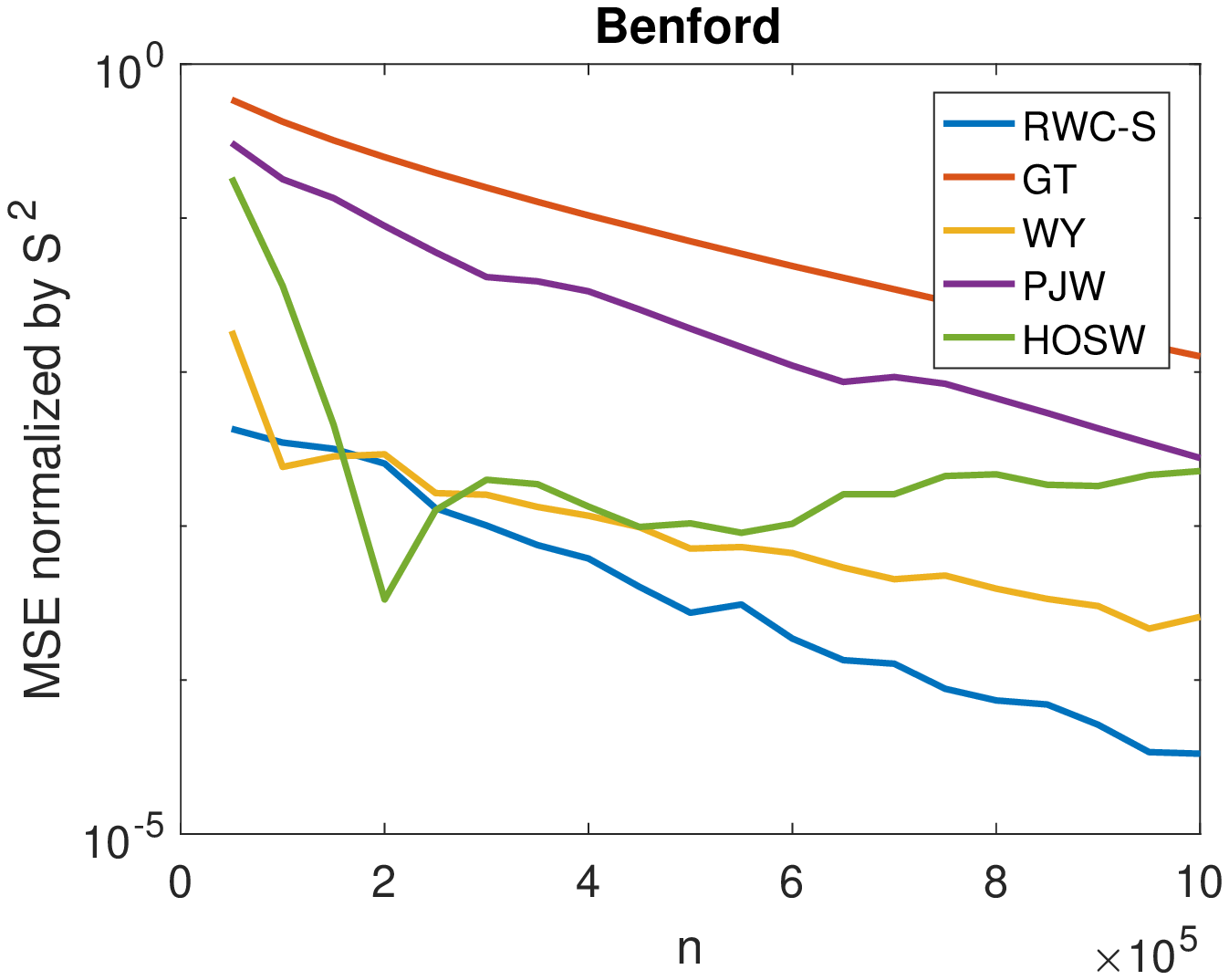}}
  \subfigure[Zipf$(1.5)$ distribution.]{\includegraphics[width=0.245\linewidth]{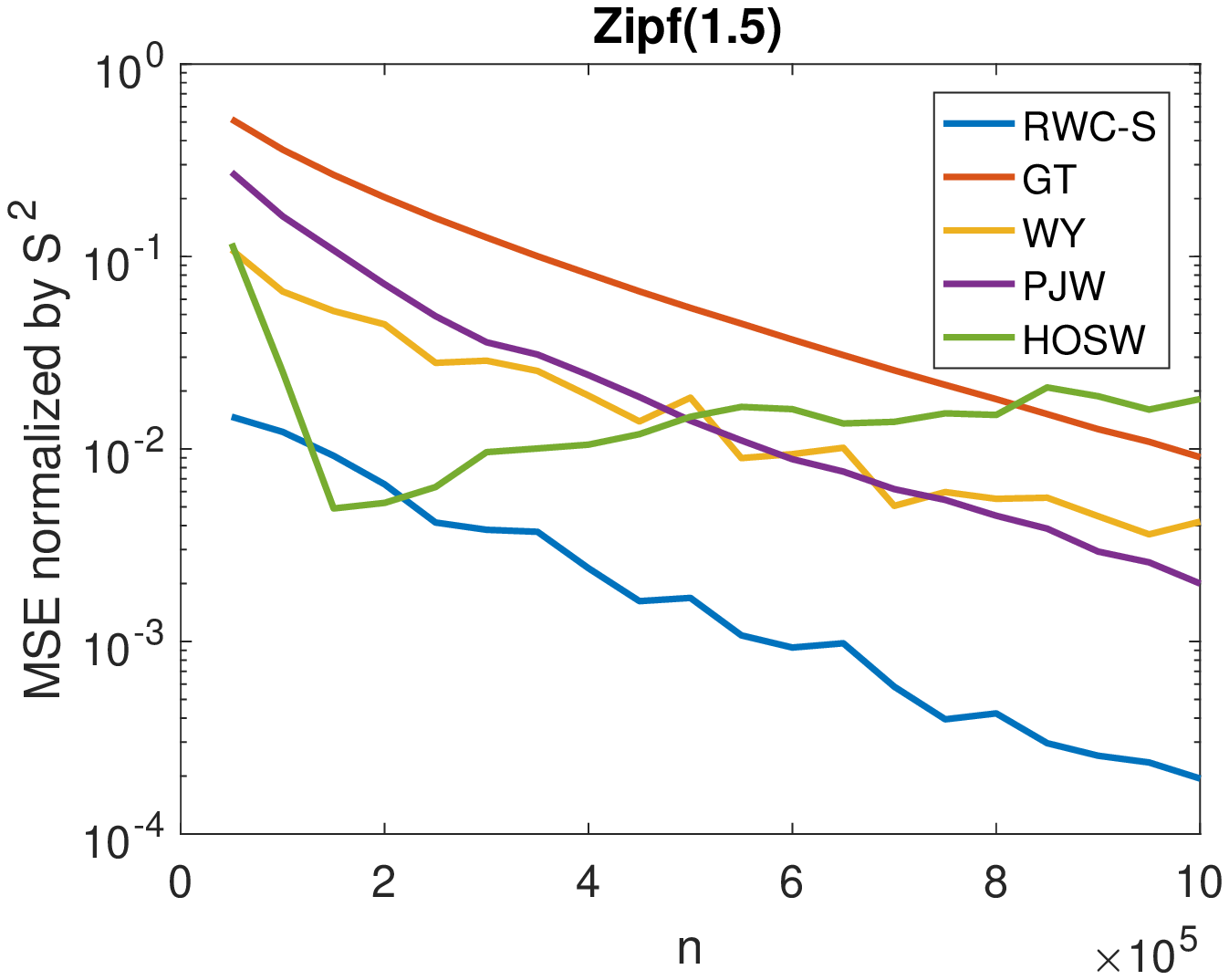}}
  \subfigure[Zipf$(0.25)$ distribution.]{\includegraphics[width=0.245\linewidth]{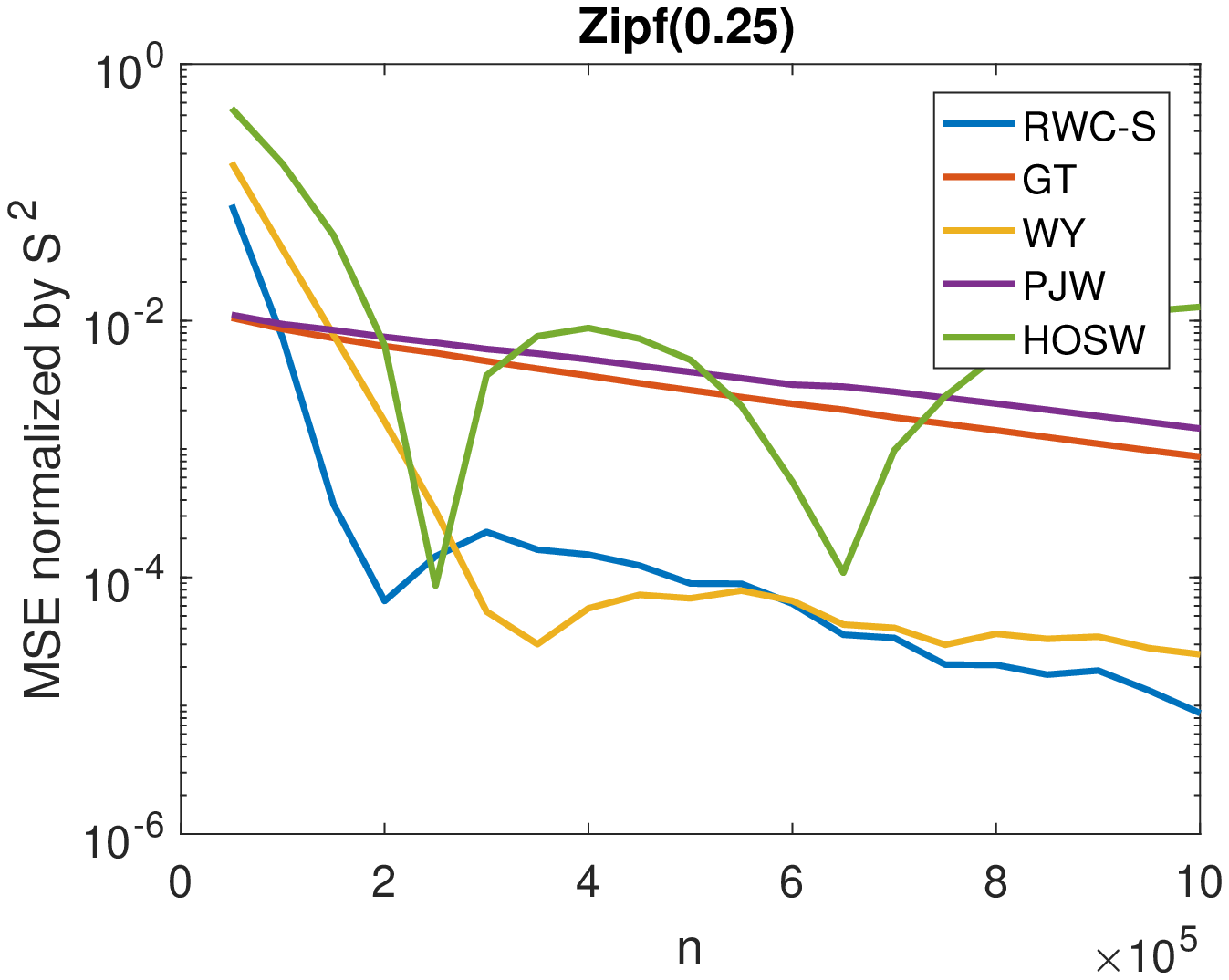}}
  \subfigure[Uniform distribution.]{\includegraphics[width=0.245\linewidth]{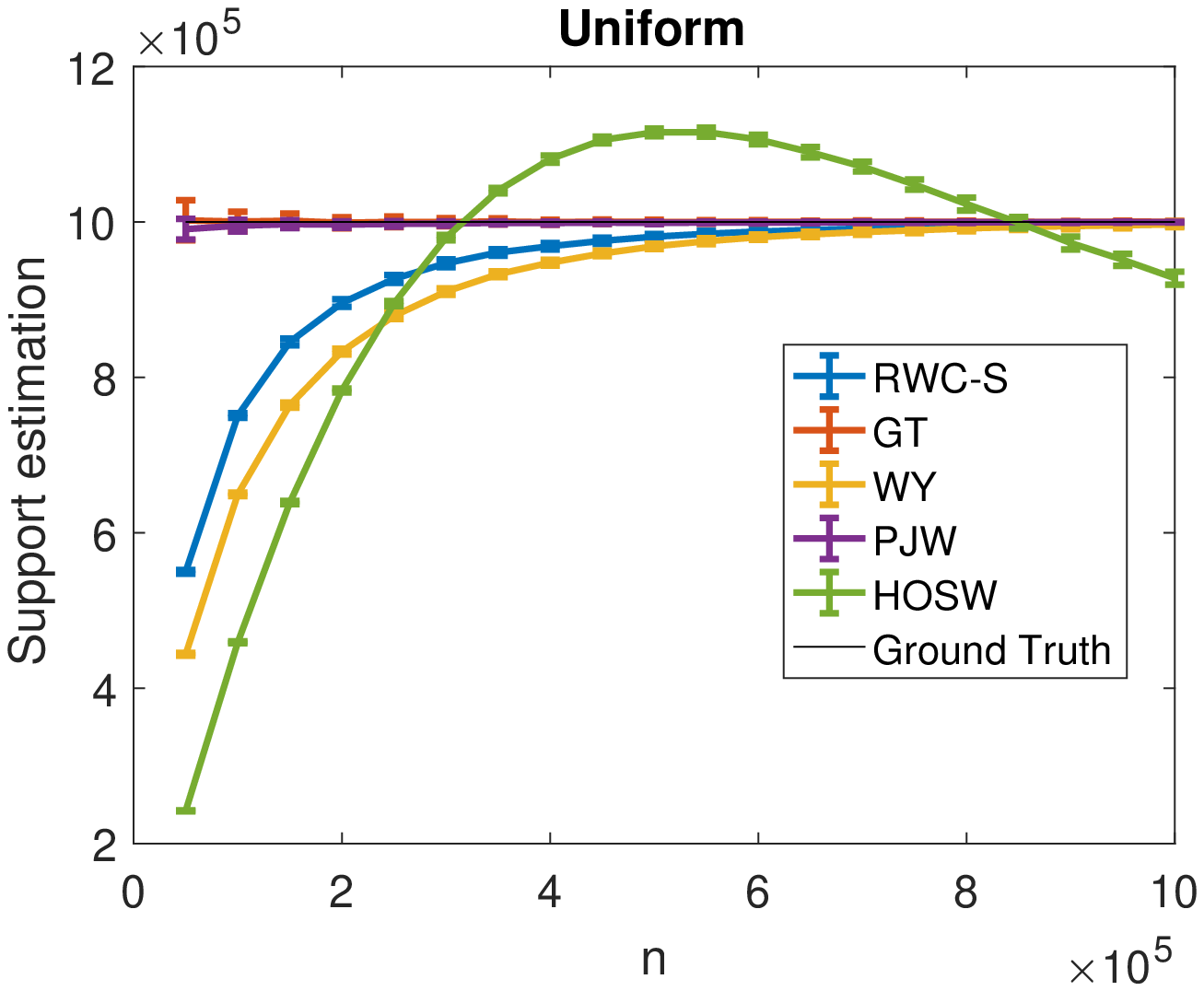}}
  \subfigure[Benford distribution.]{\includegraphics[width=0.245\linewidth]{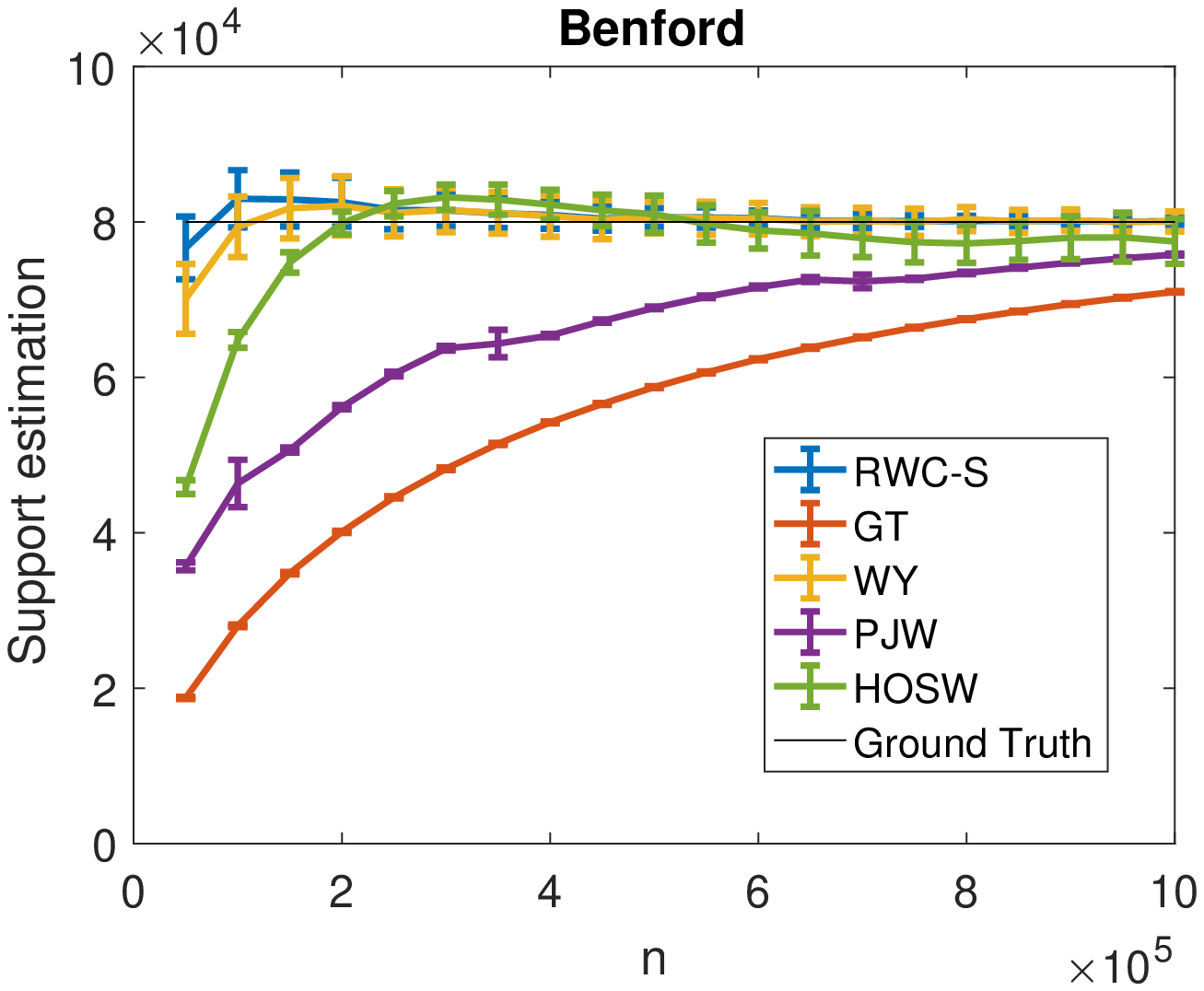}}
  \subfigure[Zipf$(1.5)$ distribution.]{\includegraphics[width=0.245\linewidth]{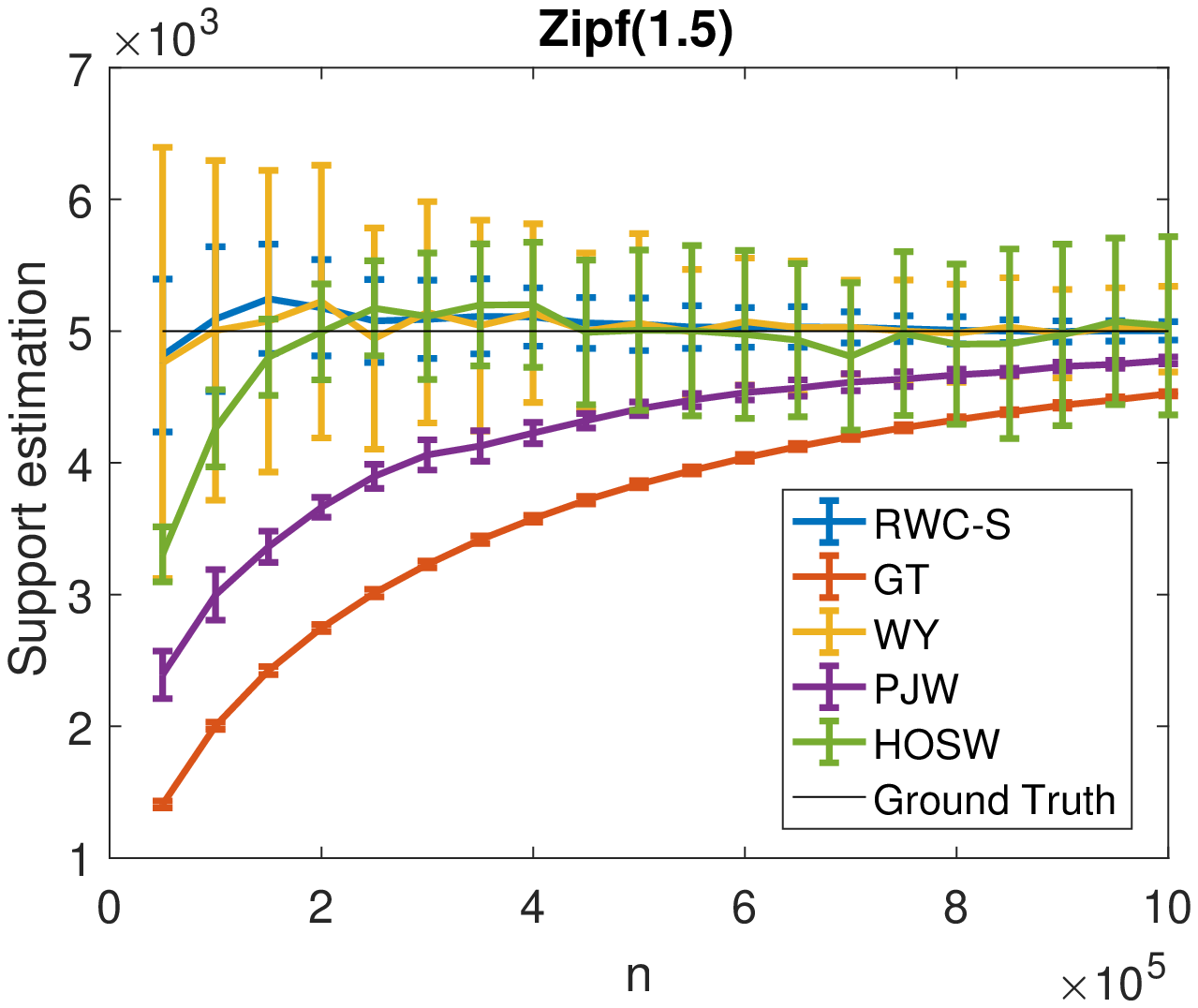}}
  \subfigure[Zipf$(0.25)$ distribution.]{\includegraphics[width=0.245\linewidth]{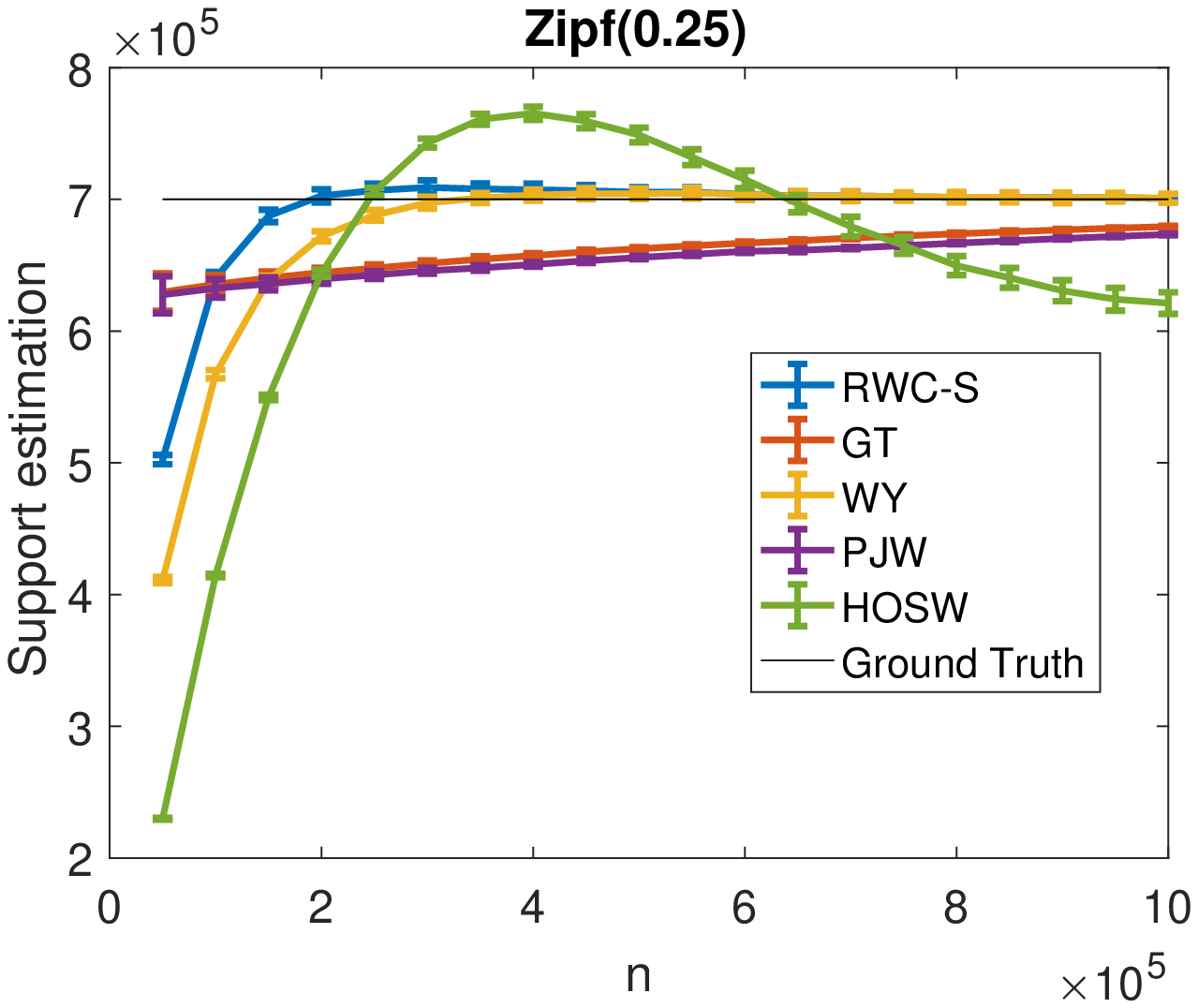}}
  \vspace{-0.2cm}
  \caption{ Figure (a) illustrates the importance of exponential weights for reducing the bias. The WY estimator aims to minimize the "worst case" risk which arises for small values of $\lambda$ (close to uniform). As $\lambda$ increases, the variance dominate the bias (i.e. Figure (o)). Figure (b) gives an example for the coefficients $g_L$ of the  RWC and WY estimators. We set $n = k = 10^6$ and $c_0 = 0.558$ for both Figure (a) and (b). The remaining figures plot the MSE of all estimators considered. The $y$-axis is on the log scale. Figures (m)-(p) show the mean and variance of the estimators for the tested distributions.}\label{fig:eachdist}
\end{figure*}
Considering the exponential weighting term when designing the estimator leads to significant performance improvements in the bias of the estimator, as illustrated in Figure~\ref{fig:BiasforexpW}. There, we see that a classical Chebyshev approximation introduces a larger bias then the weighted one whenever the underlying distribution is close to uniform (i.e., when $\lambda \sim \frac{n}{k} = 1$). This phenomenon persists even when regularizations is taken into account, as shown in~Figure~\ref{fig:eachdist}(m). Furthermore, regularized weighted Chebyshev estimators exhibit a different behavior than that observed for classical approximations: Figure~\ref{fig:gLcompare} provides an example for the values of $g_L$ used in RWC that do not alternate in sign. Nevertheless, these and all other estimators tested run for $\leq 1$s on sample sizes $\approx 10^6$.

One common approach to testing support estimators on real data is to estimate the number of distinct words in selected books. The authors of~\cite{wu2019chebyshev,valiant2013estimating} applied their estimators to Hamlet in order to estimate the number of words used in the play. Here, we repeat this experiment not only on Hamlet but Othello, Macbeth and King Lear as well. The comparative results for different estimators, following the setup in~\cite{wu2019chebyshev}, are presented in Figure~\ref{fig:Hamlet}. In the experiments, we randomly sampled words in the text with replacement and used the obtained counts to estimate the number of distinct words. For simplicity, we set $k$ to equal the total number of words. For example, as the number of total words in Hamlet equals $30,364,$ we set $k = 30,364$.
\begin{figure*}[!htb]
  \centering
  \subfigure[]{\includegraphics[width=0.245\linewidth]{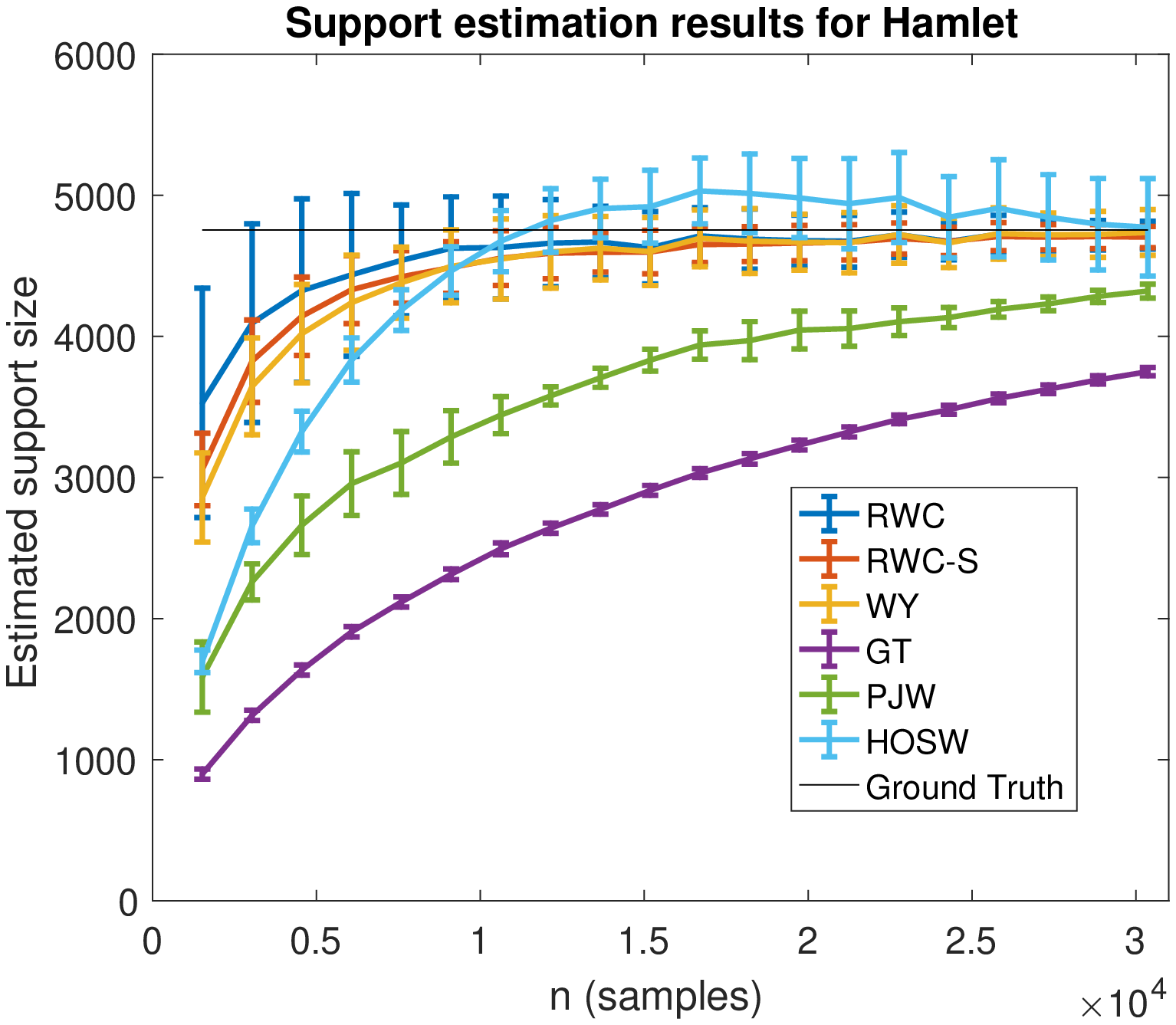}}
  \subfigure[]{\includegraphics[width=0.245\linewidth]{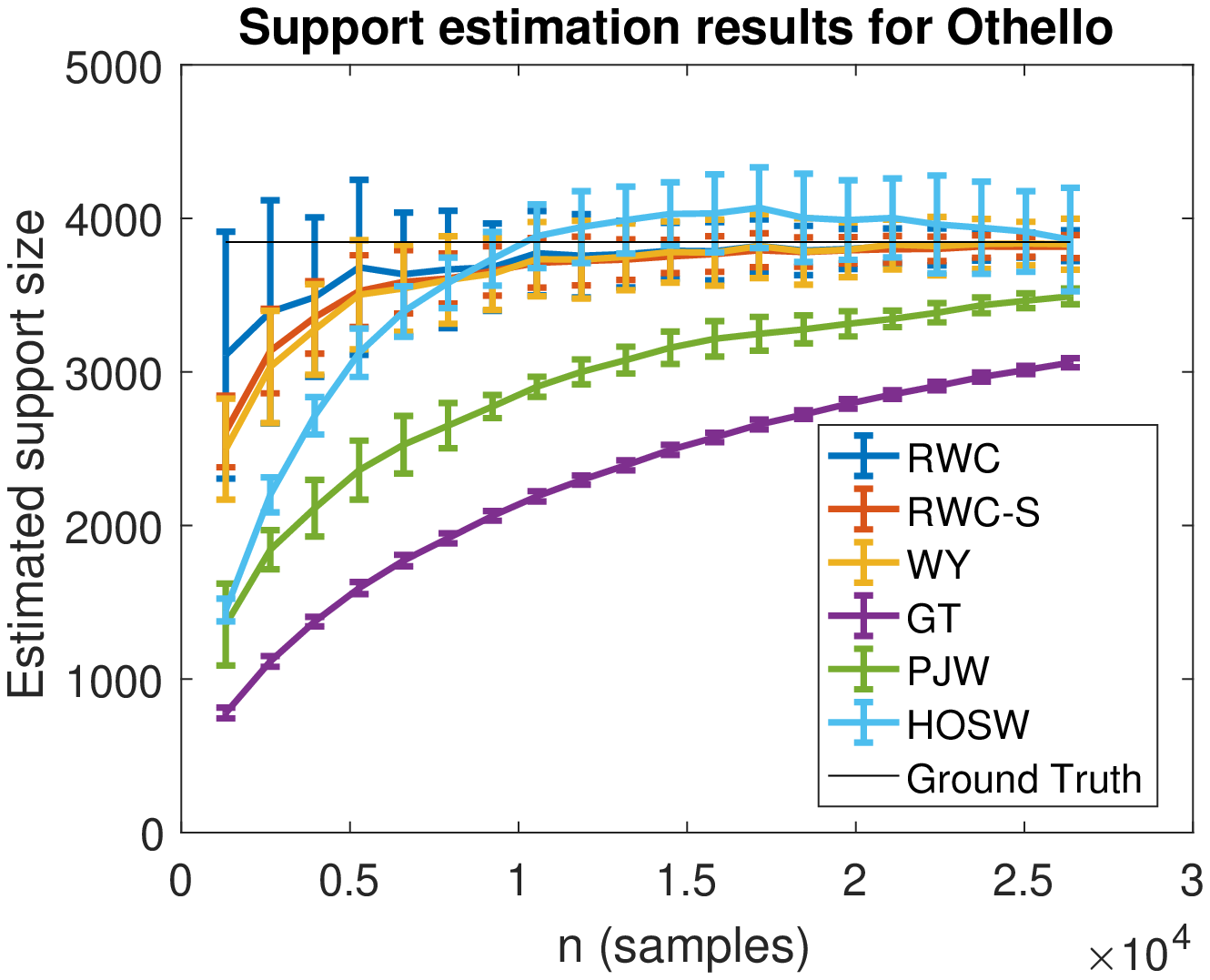}}
  \subfigure[]{\includegraphics[width=0.245\linewidth]{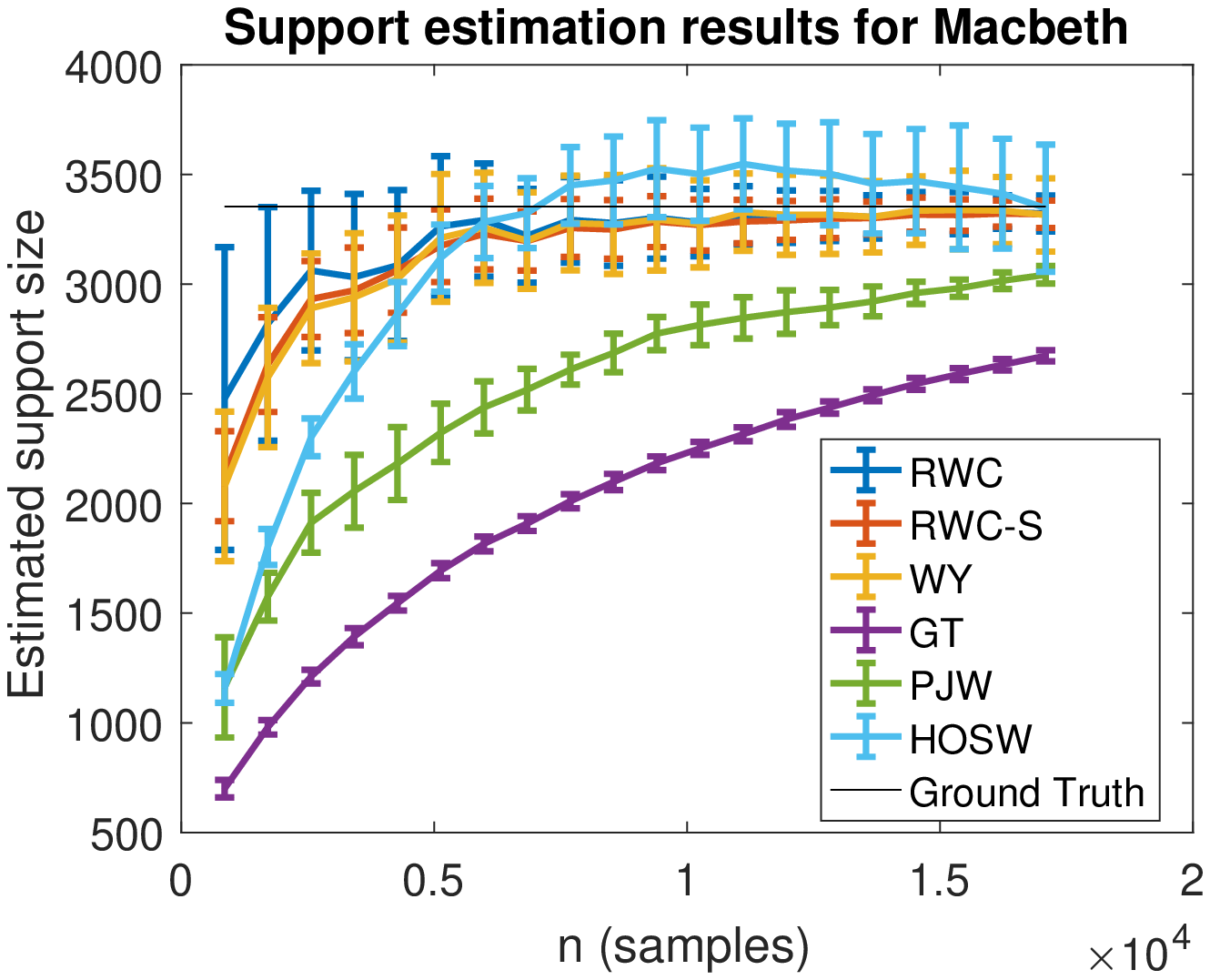}}
  \subfigure[]{\includegraphics[width=0.245\linewidth]{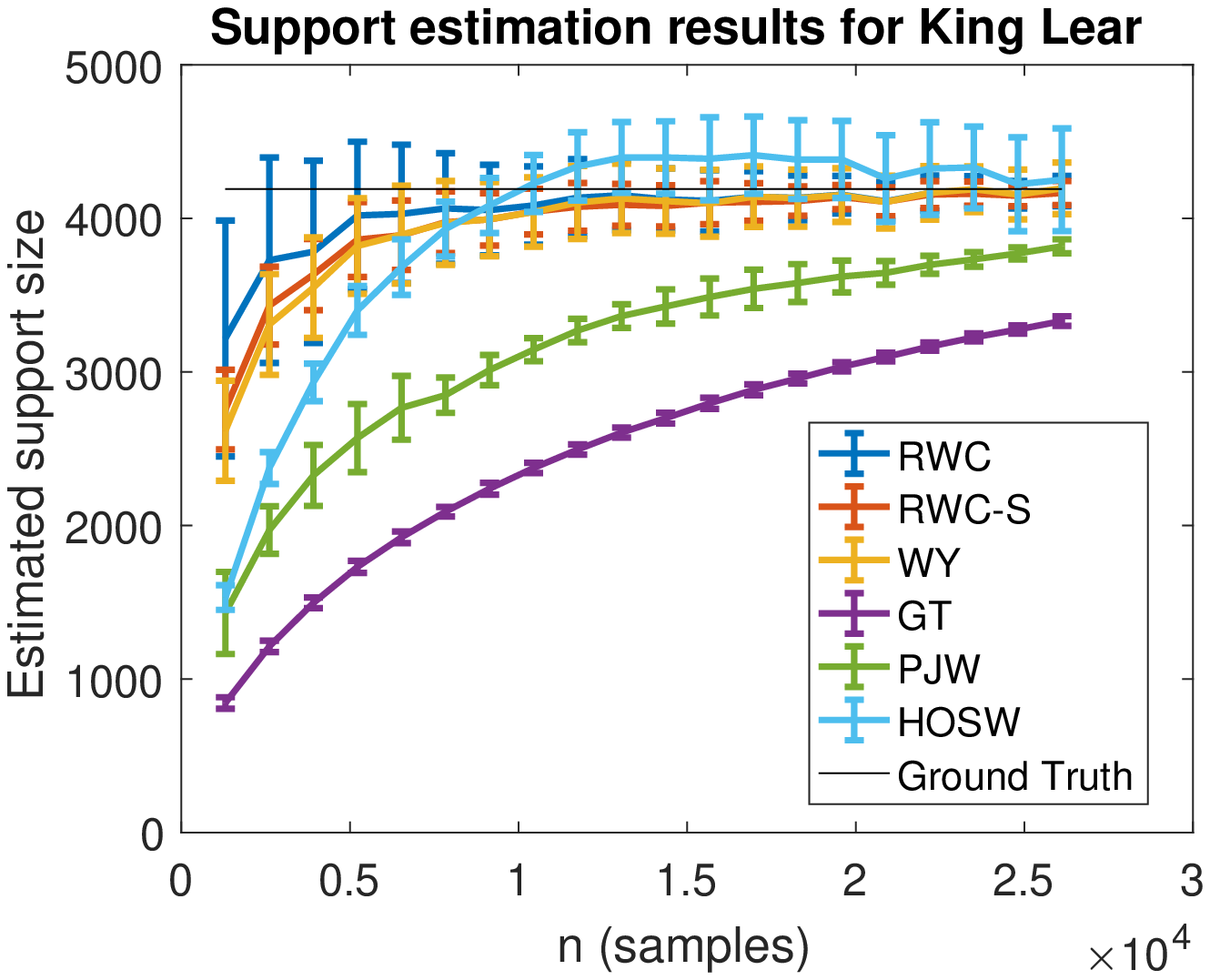}}\\
  \subfigure[]{\includegraphics[width=0.245\linewidth]{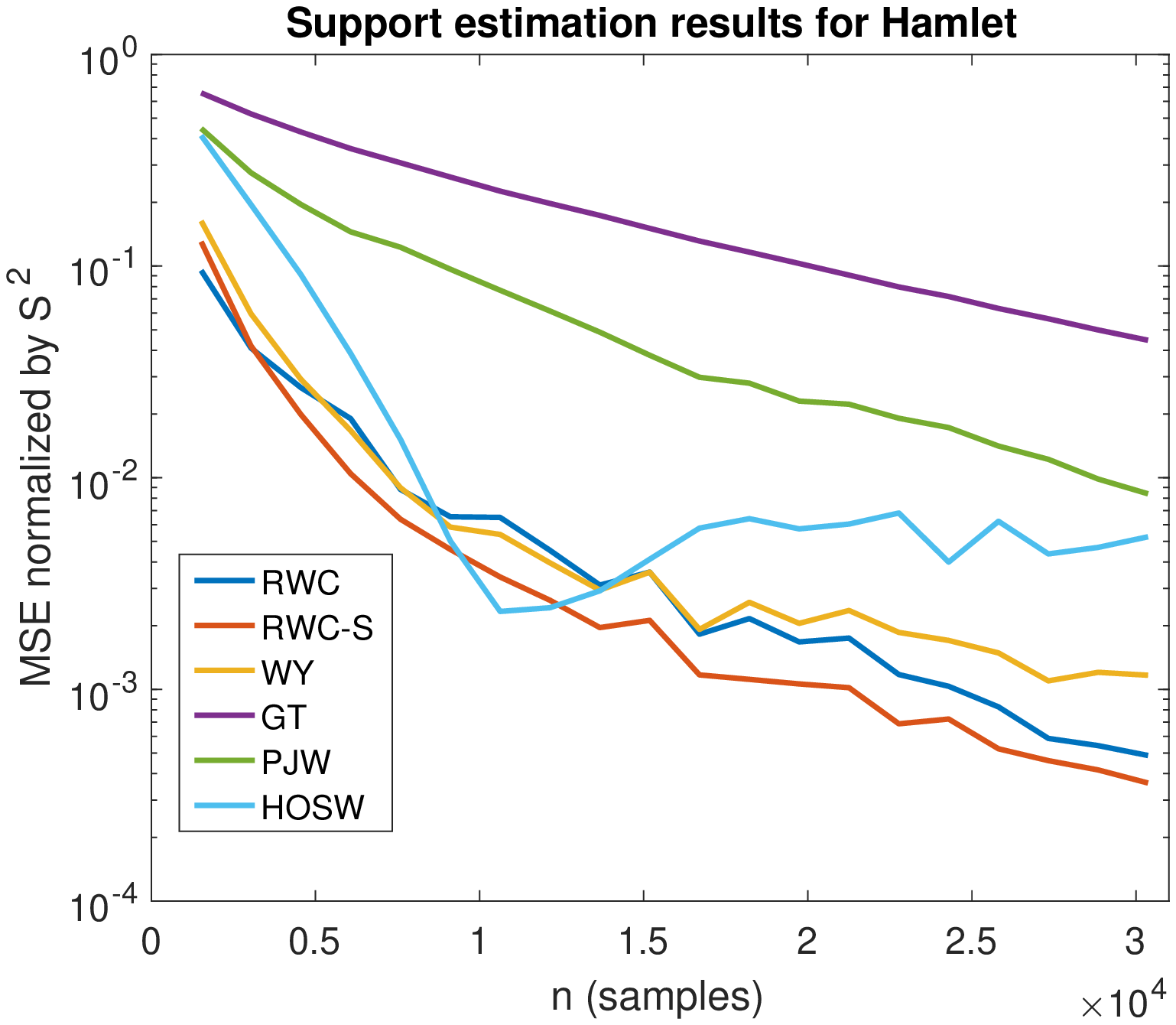}}
  \subfigure[]{\includegraphics[width=0.245\linewidth]{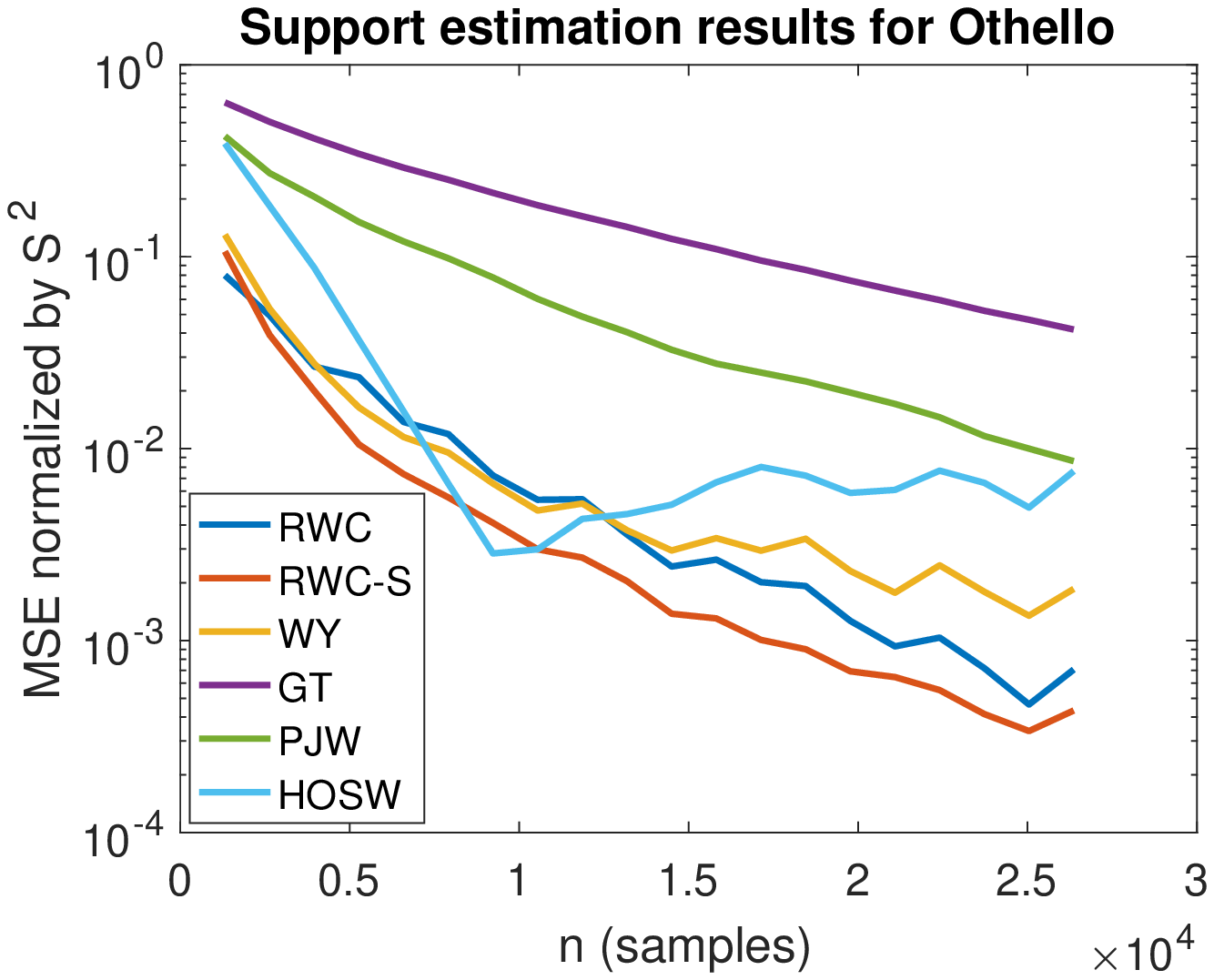}}
  \subfigure[]{\includegraphics[width=0.245\linewidth]{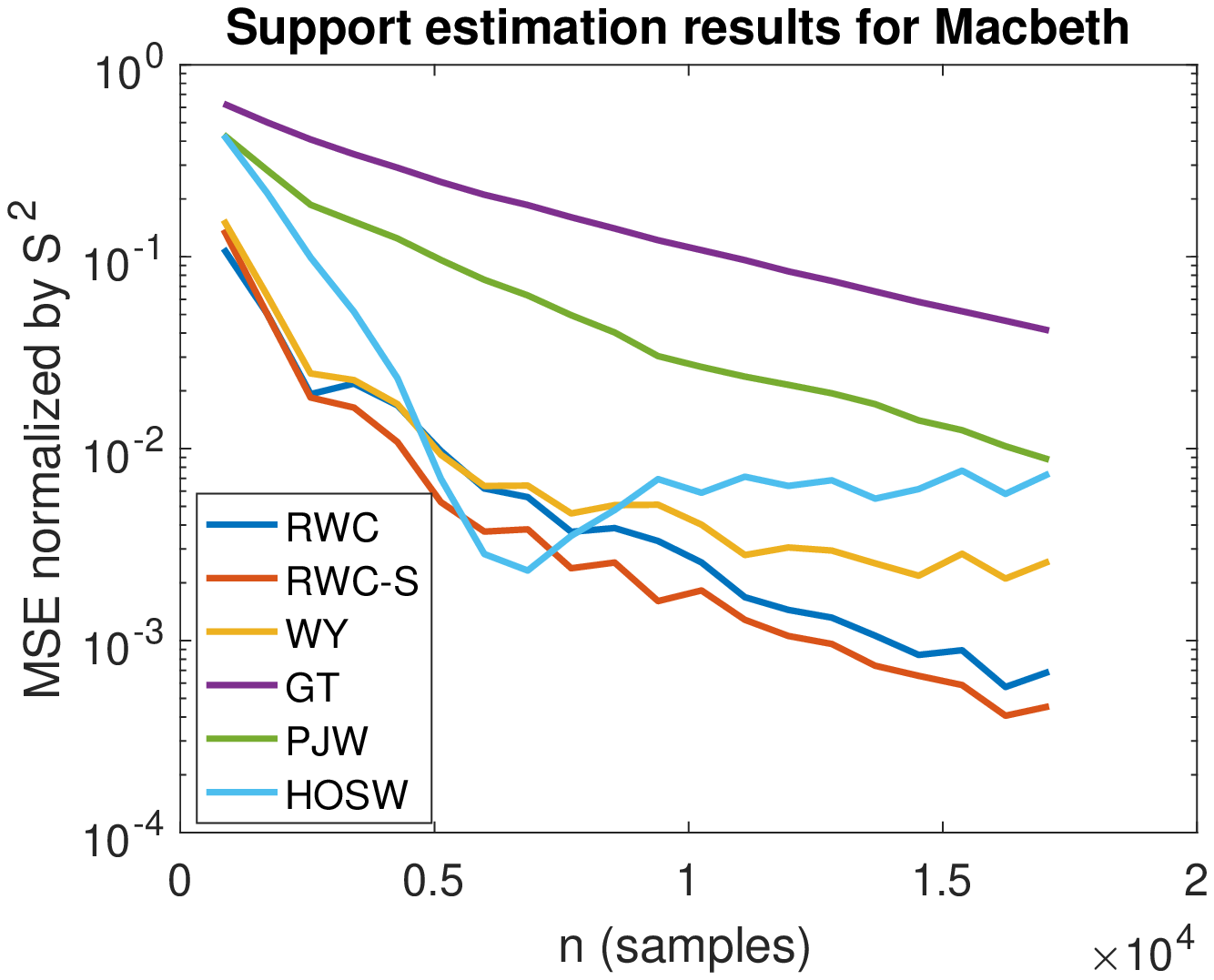}}
  \subfigure[]{\includegraphics[width=0.245\linewidth]{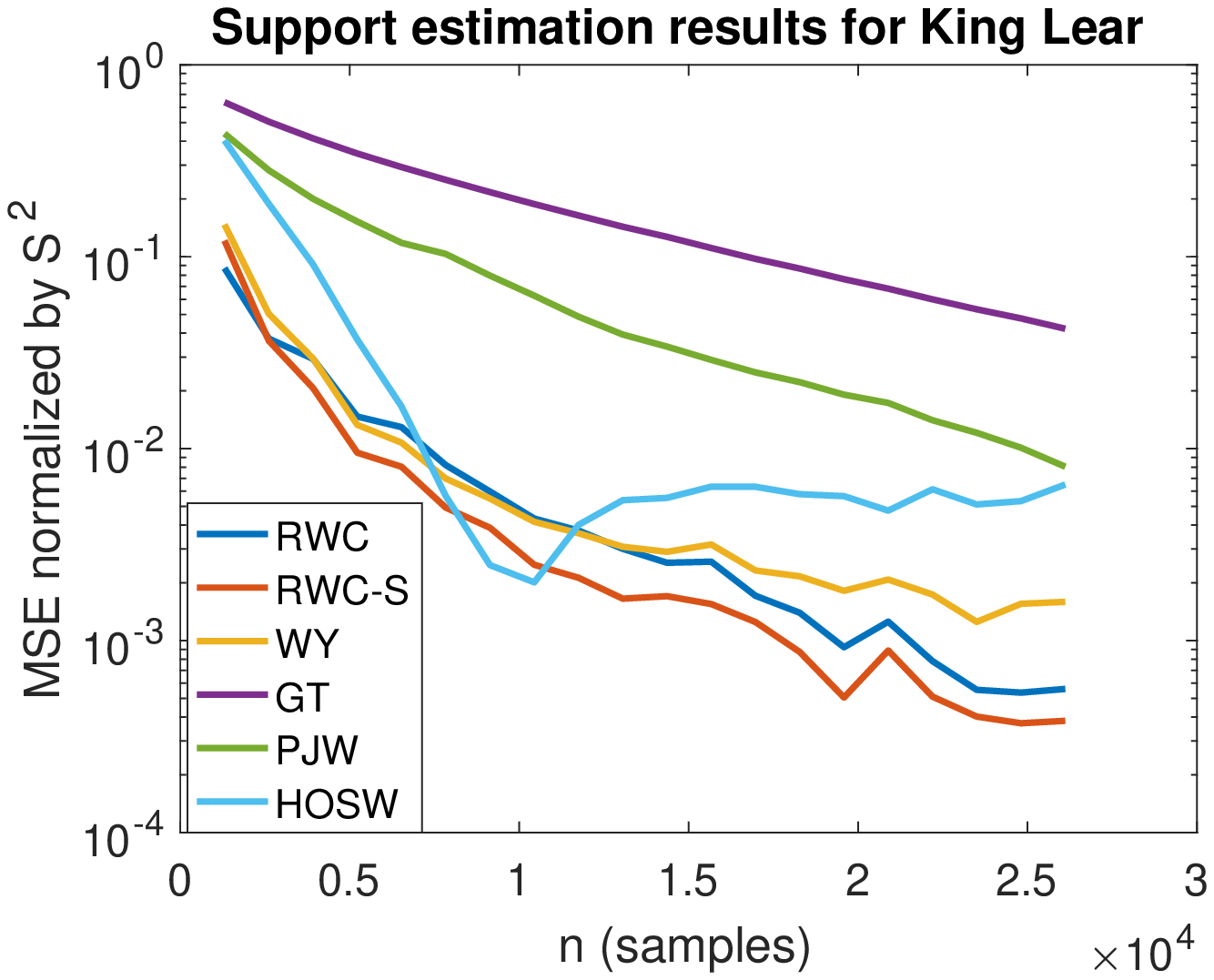}}
  \vspace{-0.2cm}
  \caption{The result is obtained over $100$ independent trials. The first row of figures shows the mean and standard deviation of the estimators, while the second row of figures shows the MSE normalized by $S^2$.}\label{fig:Hamlet}
\end{figure*}
As may be clearly seen, our methods significantly outperforms all other competitive techniques both in terms of convergence rate and the accuracy of the estimated support for all experiments. These results further strengthen the case for the practical of our estimators.

We now turn our attention to a new support estimation problem, concerned with determining the bacterial diversity of the human gut microbiome. Although it is known that human guts hosts trillions of bacterial cells~\cite{sender2016revised}, very little effort has been placed to rigorously estimate the actual number of different bacterial genera in the gut. To address this problem, we retrieved $1374$ human gut microbiome datasets from the NIH Human Gut Microbiome~\cite{peterson2009nih} and the America Gut Microbiome websites (http://americangut.org). To determine which bacterial species are presented in the samples, we ran Kraken~\cite{wood2014kraken}, a taxonomy profiler, on each dataset, using a library of size $8$ GB. This lead to $n = 7,415,847$ samples, which we used for obtaining the bacterial genera sample histograms (depicted in Figure~\ref{fig:guts}), and to specify $k=n$. The obtained estimates are listed in Table~\ref{tab:guts} below.
\begin{figure}[!htb]
\centering
  \subfigure[Entire histogram]{\includegraphics[width=0.32\linewidth]{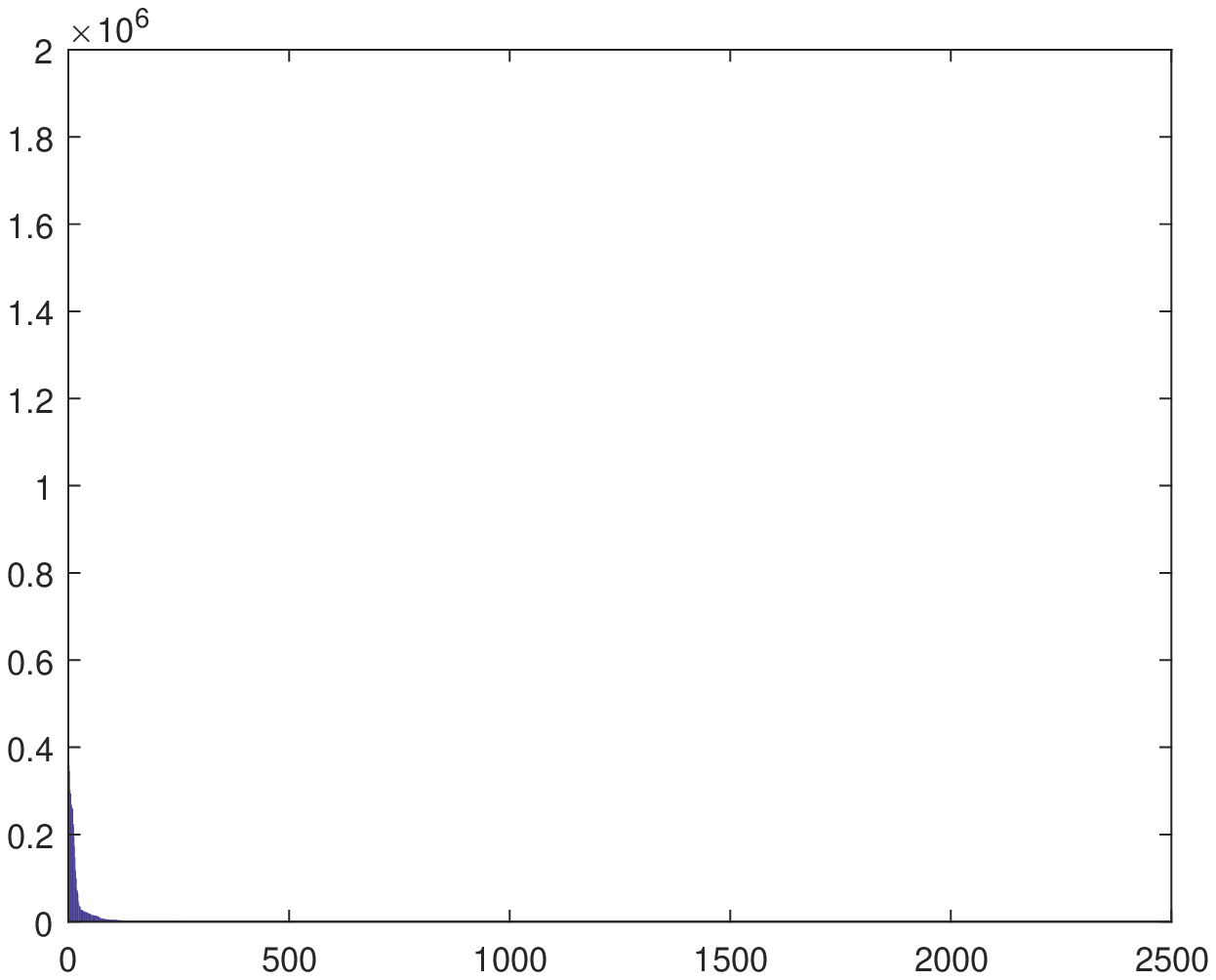}}
  \subfigure[Top $100$ ]{\includegraphics[width=0.32\linewidth]{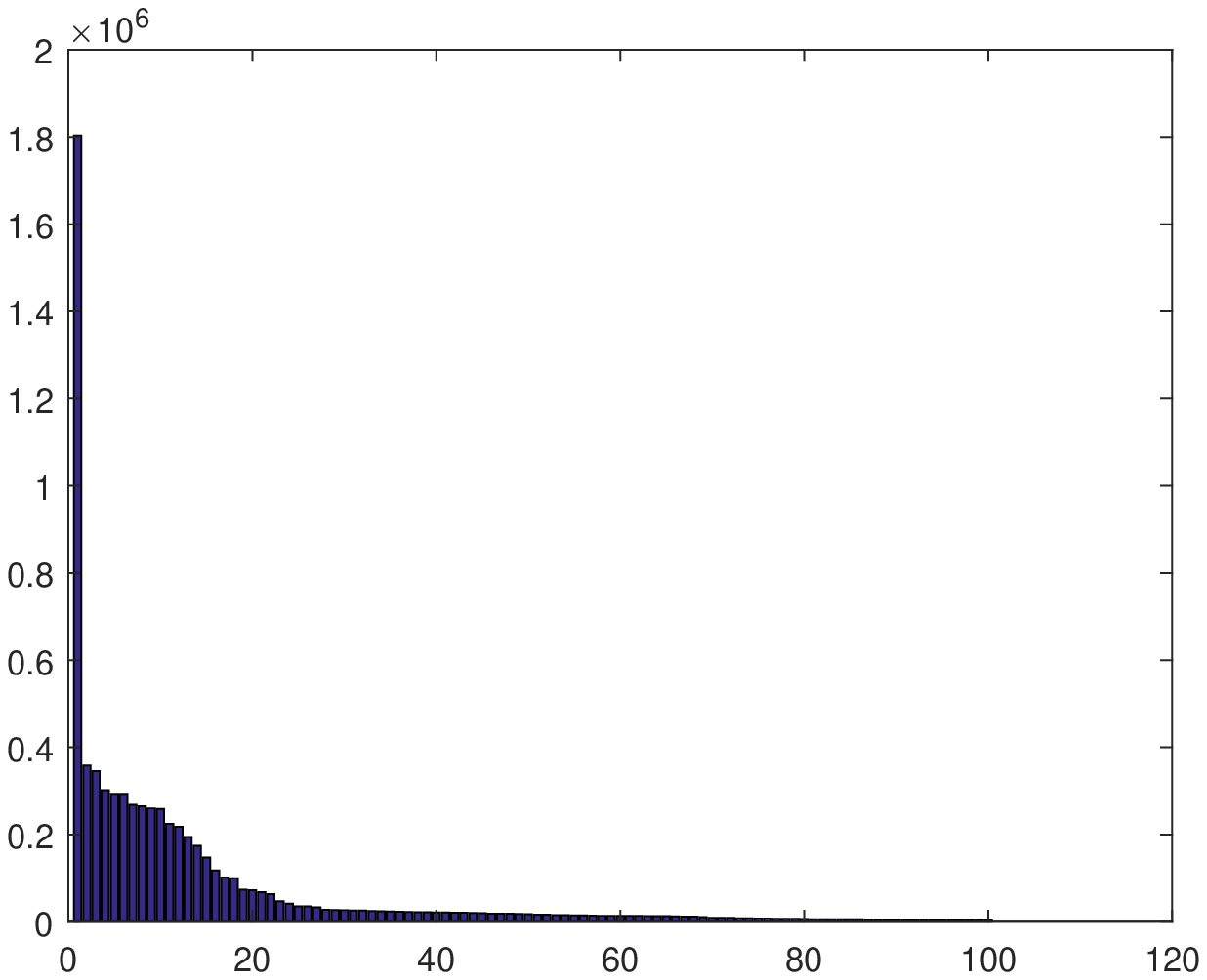}}
  \subfigure[Top $25$ ]{\includegraphics[width=0.32\linewidth]{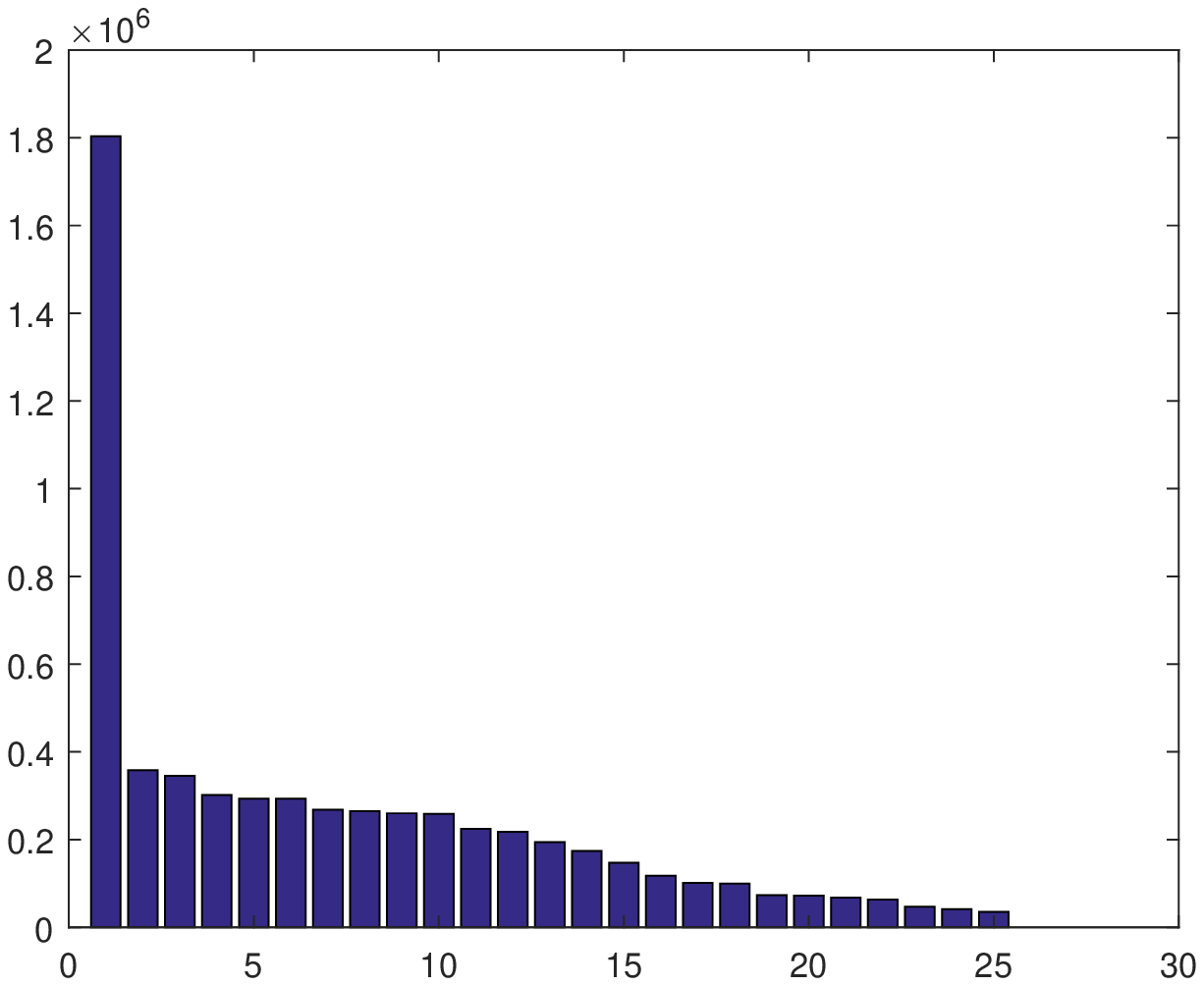}}
  \caption{The histogram of bacterial genera in the human gut. The plots (a)-(c) correspond to the entire histogram, the histogram for the $100$ most frequently encountered genera, and the histogram for the $25$ most frequently encountered genera, respectively.}\label{fig:guts}
\end{figure}
\tiny
\begin{table}[ht]
\centering
\begin{tabular}[t]{ccccccc}
\toprule
 RWC & RWC-S & PJW & Naive & GT & WY & HOSW\\
\midrule
  2364 & 2346 & 2250 & 2151 & 2151 & 2663 & 2673\\
\bottomrule
\end{tabular}
\vspace{0.1in}
\caption{Estimates for the number of different bacterial genera in the human gut obtained using four different methods. The numbers in brackets correspond to the parameter $c_0$.}\label{tab:guts}
\end{table}%
\normalsize
\bibliography{example_paper}
\bibliographystyle{IEEEtran}
%
\appendices
\section{Proof of Theorem~\ref{lma:interval_sq}}

     To prove the result, we need to show that $\forall \lambda\geq 6.5L$, $\frac{\partial }{\partial \lambda}g(\mathbf{a},\lambda)<0$. The derivative of the first term in $g$ equals
    \begin{align*}
        & \frac{\partial}{\partial \lambda}\frac{1}{k}\bigg(\sum_{l=0}^{L}e^{-\lambda}a_l^2\lambda^ll!\bigg) = \frac{1}{k}\bigg(\sum_{l=0}^{L}(\frac{l}{\lambda}-1)e^{-\lambda}a_l^2\lambda^ll!\bigg).
    \end{align*}
    Clearly, the right hand side in the above expression is negative for all $\lambda>L$. The second term of the derivative equals
    \begin{align*}
        & \frac{\partial}{\partial \lambda}\bigg(e^{-\lambda}P_L(\lambda,\mathbf{a})\bigg)^2 \\
        & = 2\bigg(e^{-\lambda}P_L(\lambda,\mathbf{a})\bigg)\bigg(-e^{-\lambda}P_L(\lambda,\mathbf{a}) + e^{-\lambda}\sum_{l=0}^{L}
        \frac{l}{\lambda} a_l\lambda^l\bigg)\\
        & = 2e^{-2\lambda}\bigg(P_L(\lambda,\mathbf{a})\bigg)\bigg(\sum_{l=0}^{L}
        (\frac{l}{\lambda}-1) a_l\lambda^l\bigg).
    \end{align*}
    To analyze the two terms of the derivative, we introduce the vectors $\mathbf{y},\mathbf{z},\mathbf{1}$ and the diagonal matrix $\mathbf{D}$ according to
    \begin{align*}
        &\mathbf{y} = (a_0\lambda^0,a_1\lambda^1,...,a_L\lambda^L)^T,\\
        &\mathbf{z} = ((\frac{0}{\lambda}-1),(\frac{1}{\lambda}-1),...,(\frac{L}{\lambda}-1))^T,\\
        &\mathbf{1} = (1,1,...,1)^T,\\
        & D_{ii} = (-1+\frac{i-1}{\lambda})\frac{(i-1)!}{\lambda^{(i-1)}}.
    \end{align*}
    Consequently, we have
    \begin{align*}
        & \frac{\partial}{\partial \lambda}\frac{1}{k}\bigg(\sum_{l=0}^{L}e^{-\lambda}a_l^2\lambda^ll!\bigg) = \frac{e^{-\lambda}}{k}\mathbf{y}^T\mathbf{D}\mathbf{y},\\
        & \frac{\partial}{\partial \lambda}\bigg(e^{-\lambda}P_L(\lambda,\mathbf{a})\bigg)^2 = 2e^{-2\lambda}\mathbf{y}^T\mathbf{1}\mathbf{z}^T\mathbf{y} \\
        & = e^{-2\lambda}\mathbf{y}^T(\mathbf{1}\mathbf{z}^T+\mathbf{z}\mathbf{1}^T)\mathbf{y}.
    \end{align*}
    Therefore,
    \begin{align*}
        \frac{\partial}{\partial \lambda}g(\mathbf{a},\lambda) = e^{-2\lambda}\mathbf{y}^T\left ( \frac{e^{\lambda}}{k}\mathbf{D} +(\mathbf{1}\mathbf{z}^T+\mathbf{z}\mathbf{1}^T)  \right ).\mathbf{y}
    \end{align*}
    To show that $\frac{\partial}{\partial \lambda}g(\mathbf{a},\lambda)<0$ for all polynomials of degree $L$ whenever $\lambda > CL$, we show that the matrix $\left ( \frac{e^{\lambda}}{k}\mathbf{D} +(\mathbf{1}\mathbf{z}^T+\mathbf{z}\mathbf{1}^T)  \right )$ is negative-definite whenever $\lambda > CL$, for some constant $C>0$. It suffices to show that the sum of the maximum eigenvalues of $\frac{e^{\lambda}}{k}\mathbf{D}$ and $(\mathbf{1}\mathbf{z}^T+\mathbf{z}\mathbf{1}^T)$ is negative, since $\frac{e^{\lambda}}{k}\mathbf{D}$ is a diagonal matrix. Thus, we turn our attention to determining the maximum eigenvalues of these two matrices. For $\frac{e^{\lambda}}{k}\mathbf{D}$, the maximum eigenvalue satisfies
    \begin{align*}
        \frac{e^{\lambda}}{k}\max_{i\in\{0,1,...,L\}} \left(-1+\frac{i}{\lambda}\right) \frac{i!}{\lambda^{i}} \leq -\frac{e^{\lambda}}{2k}\min_{i\in\{0,1,...,L\}}\frac{i!}{\lambda^{i}},
    \end{align*}
    since for $\lambda>2L$, one has $(-1+\frac{i}{\lambda})\leq -\frac{1}{2}$. When $\lambda>L$, it is clear that $\frac{i!}{\lambda^{i}}$ is decreasing in $i$, for $i\in\{0,1,...,L\}$, so that
    \begin{align*}
        \min_{i\in\{0,1,...,L\}}\frac{i!}{\lambda^{i}} = \frac{L!}{\lambda^L} \geq \left(\frac{L}{e\lambda}\right)^L.
    \end{align*}
    The last inequality is a consequence of Stirling's formula, which asserts that $n! \geq (\frac{n}{e})^n$. Combining the above expressions, we obtain
    \begin{align*}
        \frac{e^{\lambda}}{k}\max_{i\in\{0,1,...,L\}}\left(-1+\frac{i}{\lambda}\right)\frac{i!}{\lambda^{i}} \leq -\frac{e^{\lambda}}{2k}\left(\frac{L}{e\lambda}\right)^L.
    \end{align*}
    Next, we derive an upper bound on maximum eigenvalue of the second matrix. The $i,j$ entry of the matrix $(\mathbf{1}\mathbf{z}^T+\mathbf{z}\mathbf{1}^T)$ equals $\frac{i+j-2}{\lambda} - 2,$ and all these values are negative when $\lambda>L$. Moreover, it is clear that the matrix of interest has rank equal to $2$.
Therefore, the matrix has exactly two non-zero eigenvalues.

Let $\mathbf{A} = -(\mathbf{1}\mathbf{z}^T+\mathbf{z}\mathbf{1}^T)$. All entries of $\mathbf{A}$ are positive whenever $\lambda>L$. By Gershgorin's theorem, we can upper bound the maximum eigenvalues of the matrix $\mathbf{A}$ by its maximum row sum. It is obvious that the maximum row sum equals
    \begin{align*}
        2(L+1) - \frac{L(L+1)}{2\lambda}.
    \end{align*}
    Moreover, the trace of $\mathbf{A}$ equals
    \begin{align*}
        2(L+1) - \frac{L(L+1)}{\lambda}.
    \end{align*}
    This implies that the minimum eigenvalue of $\mathbf{A}$ is lower bounded by $- \frac{L(L+1)}{2\lambda},$ which directly implies that the maximum eigenvalue of $(\mathbf{1}\mathbf{z}^T+\mathbf{z}\mathbf{1}^T)$ is upper bounded by $\frac{L(L+1)}{2\lambda}$.

Summing up the two previously derived upper bounds gives
    \begin{align*}
        h(\lambda)\triangleq -\frac{e^{\lambda}}{2k}\left(\frac{L}{e\lambda}\right)^L+\frac{L(L+1)}{2\lambda},
    \end{align*}
    whenever $\lambda>2L$.
    Note that $h(\lambda)<0$ is equivalent to
    \begin{align}
        &\frac{L(L+1)}{2\lambda} < \frac{e^{\lambda}}{2k}\left(\frac{L}{e\lambda}\right)^L\nonumber\\
        & \Leftrightarrow \log(L) + \log(L+1) + \log(k) - L\log(L) + L \nonumber\\
        &< \lambda + \log(\lambda) - L\log(\lambda).\label{app:eq1}
    \end{align}
    The function $\lambda + \log(\lambda) - L\log(\lambda)$ is non-decreasing in $\lambda$ whenever $\lambda> L$ since
    \begin{align*}
        \frac{d}{d\lambda}(\lambda + \log(\lambda) - L\log(\lambda)) = 1-\frac{L-1}{\lambda}.
    \end{align*}
    By the definition of $L = \lfloor c_0\log(k) \rfloor$, we also have $\log(k)\leq \frac{L+1}{c_0}$. Using $\log(x+1)\leq x,$ which holds $\forall x\geq 1$. Hence $\forall \lambda > CL$ where $C>2$, the sufficient condition for \eqref{app:eq1} to hold is
    \begin{align*}
        &\log(L) + L + \frac{L+1}{c_0} - L\log(L) + L \\
        &< CL + \log(CL) - L\log(CL).
    \end{align*}
    Rearranging terms leads to
        \begin{align*}
        \left(C-\log(C) -2 -\frac{1}{c_0} \right)L + \log(C) > \frac{1}{c_0}.
    \end{align*}
    Sufficient conditions for the above inequality to hold are $\log(C)\geq \frac{1}{c_0}$ and $(C-\log(C) -2 -\frac{1}{c_0})>0$. The first condition implies $C\geq e^{\frac{1}{c_0}} = 6.0021,$ while the second condition holds with $C = 6.5$, for which the first condition is also satisfied. This completes the proof.

\section{Proof of Theorem~\ref{thm:discretization}}
The proof consists of two parts. In the first part, we establish the conditions for convergence, while in the second part, we determine the convergence rate.
\subsection{Proof of convergence}
We start by introducing the relevant terminology. Let $\Pi \subset \mathbb{R}^{L+1}$ be a closed set of parameters, and let $f$ be a continuous functional on $\Pi$. Assume that $B \subset \mathbb{R}$ is compact and that $g:\, \Pi\mapsto \mathcal{C}(B)$ is a continuous mapping from $\Pi$ into $\mathcal{C}(B),$ where $\mathcal{C}(B)$ is the space of continuous functions over $B$ equipped with the supremum norm $||\cdot||_{\infty}$. For each $D\subset B$ let
    \begin{align*}
        M(D) = \{\mathbf{c}\in \Pi|\,g(\mathbf{c},x)\leq 0, x\in D\}
    \end{align*}
    denote the set of feasible points of the optimization problem
    \begin{align*}
        \text{min }f(\mathbf{c})\text{ over }\mathbf{c}\in M(D).
    \end{align*}
    Assuming that $M(D)\neq \emptyset$, let
    \begin{align*}
        \mu(D) = \inf\{f(\mathbf{c})|\mathbf{c}\in M(D)\},
    \end{align*}
    and define the level set
    \begin{align*}
        \text{Level}(\mathbf{c}_0,D) = \{\mathbf{c}\in \Pi |\, f(\mathbf{c})\leq f(\mathbf{c}_0)\}\cap M(D).
    \end{align*}
We also make the following two assumptions:
\begin{assumption}[Fine grid]\label{assp:grid}
    Let $\mathbb{N}_0=\mathbb{N}\cup\{0\}$. There exists a sequence $\{B_i\}$ of compact subsets of $B$ with $B_{i}\subset B_{i+1},\, i\in\mathbb{N}_0,$ for which $\lim_{i\rightarrow \infty}h(B_{i},B) = 0,$ such that
    \begin{align*}
        h(B_i,B) = \sup_{x\in B}\inf_{y\in B_i}||x-y||.
    \end{align*}
\end{assumption}
\begin{assumption}[Bounded level set]\label{assp:blevelset}
    $M(B)$ is nonempty, and there exists a $\mathbf{c}_0\in M(B)$ such that the level set $\text{Level}(\bold{c}_0,B_0)$ is bounded
    and hence compact in $\mathbb{R}^{L+1}$.
\end{assumption}
\begin{theorem}[Convergence of the discretization method, Theorem 2.1 in~\cite{reemtsen1991discretization}]\label{thm:convergence}
    Under assumptions~\ref{assp:grid} and~\ref{assp:blevelset}, the solution of the discretized problem converges to the optimal solution.
    More formally, we have
    \begin{align*}
        &\mu(B_i)\leq \mu(B_{i+1}) \leq \mu(B),\forall t\in\mathbb{N}_0\\
        &\lim_{i\rightarrow \infty} \mu(B_i) = \mu(B).
    \end{align*}
    If $\bold{c}^{\ast}$ is the unique optimal solution of the original problem, and $\bold{c}^{\ast}_{i}$ is the optimal solution of the discretization relaxation with grid $B_i$, then
    \begin{align*}
        \lim_{i \rightarrow \infty} ||\bold{c}^{\ast}-\bold{c}^{\ast}_{i}||_2 = 0.
    \end{align*}
\end{theorem}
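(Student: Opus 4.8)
The plan is to derive all three conclusions from the nesting of the feasible sets, the compactness granted by Assumption~\ref{assp:blevelset}, and the continuity of $g$ into $\mathcal{C}(B)$. I would first establish \emph{monotonicity}: since $B_i\subset B_{i+1}\subset B$, enlarging the index set of active constraints shrinks the feasible region, so $M(B)\subset M(B_{i+1})\subset M(B_i)$, and taking the infimum of $f$ over these nested sets gives $\mu(B_i)\leq\mu(B_{i+1})\leq\mu(B)$. Hence $\{\mu(B_i)\}$ is non-decreasing and bounded above by $\mu(B)$, so it converges to some $\mu^{\ast}\leq\mu(B)$. I would also verify that each discretized problem attains its optimum: because $\mathbf{c}_0\in M(B)\subset M(B_i)$, the set $\mathrm{Level}(\mathbf{c}_0,B_i)$ is nonempty, and since $f$ is continuous while $M(B_i)$ is closed ($g$ continuous, $\Pi$ closed), it is a closed subset of the compact set $\mathrm{Level}(\mathbf{c}_0,B_0)$ supplied by Assumption~\ref{assp:blevelset}, hence itself compact. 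Therefore $\mu(B_i)$ is attained at some $\mathbf{c}_i^{\ast}\in\mathrm{Level}(\mathbf{c}_0,B_0)$, and all the discrete minimizers lie in one fixed compact set.

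The heart of the argument---and the step I expect to be the main obstacle---is upgrading $\mu^{\ast}\leq\mu(B)$ to equality, which amounts to showing that the limit of the discrete minimizers is feasible for the \emph{full} constraint set. By compactness I would pass to a subsequence $\mathbf{c}_{i_j}^{\ast}\to\bar{\mathbf{c}}$. To see $\bar{\mathbf{c}}\in M(B)$, fix $x\in B$ and invoke Assumption~\ref{assp:grid} to choose $y_{i_j}\in B_{i_j}$ with $\|x-y_{i_j}\|\leq h(B_{i_j},B)\to0$. Then
\begin{align*}
g(\bar{\mathbf{c}},x)
&=\big(g(\bar{\mathbf{c}},x)-g(\bar{\mathbf{c}},y_{i_j})\big)\\
&\quad+\big(g(\bar{\mathbf{c}},y_{i_j})-g(\mathbf{c}_{i_j}^{\ast},y_{i_j})\big)+g(\mathbf{c}_{i_j}^{\ast},y_{i_j}),
\end{align*}
where the first bracket tends to $0$ by continuity of $g(\bar{\mathbf{c}},\cdot)$ together with $y_{i_j}\to x$, the second tends to $0$ because continuity of $\mathbf{c}\mapsto g(\mathbf{c},\cdot)$ into $\mathcal{C}(B)$ gives $\|g(\mathbf{c}_{i_j}^{\ast},\cdot)-g(\bar{\mathbf{c}},\cdot)\|_{\infty}\to0$, and the third is nonpositive by feasibility of $\mathbf{c}_{i_j}^{\ast}$ on the grid $B_{i_j}$. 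Letting $j\to\infty$ yields $g(\bar{\mathbf{c}},x)\leq0$ for every $x\in B$, so $\bar{\mathbf{c}}\in M(B)$ and thus $f(\bar{\mathbf{c}})\geq\mu(B)$. Continuity of $f$ gives $f(\bar{\mathbf{c}})=\lim_j f(\mathbf{c}_{i_j}^{\ast})=\mu^{\ast}$, whence $\mu^{\ast}\geq\mu(B)$; combined with $\mu^{\ast}\leq\mu(B)$ this forces $\mu^{\ast}=\mu(B)$, i.e. $\lim_i\mu(B_i)=\mu(B)$. The delicacy is the \emph{simultaneous} passage of two limits---the grid point $y_{i_j}\to x$ and the parameter $\mathbf{c}_{i_j}^{\ast}\to\bar{\mathbf{c}}$---which is exactly why the uniform ($\mathcal{C}(B)$) continuity of $g$ and the compactness of $B$ are indispensable.

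Finally, for the convergence of the minimizers when $\mathbf{c}^{\ast}$ is unique, I would apply the same compactness reasoning to an arbitrary subsequence: every subsequence of $\{\mathbf{c}_i^{\ast}\}$ has a further subsequence converging to some $\bar{\mathbf{c}}$, which by the above lies in $M(B)$ and satisfies $f(\bar{\mathbf{c}})=\mu(B)$, hence is an optimal solution of the original problem; uniqueness forces $\bar{\mathbf{c}}=\mathbf{c}^{\ast}$. Since every subsequence thus admits a sub-subsequence with the common limit $\mathbf{c}^{\ast}$, the standard subsequence criterion in the metric space $\mathbb{R}^{L+1}$ yields $\mathbf{c}_i^{\ast}\to\mathbf{c}^{\ast}$, that is $\|\mathbf{c}^{\ast}-\mathbf{c}_i^{\ast}\|_2\to0$, completing the proof.
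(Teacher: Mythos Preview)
Your argument is correct and is in fact the standard proof of this discretization result. Note, however, that the paper does not supply its own proof of this theorem: it is quoted verbatim as Theorem~2.1 of Reemtsen (1991) and used as a black box, with the paper only checking that Assumptions~\ref{assp:grid} and~\ref{assp:blevelset} hold for the specific SIP~\eqref{sqlosseq5}. Your write-up is essentially the classical derivation one finds in Reemtsen's work: monotonicity from the nesting $M(B)\subset M(B_{i+1})\subset M(B_i)$, existence and uniform boundedness of the discrete minimizers from the compact level set in Assumption~\ref{assp:blevelset}, feasibility of any cluster point via the three-term splitting of $g(\bar{\mathbf c},x)$ exploiting the $\mathcal{C}(B)$-continuity of $g$, and the sub-subsequence criterion under uniqueness. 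There is nothing to compare against in the present paper, but as a self-contained proof of the cited result your proposal is sound.
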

It is straightforward to see that our chosen grid is arbitrary fine. Hence, we only need to prove that there exists a $\mathbf{c}_0$
such that the level set $\text{Level}(\mathbf{c}_0,D)$ is bounded.

Let $\mathbf{c} = (\mathbf{a};t)$ and note that in our setting, $f(\mathbf{c}) = t$. Rewrite $g(\mathbf{c},\lambda)$ in matrix form as
\begin{align*}
     g(\mathbf{c},\lambda) = \mathbf{a}^T\mathbf{M}(\lambda)\mathbf{a}+\mathbf{a}^T\Lambda\Lambda^T\mathbf{a}-t,
\end{align*}
where  $$\Lambda \triangleq e^{-\lambda}(\lambda^{0},\lambda^{1},...,\lambda^{L})^T.$$
Note that only $a_1,...a_L$ are allowed to vary since we fixed $a_0 = -1$. Obviously, $\Lambda\Lambda^T$ is positive semi-definite and the previously introduced $\mathbf{M}(\lambda)$ is positive definite for all
$\lambda>0$. Since the constraints on $g$ in~\eqref{sqlosseq5} are positive definite with respect to $a_1,...a_L,$ $g$ is coercive in $a_1,...a_L$. Furthermore, for any given $t$, the set of feasible coefficients $a_1,...a_L$ is bounded. Therefore, given a $t_0$, the level set $\text{Level}(\mathbf{c}_0,B_0)$ is bounded. This ensures that Assumption~\ref{assp:blevelset} holds for our optimization problem.

Next, we prove the uniqueness of the optimal solution $\mathbf{c}^\star$. Note that proving this result is
equivalent to proving the uniqueness of $\mathbf{a}^\star$. Hence, we once again refer to the original
minmax formulation of our problem,
\begin{equation}
    \inf_{\mathbf{a}:a_0=-1}\sup_{\lambda\in [\frac{n}{k},6.5L]}\mathbf{a}^T(\mathbf{M}(\lambda)+\mathbf{\Lambda}\mathbf{\Lambda}^T)\mathbf{a}\triangleq \inf_{\mathbf{a}:a_0=-1}\sup_{\lambda\in [\frac{n}{k},6.5L]} h_{\lambda}(\mathbf{a})
\end{equation}
Clearly, $\forall \lambda\in [\frac{n}{k},6.5L],$ the function $h_{\lambda}(\mathbf{a})$ is strictly convex since
$(\mathbf{M}(\lambda)+\mathbf{\Lambda}\mathbf{\Lambda}^T) \succ 0,\;\forall \lambda\in [\frac{n}{k},6.5L].$
Taking the supremum over $\lambda$ preserves strict convexity since $\forall \theta\in(0,1),$ one has
\begin{align*}
    &\sup_{\lambda\in [\frac{n}{k},6.5L]} h_{\lambda}(\theta\mathbf{x}+(1-\theta)\mathbf{y})\\
    &< \sup_{\lambda\in [\frac{n}{k},6.5L]} \theta h_{\lambda}(\mathbf{x})+(1-\theta)h_{\lambda}(\mathbf{y}) \\
    &\leq \sup_{\lambda\in [\frac{n}{k},6.5L]} \theta h_{\lambda}(\mathbf{x})+\sup_{\lambda'\in [\frac{n}{k},6.5L]}(1-\theta)h_{\lambda'}(\mathbf{y}).
\end{align*}
Hence $\sup_{\lambda\in [\frac{n}{k},6.5L]} h_{\lambda}(\mathbf{a})$ is strictly convex, which consequently implies the uniqueness of $\mathbf{a}^\star$ and hence $\mathbf{c}^\star$. This proves the convergence result of Theorem~\ref{thm:discretization}.

\subsection{Proof for the convergence rate}
In what follows, and for reasons of simplicity, we omit the constraint $a_0 = -1$ in the SIP formulation. The described proof only requires small modifications to accommodate the constraint $a_0 = -1$.

Recall that we used $B_d$ to denote the grid with grid spacing $d$. In order to use the results in~\cite{still2001discretization}, we require the convergence assumption below.

\begin{assumption}\label{assump:rate1}
    Let $\bar{\mathbf{c}}$ be a local minimizer of an SIP. There exists a local solution $\mathbf{c}_d$ of the discretized SIP with grid $B_d$ such that $$||\mathbf{c}_d-\bar{\mathbf{c}}|| \rightarrow 0.$$
\end{assumption}
This assumption is satisfied for the SIP of interest as shown in the first part of the proof.

\begin{assumption}\label{assump:rate2}
The following hold true:
    \begin{itemize}
        \item There is a neighborhood $\bar{U}$ of $\bar{\mathbf{c}}$ such that the function $\frac{\partial^2}{\partial \lambda^2}g(\mathbf{c},\lambda)$ is continuous on $\bar{U}\times B$.
        \item The set $B$ is compact, non-empty and explicitly given as the solution set of a set of inequalities, $B = \{\lambda\in \mathbb{R}| v_i(\lambda)\leq 0,i\in I\},$ where $I$ is a finite index set and $v_i\in C^2(B)$.
        \item For any $\bar{\lambda}\in B$, the vectors $\frac{\partial}{\partial \lambda}v_i(\bar{\lambda}),i\in \{i\in I| v_i(\bar{\lambda}) = 0\}$ are linearly independent.
    \end{itemize}
\end{assumption}
Recall that our objective is of the form
\begin{align*}
     g(\mathbf{c},\lambda) = \mathbf{a}^T\mathbf{M}(\lambda)\mathbf{a}+\mathbf{a}^T\Lambda\Lambda^T\mathbf{a}-t,
\end{align*}
where
\begin{align*}
    &\Lambda \triangleq e^{-\lambda}(\lambda^{0},\lambda^{1},...,\lambda^{L})^T,\;\mathbf{c} = (\mathbf{a};t),\\
    &\mathbf{M}(\lambda) \triangleq \frac{e^{-\lambda}}{k} Diag(\lambda^{0}0!,\lambda^{1}1!,...,\lambda^{L}L!).
\end{align*}
It is straightforward to see that the first condition in Assumption~\ref{assump:rate2} holds. For the second condition, recall that $B = [\frac{n}{k}, 6.5L]$. Hence, the second condition can be satisfied by choosing $I = \{1\}$, $v_1(\lambda) = (\lambda-\frac{n}{k})(\lambda-6.5L)$. Since we only have one variable $v_1$, it is also easy to see that the third condition is met.
\begin{assumption}\label{assump:rate3}
    The set $B$ satisfies Assumption~\ref{assump:rate2} and all the sets $B_d$ contain the boundary points $\frac{n}{k}, 6.5L$.
\end{assumption}
This assumption also clearly holds for the grid of choice. Note that it is crucial to include the boundary points for the proof in~\cite{still2001discretization} to be applicable.
\begin{assumption}\label{assump:rate4}
    $\nabla_{\mathbf{c}} g(\mathbf{c},\lambda)$ is continuous on $\bar{U}\times B$, where $\bar{U}$ is a neighborhood of $\bar{\mathbf{c}}$. Moreover, there exists a vector $\xi$ such that $$\nabla_{\mathbf{c}}g(\bar{\mathbf{c}},\lambda)^T\xi\leq -1,\;\forall \lambda\in B.$$
\end{assumption}
Note that $\nabla_{\mathbf{c}}g(\mathbf{c},\lambda) = [\nabla_{\mathbf{a}}g(\mathbf{c},\lambda);\nabla_{t}g(\mathbf{c},\lambda)]$ and
$$\nabla_{\mathbf{a}}g(\mathbf{c},\lambda) = 2(\mathbf{M}(\lambda)+\Lambda\Lambda^T)\mathbf{a}.$$
Also note that $\forall \lambda\in B$, $\mathbf{M}(\lambda)+\Lambda\Lambda^T$ is positive definite. Hence choosing $\xi$ to be colinear with and of the same direction as $[-\mathbf{a}^T \; 1]^T$, as well as of sufficiently large norm will allow us to satisfy the inequality
$$\nabla_{\mathbf{c}}g(\bar{\mathbf{c}},\lambda)^T\xi\leq -1,\;\forall \lambda\in B.$$
Hence, Assumption~\ref{assump:rate4} holds as well. The next results follow from the above assumptions and observations, and the results in~\cite{still2001discretization}.
\begin{lemma}[Corollary 1 in~\cite{still2001discretization}]\label{sup:discreterate1}
    Let $t_d$ be the optimal objective value of the discretized SIP used for support estimation with the grid $B_d,$ and let $t^\star$ be the optimal objective value for the original SIP. Since Assumptions~\ref{assump:rate1},\ref{assump:rate2},\ref{assump:rate3},\ref{assump:rate4} hold, then for some $c_3>0$ and $d$ sufficiently small, we have
    $$0\leq t^\star-t_d \leq c_3d^2.$$
\end{lemma}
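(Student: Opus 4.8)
The plan is to split the claim into its two inequalities. The lower bound $t_d\le t^\star$ is immediate and uses none of the assumptions: since the grid $B_d$ is a subset of $B=[\frac{n}{k},6.5L]$, every $\mathbf{c}$ feasible for the original SIP is also feasible for the discretized one, so $M(B)\subseteq M(B_d)$ and therefore $t_d=\mu(B_d)\le\mu(B)=t^\star$ (this is the monotonicity already recorded in Theorem~\ref{thm:convergence}). Hence the entire content of the lemma is the quantitative upper bound $t^\star-t_d\le c_3 d^2$, and for this the strategy is to invoke Corollary~1 of~\cite{still2001discretization} once its hypotheses are in place — which is precisely what Assumptions~\ref{assump:rate1}--\ref{assump:rate4} were set up to guarantee, and which were checked for our $g$ in the paragraphs above.

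To make the mechanism transparent I would recall the one-line reason the rate is quadratic rather than linear. Let $\mathbf{c}_d=(\mathbf{a}_d;t_d)$ be the (feasible, by construction) optimizer of the discretized SIP and let $\bar\lambda\in B$ maximize $\lambda\mapsto g(\mathbf{c}_d,\lambda)$. If $\bar\lambda\in\{\frac{n}{k},6.5L\}$, then $\bar\lambda\in B_d$ by Assumption~\ref{assump:rate3}, so $g(\mathbf{c}_d,\bar\lambda)\le 0$, meaning $\mathbf{c}_d$ is already feasible for the original SIP, which forces $t^\star\le t_d$ and hence equality. Otherwise $\bar\lambda$ is interior, so $\frac{\partial}{\partial\lambda}g(\mathbf{c}_d,\bar\lambda)=0$, and a second-order Taylor expansion in $\lambda$ — legitimate because $\frac{\partial^2}{\partial\lambda^2}g$ is continuous on $\bar U\times B$ by Assumption~\ref{assump:rate2} — at the nearest grid point $\lambda_d$, for which $|\lambda_d-\bar\lambda|\le d/2$, gives
\begin{align*}
  \max_{\lambda\in B} g(\mathbf{c}_d,\lambda) &= g(\mathbf{c}_d,\bar\lambda) \le g(\mathbf{c}_d,\lambda_d) + \tfrac12\Big(\sup_{\bar U\times B}\big|\tfrac{\partial^2 g}{\partial\lambda^2}\big|\Big)\Big(\tfrac d2\Big)^2 \\
  &\le C_1 d^2,
\end{align*}
since $g(\mathbf{c}_d,\lambda_d)\le 0$. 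So the discretized optimizer violates the original semi-infinite constraint by at most $C_1 d^2$.

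It then remains to convert this $O(d^2)$ constraint violation into an $O(d^2)$ bound on the objective gap, and here the Mangasarian--Fromovitz-type condition of Assumption~\ref{assump:rate4} does the work: picking $\xi$ with $\nabla_{\mathbf{c}}g(\bar{\mathbf{c}},\lambda)^T\xi\le -1$ uniformly in $\lambda$ (for instance, $\xi$ colinear with $[-\mathbf{a}_d^T\ 1]^T$ of sufficiently large norm, as already observed), the perturbed point $\mathbf{c}_d+C_1 d^2\,\xi$ is feasible for the original SIP to first order, and a continuity/implicit-function argument — this is the step carried out in~\cite{still2001discretization} — produces a genuinely feasible point within $O(d^2)$ of $\mathbf{c}_d$, whose objective value is therefore $t_d+O(d^2)$. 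Combining with $t_d\le t^\star$ yields $0\le t^\star-t_d\le c_3 d^2$.

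The main obstacle is not any single inequality but keeping all the quantitative constants uniform on one fixed neighborhood $\bar U$ of the optimizer, so that the second-order Taylor remainder and the feasibility perturbation can be controlled simultaneously; this is exactly what forces the somewhat elaborate Assumptions~\ref{assump:rate1}--\ref{assump:rate4}, and once they are verified — as was done above, using that our $g$ is a finite sum of terms of the form (polynomial in $\lambda$)$\,\times e^{-\lambda}$ or $\times e^{-2\lambda}$, hence $C^\infty$ in $\lambda$ with the two endpoints expressible via the single smooth constraint $v_1(\lambda)=(\lambda-\frac{n}{k})(\lambda-6.5L)$ — the statement is a direct application of~\cite{still2001discretization}.
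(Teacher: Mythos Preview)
Your proposal is correct. Note that the paper does not actually give a proof of this lemma: it is stated as a direct citation of Corollary~1 in~\cite{still2001discretization}, with the paper's contribution being the verification of Assumptions~\ref{assump:rate1}--\ref{assump:rate4} in the paragraphs preceding the statement, after which the lemma is invoked as a black box. Your write-up does the same --- verify the hypotheses and appeal to~\cite{still2001discretization} --- but additionally sketches the underlying mechanism (the $O(d^2)$ Taylor bound at an interior maximizer where $\partial_\lambda g=0$, followed by the Mangasarian--Fromovitz direction $\xi$ to restore feasibility with an $O(d^2)$ perturbation of the objective). That sketch is accurate and matches the argument in~\cite{still2001discretization}; it simply goes further than the paper, which treats the result as an external citation.
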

Consequently, $t_d\rightarrow t^\star$ with a convergence rate of $O(d^2)$.
\begin{lemma}[Theorem 2 in~\cite{still2001discretization}]
\label{sup:discreterate2}
    Assume that all assumptions in Lemma~\ref{sup:discreterate1} hold. If there exists a constant $c_4>0$ such that $$t-\bar{t} \geq c_4||\mathbf{c}-\bar{\mathbf{c}}||,\;\forall \mathbf{c}\in M(B)\cap \bar{U},$$ then for sufficiently small $d$ and $\sigma>0$ we have
    $$||\mathbf{c}_d-\bar{\mathbf{c}}||\leq \sigma d^2.$$
\end{lemma}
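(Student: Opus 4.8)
The plan is to deduce the estimate from Theorem~2 of~\cite{still2001discretization}, whose hypotheses are precisely Assumptions~\ref{assump:rate1}--\ref{assump:rate4}, already verified above for our SIP, together with the strict-minimum-of-order-one growth condition $t-\bar t\ge c_4||\mathbf{c}-\bar{\mathbf{c}}||$ assumed in the statement. Recall that, by the strict convexity of $\sup_{\lambda}h_\lambda(\mathbf{a})$ established in the convergence part of the proof of Theorem~\ref{thm:discretization}, the local minimizer $\bar{\mathbf{c}}$ coincides with the global one, so $\bar t=t^\star$; and Lemma~\ref{sup:discreterate1} already gives $0\le t^\star-t_d\le c_3 d^2$.

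The quantitative core is a two-step reduction to this objective-value rate. \emph{Step 1: the continuum-constraint violation of $\mathbf{c}_d$ is $O(d^2)$.} Since $\partial_\lambda^2 g(\mathbf{c},\lambda)$ is continuous, hence bounded, on $\bar U\times B$ (Assumption~\ref{assump:rate2}), and the grid $B_d$ has spacing $d$ and contains the endpoints $\tfrac{n}{k},6.5L$ (Assumption~\ref{assump:rate3}), on each cell $[\lambda_j,\lambda_{j+1}]$ the map $\lambda\mapsto g(\mathbf{c}_d,\lambda)$ is a $C^2$ function that is $\le 0$ at both endpoints, so it exceeds zero by at most $\tfrac18\,||\partial_\lambda^2 g||_\infty\,d^2$; hence $\sup_{\lambda\in B}g(\mathbf{c}_d,\lambda)\le \kappa d^2$ for some $\kappa>0$ and all small $d$. \emph{Step 2: restore feasibility at cost $O(d^2)$.} Let $\xi$ be the Slater direction of Assumption~\ref{assump:rate4}, with $\nabla_{\mathbf{c}}g(\bar{\mathbf{c}},\lambda)^T\xi\le -1$ on $B$; by continuity of $\nabla_{\mathbf{c}}g$ one has $\nabla_{\mathbf{c}}g(\mathbf{c},\lambda)^T\xi\le -\tfrac12$ for $\mathbf{c}$ in a neighborhood of $\bar{\mathbf{c}}$ and all $\lambda\in B$. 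Put $\tilde{\mathbf{c}}_d=\mathbf{c}_d+C\kappa d^2\,\xi$ with $C$ a sufficiently large absolute constant; a first-order expansion in $\mathbf{c}$ gives $g(\tilde{\mathbf{c}}_d,\lambda)\le g(\mathbf{c}_d,\lambda)-\tfrac{C}{2}\kappa d^2+O(d^4)\le 0$ uniformly in $\lambda\in B$ for $d$ small. Since $\mathbf{c}_d\to\bar{\mathbf{c}}$ (Assumption~\ref{assump:rate1}) and $||\tilde{\mathbf{c}}_d-\mathbf{c}_d||=O(d^2)$, we get $\tilde{\mathbf{c}}_d\in M(B)\cap\bar U$ for $d$ small; and because $t$ is one of the coordinates of $\mathbf{c}$, $\tilde t_d=t_d+O(d^2)\le \bar t+O(d^2)$.

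Now the growth condition applies to the genuinely feasible $\tilde{\mathbf{c}}_d$, giving $c_4||\tilde{\mathbf{c}}_d-\bar{\mathbf{c}}||\le \tilde t_d-\bar t$; combined with $\tilde t_d-\bar t\le O(d^2)$ from Step~2 this yields $||\tilde{\mathbf{c}}_d-\bar{\mathbf{c}}||=O(d^2)$, and the triangle inequality $||\mathbf{c}_d-\bar{\mathbf{c}}||\le ||\mathbf{c}_d-\tilde{\mathbf{c}}_d||+||\tilde{\mathbf{c}}_d-\bar{\mathbf{c}}||=O(d^2)$ gives the claim, with $\sigma$ absorbing $c_3,c_4,\kappa,||\xi||$ and $||\partial_\lambda^2 g||_\infty$.

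The main obstacle is precisely that the discretized optimizer $\mathbf{c}_d$ need not be feasible for the continuum SIP, so the growth condition cannot be invoked for it directly; everything hinges on the feasibility-restoration Step~2, which in turn requires both a uniform Slater direction $\xi$ (Assumption~\ref{assump:rate4}) and the $O(d^2)$ — rather than merely $O(d)$ — violation bound of Step~1, and the latter genuinely needs the $C^2$ smoothness of $g$ in $\lambda$ and the inclusion of the boundary points of $B$ in every grid. As these are exactly the hypotheses checked in the preceding subsection, it suffices to apply Theorem~2 of~\cite{still2001discretization} to conclude.
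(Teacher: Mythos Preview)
Your proposal is correct, and in fact it goes beyond what the paper does. In the paper this lemma is not proved at all: it is simply quoted as Theorem~2 of~\cite{still2001discretization} and then invoked as a black box to finish the proof of Theorem~\ref{thm:discretization}. The paper's contribution is only to verify that Assumptions~\ref{assump:rate1}--\ref{assump:rate4} hold for the particular SIP~\eqref{sqlosseq5}, which is done in the preceding paragraphs; the lemma itself is taken on credit.

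What you have written is a faithful sketch of the actual argument from~\cite{still2001discretization}: the $O(d^2)$ interpolation bound on the continuum-constraint violation (using $C^2$ regularity in $\lambda$ and inclusion of the endpoints in the grid), followed by the Slater-direction perturbation to restore feasibility at $O(d^2)$ cost, and finally the growth condition applied to the repaired point. Each step is sound as you have it. So your approach is not different from the paper's---it is strictly more detailed, supplying the mechanism behind the cited theorem rather than merely invoking it.
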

This result implies that if $\bar{\mathbf{c}}$ is also a strict minimum of order one, then the solution of the discretized SIP converges to that of the the original SIP with rate $O(d^2)$. Combining these results completes the proof.

\section{Theoretical results supporting Remark~\ref{remark:pi_interval}}
The result described in the main text follows from Theorem 6.2 in \cite{lubinsky2007survey}, originally proved in~\cite{mhaskar1984extremal,mhaskar1985does} and~\cite{rakhmanov1984asymptotic}.
\begin{theorem}[Theorem 6.2 from~\cite{lubinsky2007survey}]\label{thm:MRS}
    Let $W(x) = \exp(-Q(x))$ be a weight function, where $Q:\mathbb{R} \mapsto [0,\infty) $ is even, convex, diverging for $x \to \infty$, and such that
    \begin{equation*}
        0 = Q(0) < Q(x), \forall x \neq 0.
    \end{equation*}
    Then, for any polynomial $P(x)$ of degree $\leq L$, not identical to zero, one has
    \begin{align*}
        &\sup_{x\in \mathbb{R}}|P(x)W(x)| = \sup_{x\in [-M_L,M_L]}|P(x)W(x)|,\\
        &\sup_{x\in \mathbb{R}\setminus [-M_L,M_L]}|P(x)W(x)| < \sup_{x\in [-M_L,M_L]}|P(x)W(x)|.
    \end{align*}
    Here, $M_L$ stands for the \textit{Mhaskar-Rakhmanov-Saff} (MSF) number, which is the smallest positive root of the integral equation
    \begin{equation}\label{app:MRSnumber}
        L = \frac{2}{\pi}\int_{0}^{1}\frac{M_LtQ'(M_Lt)}{\sqrt{1-t^2}}dt.
    \end{equation}
\end{theorem}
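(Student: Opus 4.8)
The plan is to prove this as the classical \emph{infinite--finite range inequality} of Mhaskar, Rakhmanov and Saff, using weighted potential theory. First I would introduce the external field $q(x) = Q(x)/L$ on $\mathbb{R}$ together with its weighted equilibrium measure $\mu_L$, that is, the unique compactly supported probability measure minimizing the energy $\iint \log\frac{1}{|s-t|}\,d\mu(s)\,d\mu(t) + 2\int q\,d\mu$. From the cited works I would take as given three structural facts: $\mu_L$ exists, is unique, and is absolutely continuous; because $Q$ is even and convex --- which together with $Q(0)=0<Q(x)$ forces $Q$ to grow at least linearly, so that $|P|W\to 0$ at infinity and the relevant suprema are attained --- the support of $\mu_L$ equals the symmetric interval $[-M_L,M_L]$ with $M_L$ the smallest positive root of~\eqref{app:MRSnumber}; and the Frostman variational conditions hold, i.e., writing $U^{\mu_L}(x) = \int \log\frac{1}{|x-t|}\,d\mu_L(t)$, there is a constant $F_L$ with $U^{\mu_L}(x)+q(x) = F_L$ on $[-M_L,M_L]$ and $U^{\mu_L}(x)+q(x) \geq F_L$ for all $x\in\mathbb{R}$.

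Next, fix a polynomial $P$ of degree at most $L$, not identically zero, set $u(x) = \frac{1}{L}\log\big(|P(x)|W(x)\big) = \frac{1}{L}\log|P(x)| - q(x)$, and define the auxiliary function $\Phi(z) = \frac{1}{L}\log|P(z)| + U^{\mu_L}(z) - F_L$ on the Riemann sphere. The verifications are then routine: $\Phi$ is subharmonic on $\overline{\mathbb{C}}\setminus[-M_L,M_L]$, being the sum of the globally subharmonic $\frac{1}{L}\log|P|$ and a function harmonic off $[-M_L,M_L]$, and it is bounded above near $\infty$ (indeed $\Phi(z)\to\frac{1}{L}\log|c|-F_L$ when $\deg P = L$ with leading coefficient $c$, and $\Phi(z)\to-\infty$ when $\deg P < L$); moreover $\Phi = u$ on $[-M_L,M_L]$ by the equality in the Frostman condition, and $\Phi \geq u$ on $\mathbb{R}\setminus[-M_L,M_L]$ by the inequality. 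Applying the maximum principle to $\Phi$ on the unbounded domain $\overline{\mathbb{C}}\setminus[-M_L,M_L]$, whose boundary is $[-M_L,M_L]$, gives $\Phi \leq \max_{[-M_L,M_L]} u$ throughout, hence $u(x)\leq \max_{[-M_L,M_L]} u$ for every real $x$. This is precisely the first claimed identity $\sup_{x\in\mathbb{R}}|P(x)W(x)| = \sup_{x\in[-M_L,M_L]}|P(x)W(x)|$.

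For the strict inequality I would argue by contradiction. If $|P(x^{\ast})|W(x^{\ast}) = \sup_{[-M_L,M_L]}|PW|$ for some $x^{\ast}$ with $|x^{\ast}|>M_L$ (the case $x^{\ast}=\infty$ being the same), then $\Phi$ attains its maximum over the connected domain $\overline{\mathbb{C}}\setminus[-M_L,M_L]$ at an interior point, so by the strong maximum principle $\Phi$ is constant there; when $\deg P < L$ this already contradicts $\Phi(z)\to-\infty$, and when $\deg P = L$, letting $z\to\infty$ pins the constant and yields $\frac{1}{L}\log|P(z)/c| = -U^{\mu_L}(z)$ off $[-M_L,M_L]$, i.e., $U^{\nu}\equiv U^{\mu_L}$ on the complement of the two supports, where $\nu$ is the normalized zero-counting measure of $P$. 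By the unicity theorem for logarithmic potentials this forces $\nu = \mu_L$, which is impossible since $\nu$ is finitely supported while $\mu_L$ is absolutely continuous. The main obstacle in this plan is the structural input that the support of the equilibrium measure is exactly the symmetric interval $[-M_L,M_L]$ with endpoint governed by~\eqref{app:MRSnumber}: this is where convexity of $Q$ is indispensable (it is what rules out the support being a union of several disjoint intervals), and establishing it amounts to solving the associated singular integral (airfoil) equation for the equilibrium density and reading off its solvability condition. I would invoke~\cite{mhaskar1984extremal,mhaskar1985does,rakhmanov1984asymptotic} for that step and carry out only the subharmonic comparison above in detail.
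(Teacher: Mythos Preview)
Your proposal is a correct and essentially complete sketch of the classical potential-theoretic proof of the Mhaskar--Rakhmanov--Saff infinite--finite range inequality: the equilibrium measure with external field $q=Q/L$, the Frostman variational conditions, the subharmonic comparison function $\Phi$, and the maximum/strong-maximum principle argument are exactly the standard machinery, and your handling of the strict inequality (splitting on $\deg P<L$ versus $\deg P=L$ and invoking the unicity theorem for logarithmic potentials) is sound.

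However, there is nothing to compare against: the paper does not prove this theorem at all. Theorem~\ref{thm:MRS} is merely quoted from~\cite{lubinsky2007survey} (with attribution to~\cite{mhaskar1984extremal,mhaskar1985does,rakhmanov1984asymptotic}) as an off-the-shelf result, and the paper only uses it to solve~\eqref{app:MRSnumber} for the specific weight $e^{-|x|}$, obtaining $M_L=\frac{\pi}{2}L$ in support of Remark~\ref{remark:pi_interval}. So you have supplied substantially more than the paper does; your write-up would serve as a self-contained appendix proof where the paper is content with a citation.
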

In our setting, the weight equals $\exp(-x)$. Solving~\eqref{app:MRSnumber} gives us an MSF number equal to $M_L = \frac{\pi}{2}L$. Thus, we can restrict our optimization interval to $[\frac{n}{k},\frac{\pi}{2}L +\frac{n}{k}]$. If there is no regularization term, the optimal interval reduces to $[\frac{n}{k},\frac{\pi}{2}L +\frac{n}{k}]$.

\section{Construction of the RWC-S estimator}
We introduce the optimization problem needed for minimizing the risk $\E \left( \frac{S-\hat{S}}{S}\right)^2$. Poissonization arguments once again establish that
\begin{align*}
    &\mathbb{E}\left(\frac{S-\hat{S}}{S}\right)^2 = \frac{1}{S^2}\bigg\{\sum_{i:\lambda_i>0}\bigg(\sum_{l=0}^{L}e^{-\lambda_i}a_l^2\lambda_i^ll!\bigg)\\
    &+\sum_{i\neq j:\lambda_i\lambda_j>0}\bigg(e^{-\lambda_i}P_L(\lambda_i,\mathbf{a})\bigg)\bigg(e^{-\lambda_j}P_L(\lambda_j,\mathbf{a})\bigg)\bigg\}.
\end{align*}
Taking the supremum over $D_k$, one can further upper bound the risk as
\begin{align}
    &\leq \sup_{\mathbf{\lambda}:\lambda_i\in [\frac{n}{k}, n]} \frac{1}{S^2}\bigg\{\sum_{i:\lambda_i>0}\bigg(\sum_{l=0}^{L}e^{-\lambda_i}a_l^2\lambda_i^ll!\bigg)\\
    &+\sum_{i\neq j:\lambda_i\lambda_j>0}\bigg(e^{-\lambda_i}P_L(\lambda_i,\bold{a})\bigg)\bigg(e^{-\lambda_j}P_L(\lambda_j,\mathbf{a})\bigg)\bigg\}\nonumber\\
    & \leq \sup_{\lambda\in [\frac{n}{k}, n]}\bigg\{\frac{1}{S}\bigg(\sum_{l=0}^{L}e^{-\lambda}a_l^2\lambda^ll!\bigg)+\bigg(e^{-\lambda}P_L(\lambda,\mathbf{a})\bigg)^2\bigg\}\nonumber\\
    &\leq \sup_{\lambda\in [\frac{n}{k}, n]}\bigg\{\frac{1}{\hat{S}_{c}}\bigg(\sum_{l=0}^{L}e^{-\lambda}a_l^2\lambda^ll!\bigg)+\bigg(e^{-\lambda}P_L(\lambda,\mathbf{a})\bigg)^2\bigg\} \label{ratiolosseq3},
\end{align}
where the last inequality is due to the fact that $\hat{S}_{c}\leq S$. Note that the only difference between~\eqref{ratiolosseq3} and~\eqref{sqlosseq1}
is in terms of changing $1/k$ to $1/\hat{S}_c$ in the first term. In view of Theorem~\ref{lma:interval_sq}, \eqref{ratiolosseq3} is optimized by the solution of the following problem:

\begin{equation}\label{ratiolosseq2}
    \begin{split}
        &\min_{t,a_1,...,a_L} t \;\;\;subject\;to\\
        & \bigg\{\frac{1}{\hat{S}_{c}}\bigg(\sum_{l=0}^{L}e^{-\lambda}a_l^2\lambda^ll!\bigg)+\bigg(e^{-\lambda}P_L(\lambda,\mathbf{a})\bigg)^2\bigg\}\leq t\\
        & \forall \lambda\in \text{Grid}([\frac{n}{k}, 6.5L],s), \text{with }a_0=-1.
    \end{split}
\end{equation}
\ifCLASSOPTIONcaptionsoff
  \newpage
\fi
\end{document}